\documentclass[11pt]{article}

\usepackage{arxiv-style}
\usepackage{xurl}
\usepackage{amsmath}
\usepackage{amsmath}
\usepackage{amsfonts}
\usepackage{amssymb}
\usepackage{amsthm}
\usepackage{algorithm}
\usepackage{algpseudocode}
\usepackage{bm, bbm}
\usepackage{dsfont}
\usepackage{booktabs}           %
\usepackage{pifont}     %
\usepackage{multirow, multicol}
\usepackage{svg}
\usepackage{xcolor} 
\usepackage{colortbl}
\usepackage{subcaption}
\usepackage{centernot}
\usepackage{lipsum}
\usepackage{extpfeil} %

\definecolor{darkblue}{RGB}{0,0,139}
\definecolor{red}{RGB}{210,35,35}
\usepackage[
  colorlinks=true,
  linkcolor=darkblue,
  citecolor=darkblue,
  urlcolor=darkblue,
  filecolor=darkblue
]{hyperref}

\usepackage{aliascnt}

\newtheorem{theorem}{Theorem}

\newaliascnt{lemma}{theorem}
\newtheorem{lemma}[lemma]{Lemma}
\aliascntresetthe{lemma}

\newaliascnt{proposition}{theorem}

\aliascntresetthe{proposition}

\newaliascnt{corollary}{theorem}

\aliascntresetthe{corollary}

\newaliascnt{remark}{theorem}

\aliascntresetthe{remark}

\theoremstyle{definition}

\newaliascnt{definition}{theorem}

\aliascntresetthe{definition}

\newaliascnt{example}{theorem}

\aliascntresetthe{example}

\newtheorem{assumption}{Assumption}

\usepackage{cleveref}

\crefname{theorem}{theorem}{theorems}
\Crefname{theorem}{Theorem}{Theorems}

\crefname{lemma}{lemma}{lemmas}
\Crefname{lemma}{Lemma}{Lemmas}

\crefname{proposition}{proposition}{propositions}
\Crefname{proposition}{Proposition}{Propositions}

\crefname{corollary}{corollary}{corollaries}
\Crefname{corollary}{Corollary}{Corollaries}

\crefname{definition}{definition}{definitions}
\Crefname{definition}{Definition}{Definitions}

\crefname{assumption}{assumption}{assumptions}
\Crefname{assumption}{Assumption}{Assumptions}

\crefname{appendix}{appendix}{appendices}
\Crefname{appendix}{Appendix}{Appendices}

\newcommand{\yes}{\ding{51}}
\newcommand{\no}{\ding{55}}

\newcommand{\Ind}{\mathds{1}}

\DeclareMathOperator{\Regret}{Regret}
\DeclareMathOperator{\Violation}{Violation}

\DeclareMathOperator{\spn}{sp}

\DeclareMathOperator*{\argmax}{arg\,max}
\DeclareMathOperator*{\argmin}{arg\,min}

\newcommand{\bphi}{\bm{\phi}}
\newcommand{\bmu}{\bm{\mu}}
\newcommand{\btheta}{\bm{\theta}}

\newcommand{\qstq}{\quad\mathrm{s.t.}\quad}

\newcommand{\bigO}{\mathcal{O}}
\newcommand{\tbigO}{\widetilde{\mathcal{O}}}
\newcommand{\RegAug}{\Regret_{\text{aug}}}

\newcommand{\calA}{{\mathcal{A}}}

\newcommand{\calF}{{\mathcal{F}}}

\newcommand{\calM}{{\mathcal{M}}}

\newcommand{\calS}{{\mathcal{S}}}

\newcommand{\calZ}{{\mathcal{Z}}}

\newcommand{\bbE}{\mathbb{E}}

\newcommand{\bbR}{\mathbb{R}}

\newcommand{\bbZ}{\mathbb{Z}}

\title{Learning Infinite-Horizon Average-Reward\\CMDPs via State Augmentation}
\author{
Kihyun Yu$^{1,\star}$ \quad Seoungbin Bae$^{1,\star}$ \quad Dabeen Lee$^{2}$\\[0.3em]
$^{1}$ KAIST \quad 
$^{2}$ Seoul National University\\
\texttt{\{khyu99, sbbae31\}@kaist.ac.kr, dabeenl@snu.ac.kr}
}

\date{}

\begin{document}
\maketitle

\begin{abstract}
We study infinite-horizon average-reward constrained Markov decision processes (CMDPs) under the weakly communicating assumption. Existing high-probability guarantees for this setting either require computationally inefficient algorithms or have suboptimal dependence on the number of interactions $T$. We propose, to the best of our knowledge, the first computationally efficient algorithm that achieves $\widetilde{\mathcal{O}}(\sqrt{T})$ regret and cumulative constraint violation with high probability in the tabular setting. The $\sqrt{T}$ dependence is optimal up to logarithmic factors. Our approach incorporates cumulative constraint violation into the state and defines a reshaped reward through differences of a Huber potential. The added state determines the penalty on further violations while the reward function remains fixed on the augmented state space. Since the added state has known deterministic dynamics, only the original transition kernel needs to be estimated. The bounded slope of the Huber potential keeps the per-step reward bounded, and the potential differences telescope to relate the reshaped return to the original cumulative reward and the terminal potential. These properties allow us to apply finite-horizon approximation and optimistic value iteration with clipping, as used in unconstrained average-reward MDPs, without worsening the regret rate in $T$.
\end{abstract}

\def\thefootnote{$\star$}\footnotetext{These authors contributed equally to this work}\def\thefootnote{\arabic{footnote}}
\section{Introduction}
\label{sec:introduction}

In reinforcement learning (RL), actions that improve task performance may also increase resource consumption or the risk of undesirable outcomes~\citep{garcia2015comprehensive}. For example, a robotic controller must account for safety costs while learning to perform its task~\citep{robotics-achiam2017constrained}. Constrained Markov decision processes (CMDPs) model these requirements by maximizing expected reward subject to prescribed bounds on expected costs~\citep{altman2021constrained}. When the transition dynamics are unknown, the learner must acquire information about the effects of its actions on both reward and constraint satisfaction. Its performance is therefore measured by regret relative to an optimal feasible policy and by the constraint violation accumulated during learning.

Learning in CMDPs has been studied extensively in the finite-horizon~\citep{efroni2020exploration,qiu2020upper,liu2021learning,muller2024truly} and discounted settings~\citep{chen2021primal,xu2021crpo,ding2023last}. The infinite-horizon average-reward setting remains less understood, although it directly describes continuing tasks whose performance is evaluated over long periods~\citep{mahadevan1996average,sutton2018reinforcement}. In this setting, the agent interacts with the environment without resets, and both reward and constraint costs are evaluated through their long-run averages. For example, wireless transmission control can be formulated as minimizing average power consumption subject to a bound on average delay~\citep{djonin2007qlearning}. The average-reward formulation captures these requirements without choosing a terminal horizon or a discount factor. This distinction also matters for feasibility: a bound on discounted cost need not enforce the corresponding bound on long-run average cost~\citep{agnihotri2024acpo}.

We consider weakly communicating CMDPs, which allow stationary policies with multiple recurrent classes and are more general than ergodic and unichain CMDPs~\citep{puterman1994markov}. Learning in this setting inherits the difficulty of controlling bias spans in average-reward MDPs, since transition estimation errors depend on the variation of value functions across states. Algorithms for average-reward MDPs address this difficulty through span regularization or constraints~\citep{bartlett2009regal,fruit2018efficient}, or through value function clipping~\citep{hong2025reinforcement}.

Constraints on average costs, however, introduce further obstacles, making the analysis of weakly communicating CMDPs more challenging. An optimal constrained policy need not be greedy with respect to its reward action-value function, because a greedy policy may violate the constraint~\citep{ghosh2023achieving}. Consequently, the Bellman optimality arguments used to control spans in unconstrained optimistic planning do not carry over directly~\citep{chen2022learning}. A common approach is to incorporate the constraint into the reward through a dual multiplier. However, updating this multiplier changes the reward used for planning and hence the value functions. Controlling this additional variation complicates the analysis of existing primal-dual algorithms~\citep{ghosh2023achieving,yu2026learning}. For example, in the finite-horizon approximation of \citet{yu2026learning}, updating the multiplier once per episode of length $H$ yields a dual-regret bound of order $\sqrt{HT}$ after balancing the step-size terms. This term exceeds order $\sqrt{T}$ when $H$ grows with $T$; see Appendix~\ref{app:discussion}.

In fact, existing results for weakly communicating CMDPs either require computationally inefficient algorithms or have suboptimal bounds on regret and constraint violation. In the tabular setting, \citet{chen2022learning} solve a linear program under a finite-horizon approximation and achieves $\tbigO(T^{2/3})$ bounds on both quantities after $T$ interactions. They give another algorithm that improves the bounds to $\tbigO(\sqrt{T})$ by adding span constraints, but these constraints make the planning problem nonconvex. For CMDPs with linear function approximation, \citet{ghosh2023achieving} obtain $\tbigO(\sqrt{T})$ bounds with a computationally inefficient algorithm, and $\tbigO(T^{3/4})$ bounds with an efficient primal-dual algorithm. More recently, \citet{yu2026learning} establish strong duality for weakly communicating CMDPs and improve the efficient bounds in the linear setting to $\tbigO(T^{2/3})$.

Efficient $\tbigO(\sqrt{T})$ regret guarantees exist under stronger assumptions, including ergodicity~\citep{agarwal2022concave,chen2022learning,ghosh2023achieving}. Posterior-sampling algorithms also achieve $\tbigO(\sqrt{T})$ Bayesian regret and constraint violation for ergodic or communicating CMDPs~\citep{agarwal2022regret,provodin2024efficient}. Results for general policy parameterizations consider ergodic or unichain models and include approximation-error terms that can grow linearly with $T$~\citep{bai2024learning,satheesh2026regret}. These differences in assumptions and performance criteria are summarized in \Cref{tab:comparison} in Appendix~\ref{app:related work}.

These results leave open whether a computationally efficient algorithm can achieve $\tbigO(\sqrt{T})$ regret and cumulative constraint violation for weakly communicating average-reward CMDPs. In this work, we resolve this question in the tabular setting. Our contributions are summarized as follows.
\begin{itemize}
    \item We propose State-Augmented Clipped Value Iteration with Upper Confidence Bound (SA-CVI-UCB, \Cref{alg:main}) that achieves $\tbigO(\sqrt{T})$ regret and constraint violation with high probability (\Cref{thm:main}). To the best of our knowledge, this is the first computationally efficient algorithm with $\tbigO(\sqrt{T})$ guarantees for weakly communicating tabular average-reward CMDPs. The $\sqrt{T}$ dependence is optimal up to logarithmic factors when regret and constraint violation are controlled simultaneously~\citep{jaksch2010near,singh2023learning}. With a direct implementation of backward value iteration, the algorithm uses $\bigO(S^2AT^{5/2})$ arithmetic operations over $T$ interactions.

\item We develop a state-augmented formulation for regret analysis in average-reward CMDPs, combining a state variable that tracks cumulative constraint violation with a new reshaped reward based on a Huber potential. State augmentation has previously been used with safety budgets to enforce almost-sure constraints~\citep{sootla2022saute}. In our formulation, the added state determines the penalty on further violations while the reward function on the augmented state space remains fixed. Since the transition of the added state is known and deterministic, only the original transition kernel needs to be estimated. We define the reshaped reward through differences of the Huber potential, whose bounded slope keeps the per-step reward bounded. The potential differences telescope, so the cumulative reshaped reward equals the original cumulative reward minus the terminal potential. Together, state augmentation and the reshaped reward allow us to apply techniques used for unconstrained average-reward MDPs without worsening the regret rate in $T$. Specifically, we combine finite-horizon approximation~\citep{wei2021learning,chen2022learning,yu2026learning} with optimistic value iteration and value function clipping~\citep{hong2025reinforcement,chae2025learning} to obtain $\tbigO(\sqrt{T})$ regret and constraint violation in the original CMDP.
\end{itemize}

\section{Problem Setting}
\label{sec:problem-setting}

\paragraph{Average-Reward CMDPs}
We consider a constrained Markov decision process (CMDP) $\calM=(\calS,\calA,P,s_1,r,g)$, where $\calS$ and $\calA$ are finite state and action spaces, respectively, $P$ is the transition kernel, $s_1\in\calS$ is the initial state, and $r,g:\calS\times\calA\to\mathbb{R}$ are the reward and constraint functions, respectively. Here, $P(s'\mid s,a)$ denotes the probability of transitioning from state $s$ to $s'$ after taking action $a$. We assume that $r(s,a)\in[0,1]$ and $g(s,a)\in[-1,1]$ for all $(s,a)\in\calS\times\calA$. Moreover, we assume that $r$ and $g$ are known and deterministic, while the transition kernel $P$ is unknown.

Let $\Pi$ denote the set of stationary randomized policies. For any $\pi\in\Pi$ and initial state $s\in\calS$, we define the average reward, or gain, as $J_r^\pi(s)=\lim_{T\to\infty}(1/T)\bbE_\pi[\sum_{t=1}^T r(s_t,a_t)\mid s_1=s]$, where the expectation is taken over the trajectory generated by $P$ and $\pi$. We define $J_g^\pi(s)$ analogously by replacing $r$ with $g$. Given the initial state $s_1$, the average-reward CMDP is formulated as
\begin{align*}
    \sup_{\pi\in\Pi}\quad J_r^\pi(s_1)
    \qstq
    J_g^\pi(s_1)\geq 0.
\end{align*}
The agent interacts with the unknown CMDP for $T$ steps, starting from $s_1$. At each step $t\in[T]$, the agent observes the current state $s_t$, chooses an action $a_t\in\calA$ based on $a_t \sim \pi_t(\cdot|s_t)$, where $\pi_t$ denotes the agent's policy. Subsequently, the agent observes the next state $s_{t+1}\sim P(\cdot\mid s_t,a_t)$. We evaluate the learning algorithm in terms of regret and constraint violation, defined as
\begin{align*}
    \Regret(T)
    &=
    \sum_{t=1}^T \bigl(J_r^*-r(s_t,a_t)\bigr),
    \qquad
    \Violation(T)
    =
    \sum_{t=1}^T \bigl(-g(s_t,a_t)\bigr).
\end{align*}
Here, $\Regret(T)$ measures the cumulative reward gap relative to the optimal stationary policy, while $\Violation(T)$ measures the cumulative constraint violation. Finally, we impose the standard Slater condition, which ensures strict feasibility of the CMDP~\citep{singh2023learning,yu2026learning}.
\begin{assumption}[Slater condition]
\label{ass:slater}
There exists a stationary policy $\bar\pi\in\Pi$ such that $J_g^{\bar\pi}(s_1)\geq\gamma$ for some Slater constant $\gamma>0$. We assume that $\gamma$ is known, while the Slater policy $\bar\pi$ is unknown.
\end{assumption}

\paragraph{Weakly Communicating CMDPs}
Throughout the paper, we assume that the underlying MDP is weakly communicating. In particular, an MDP is weakly communicating if its state space can be partitioned into two sets such that all states in the first set communicate with one another under some stationary deterministic policies, while every state in the second set is transient under every stationary policy~\citep{puterman1994markov}. This assumption is weaker than the commonly imposed unichain and ergodic assumptions.

Unlike unconstrained weakly communicating MDPs, a weakly communicating CMDP does not necessarily admit an optimal stationary policy. Following previous works~\citep{chen2022learning,ghosh2023achieving,yu2026learning}, we therefore impose the following assumption.

\begin{assumption}[Existence of an optimal stationary policy]
\label{ass:optimal-policy}
There exists an optimal stationary policy $\pi^*\in\argmax_{\pi\in\Pi}\{J_r^\pi(s_1):J_g^\pi(s_1)\geq0\}$ such that its reward and constraint gains are independent of the initial state. Namely, there exist constants $J_r^*$ and $J_g^*$ such that $J_r^{\pi^*}(s)=J_r^*$ and $J_g^{\pi^*}(s)=J_g^*$ for all $s\in\calS$.
\end{assumption}

We denote by $v_r^*$ and $v_g^*$ the bias functions associated with $\pi^*$ and the reward and constraint functions, respectively, defined as $v_r^*(s)=\lim_{T\to\infty}(1/T)\sum_{t=1}^T\bbE_{\pi^*}[\sum_{i=1}^t(r(s_i,a_i)-J_r^*)\mid s_1=s]$ and $v_g^*(s)=\lim_{T\to\infty}(1/T)\sum_{t=1}^T\bbE_{\pi^*}[\sum_{i=1}^t(g(s_i,a_i)-J_g^*)\mid s_1=s]$. For any $V:\calS\to\mathbb{R}$, we define its span as $\spn(V)=\max_{s\in\calS}V(s)-\min_{s\in\calS}V(s)$.

\section{Algorithm}\label{sec:algorithm}
In this section, we propose a computationally efficient algorithm for learning weakly communicating average-reward CMDPs that achieves rate-optimal guarantees in $T$. Our algorithm can be viewed as casting the original CMDP into an augmented MDP and then applying a learning algorithm to it. Importantly, the main technical novelty lies in the design of the augmented MDP: (i) state augmentation for regret analysis in the average-reward setting, and (ii) a new reshaped reward based on a Huber potential.

Before presenting our approach, we discuss the limitations of occupancy-measure optimization and primal-dual methods for weakly communicating average-reward CMDPs. Existing high-probability guarantees in this setting either require computationally inefficient planning or have suboptimal dependence on $T$. For instance, Algorithm~4 in \citet{chen2022learning} achieves rate-optimal guarantees by imposing explicit constraints on the spans of the value functions induced by the occupancy measures. These constraints make the feasible region generally nonconvex, and no efficient method for solving the resulting optimization problem is known. This motivates an approach that is both computationally efficient and rate-optimal in $T$.

Another widely used approach is to incorporate the constraint into the reward through a dual multiplier $z_k$, leading to the composite reward $r + z_k g$---called the primal-dual method. However, its direct extension to the average-reward setting becomes nontrivial. In particular, rate-optimal algorithms for unconstrained average-reward MDPs rely critically on controlling the deviation of value functions across iterations~\citep{hong2025a}. Once the dual multiplier is introduced, the variation of $z_k$ induces additional variation in the composite reward and hence in the value functions, which makes the same rate-optimal analysis difficult in the constrained setting; see \Cref{app:discussion} for more details. This suggests that incorporating the constraint information into the reward leads to suboptimal guarantees due to the additional variation in the composite reward.

\subsection{State-Augmented MDP}
Motivated by the failure of primal-dual approaches, our method can be summarized as follows: to avoid the additional variation induced by incorporating the constraint information into the reward, we instead incorporate it into the state space, inspired by \citet{sootla2022saute}. Intuitively, this avoids the additional variation while preserving the desired statistical guarantees, since the evolution of the safety state is predictable from the current augmented state and action; see \eqref{eq:tilde P}.

Technically, the augmented (unconstrained) MDP $\widetilde\calM = (\widetilde \calS, \calA, \widetilde P, (s_1,z_1), \tilde r)$ is defined as follows. We introduce a safety state $z_t$ and augment the original state space to $\widetilde{\calS}=\calS\times\calZ$, where $\calZ = \{\Delta n: n\in \bbZ, |\Delta n|\leq 2T\}$ with $\Delta = 1/\sqrt{T}$. Given $z_1=0$, the safety state evolves as $z_{t+1}=\Pi_{\calZ}(z_t-g(s_t,a_t))$, where $\Pi_{\calZ}$ returns the closest element in $\calZ$. For simplicity, let $\psi(s,z,a)=\Pi_{\calZ}(z-g(s,a))$. We then define the augmented transition kernel and reshaped reward as
\begin{align}
\widetilde P(s',z' \mid s,z,a) &= P(s'\mid s,a)\cdot\Ind\{z'=\psi(s,z,a)\},\label{eq:tilde P}\\
\tilde r(s,z,a) &= r(s,a)+\Phi_W(z)-\Phi_W(\psi(s,z,a)), \label{eq:tilde r}
\end{align}
for each $(s,z,a)\in\calS\times\calZ\times\calA$ and $(s',z')\in\calS\times\calZ$. This defines an unconstrained MDP over the augmented state space $\widetilde{\calS}$. Here, $\Phi_W:\bbR\to\bbR_+$ is a nondecreasing potential function, whose formal definition will be provided later.

The safety state $z_t$ and the reshaped reward $\tilde r$ can be interpreted as follows. Intuitively, $z_t$ tracks the cumulative constraint violation up to step $t$: ignoring the projection, $z_t\approx-\sum_{\tau=1}^{t-1}g(s_\tau,a_\tau)$, so a large positive $z_t$ indicates accumulated constraint violation, whereas a negative $z_t$ indicates accumulated constraint surplus. Accordingly, since $\Phi_W$ is nondecreasing, the potential difference term in \eqref{eq:tilde r} acts as a regularization term for constraint satisfaction: actions that decrease the safety state receive a nonnegative regularization term (i.e., a bonus), whereas actions that increase it receive a nonpositive one (i.e., a penalty). Thus, this term plays a role analogous to the penalty induced by a dual multiplier, with the magnitude of the regularization determined by the current safety state. Consequently, optimizing the reshaped reward naturally balances reward maximization and constraint satisfaction, motivating an unconstrained learning problem over the augmented MDP.

\paragraph{Motivation of $\tilde r$}
We describe the design motivation for $\tilde r$, as it plays a critical role in our analysis. From a technical perspective, $\tilde r$ provides a natural bridge from the regret analysis of the augmented MDP to that of the original CMDP. In particular, its potential-difference form in \eqref{eq:tilde r} admits the telescoping structure, i.e., $\sum_{t=1}^T \bigl(\Phi_W(z_t)-\Phi_W(z_{t+1})\bigr) = -\Phi_W(z_{T+1})$, since $z_1=0$ and $\Phi_W(0)=0$. Based on this, the cumulative reshaped reward can be expressed as
\begin{align}\label{eq:telescope}
\sum_{t=1}^T \tilde r(s_t,z_t,a_t) = \sum_{t=1}^T r(s_t,a_t) - \Phi_W(z_{T+1}).
\end{align}
This relation allows us to translate a regret bound for the augmented MDP, expressed in terms of $\tilde r(s_t,z_t,a_t)$, into a bound on the original regret together with the terminal potential, which serves as a useful ingredient for controlling both regret and violation; see Step 2 of \Cref{sec:analysis}.

\paragraph{Choice of $\Phi_W$: Huber Potential}
Previously, we discussed that the potential-difference structure in the design of $\tilde r$ provides a bridge between the augmented MDP and the original CMDP in the analysis. The remaining question is whether we can successfully carry out the regret analysis for the augmented MDP. To this end, several regularity conditions are typically required; for example, the reshaped reward $\tilde r$ should be sufficiently bounded. However, for common choices of potential functions such as quadratic or exponential potentials, $\tilde r$ may become large because these potentials are not globally Lipschitz. In particular, since the safety state $z$ can grow with $T$, the potential difference $\Phi(z)-\Phi(\psi(s,z,a))$ may also become large, leading to an unfavorable dependence on $T$ in the regret analysis.

This observation motivates us to choose a potential function with favorable regularity properties, such as Lipschitzness. For this purpose, we use the Huber potential $\Phi_W$, defined as follows: for constants $\Lambda, W > 0$,
\begin{equation}
\Phi_W(x)=
\begin{cases}
0, & x\le 0,\\
\Lambda x^2/(2W), & 0<x\le W,\\
\Lambda\left(x-W/2\right), & x>W.
\end{cases} \label{eq:PhiW}
\end{equation}

For comparison of potential functions, since $\Phi_W(x)$ grows linearly when $x>W$, it is globally Lipschitz, whereas quadratic and exponential potentials are not. Besides, the Huber potential satisfies additional properties that are useful in our analysis, which are summarized in the following lemma.
\begin{lemma}\label{lem:PhiW}
    Let $\Phi_W:\bbR \to \bbR$ be defined as in  \eqref{eq:PhiW}. Then we have the following properties:
    \begin{enumerate}
        \item $0 \leq \Phi_W'(x) \leq \Lambda$ for all $x\in\bbR$.
        \item $\Phi_W'(y)x \leq \Phi_W(x+y) - \Phi_W(y) \leq \Phi_W'(y)x + \Lambda x^2/ (2W)$ for all $x,y \in \bbR$.
        \item For any $\lambda \in [0,\Lambda)$ and $x \in \bbR$, we have $(\Lambda- \lambda)[x]_+\leq \Phi_W(x) - \lambda x + \Lambda W/2$.
    \end{enumerate}
\end{lemma}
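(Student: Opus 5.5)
The plan is to start from the explicit derivative of the Huber potential. From \eqref{eq:PhiW},
\[
\Phi_W'(x)=0 \text{ for } x\le 0,\qquad \Phi_W'(x)=\Lambda x/W \text{ for } 0<x\le W,\qquad \Phi_W'(x)=\Lambda \text{ for } x>W.
\]
One checks that $\Phi_W$ and $\Phi_W'$ are continuous at $0$ and at $W$. Both sides agree there: values $0$ and $\Lambda W/2$, slopes $0$ and $\Lambda$. Hence $\Phi_W$ is $C^1$, and $\Phi_W'$ is continuous, piecewise linear and nondecreasing with range $[0,\Lambda]$. This gives item 1 immediately. It also shows that $\Phi_W$ is convex.

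For item 2, the lower bound $\Phi_W'(y)x\le \Phi_W(x+y)-\Phi_W(y)$ is the first-order characterization of convexity of the $C^1$ convex function $\Phi_W$.

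For the upper bound, observe that $\Phi_W'$ is Lipschitz with constant $\Lambda/W$. It is piecewise linear with slopes $0$, $\Lambda/W$, $0$ and is continuous. So
\[
\Phi_W(x+y)-\Phi_W(y)-\Phi_W'(y)x=\int_0^1\bigl(\Phi_W'(y+tx)-\Phi_W'(y)\bigr)x\,dt\le \int_0^1 (\Lambda/W)\,t\,x^2\,dt=\Lambda x^2/(2W),
\]
which is the standard descent-lemma argument for $L$-smooth functions.

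For item 3, fix $\lambda\in[0,\Lambda)$ and define $h(x)=\Phi_W(x)-\lambda x+\Lambda W/2-(\Lambda-\lambda)[x]_+$. I will show $h\ge0$ by splitting into three cases.
\begin{itemize}
\item For $x\le 0$: $h(x)=-\lambda x+\Lambda W/2\ge0$.
\item For $x>W$: $h(x)=\Lambda x-\Lambda W/2-\lambda x+\Lambda W/2-(\Lambda-\lambda)x=0$.
\item For $0<x\le W$: $h(x)=\Lambda x^2/(2W)-\Lambda x+\Lambda W/2=\Lambda (x-W)^2/(2W)\ge 0$.
\end{itemize}
So the inequality holds in all cases. (The case $\lambda<\Lambda$ is only needed for the coefficient to be positive, not for the inequality itself.)

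The only mildly delicate step is the upper bound in item 2. It requires verifying that $\Phi_W'$ is globally $(\Lambda/W)$-Lipschitz across the breakpoints, and that is exactly where the $C^1$ continuity check at $0$ and $W$ is used. Everything else is direct case computation.
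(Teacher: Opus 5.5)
Your proof is correct and follows essentially the same route as the paper. Both compute $\Phi_W'$ explicitly and use that it is nondecreasing and $(\Lambda/W)$-Lipschitz inside the identity $\Phi_W(x+y)-\Phi_W(y)-\Phi_W'(y)x=\int_0^1(\Phi_W'(y+\tau x)-\Phi_W'(y))x\,d\tau$, and both settle item 3 by the same three-case computation ending in $\frac{\Lambda}{2W}(x-W)^2\ge 0$. The differences are cosmetic: you cite first-order convexity for the lower bound in item 2, whereas the paper reads it off the same integral via monotonicity of $\Phi_W'$, and your explicit $C^1$ check at $0$ and $W$ justifies the fundamental-theorem-of-calculus step that the paper uses implicitly.
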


\paragraph{Comparison of Augmented RL}
While our state augmentation is related to that of \citet{sootla2022saute}, the main difference in the resulting augmented MDP lies in the reshaped reward. In both approaches, the augmented state tracks cumulative constraint information, up to a difference in the sign convention for the safety state $z_t$. For the reshaped reward, \citet{sootla2022saute} imposes an infinite regularization once the constraint is violated. In a reward-maximization form, their reshaped objective can be expressed as
\begin{align*}
    \tilde r(s,z,a)
    =
    \begin{cases}
        r(s,a), & z \geq 0, \\
        -\infty, & z < 0,
    \end{cases}
\end{align*}
thereby targeting almost-sure constraint satisfaction. In contrast, our reshaped reward~\eqref{eq:tilde r} uses the potential difference $\Phi_W(z)-\Phi_W(\psi(s,z,a))$, which remains uniformly bounded due to the Lipschitzness of the Huber potential. This choice is tailored to our regret analysis, allowing us to control the augmented value functions while retaining the telescoping structure needed to bound regret and constraint violation.

\subsection{Description of Algorithm}
\begin{algorithm}[t]
\caption{{S}tate-{A}ugmented {C}lipped {V}alue {I}teration with {UCB} (SA-CVI-UCB)}
\label{alg:main}
\textbf{Input:} the optimism parameter $\beta$; the number of steps $T$; the horizon length $H$ and $K = T/H$; the Slater constant $\gamma$; the clipping parameter $C$ and span $\spn(v_r^*), \spn(v_g^*)$; the parameter for Lyapunov function $W$; the discretization gap $\Delta$; \\
\textbf{Initialize:} $z_1^1 \leftarrow 0$; $V_{k,H+1}(s,z) \leftarrow 0$; $\calZ=\{\Delta n: n\in\bbZ, |\Delta n|\leq 2T\}$; \\
$\hat P_1(s'|s,a) \leftarrow 1/S$ and $N_1(s,a,s'), N_1(s,a) \leftarrow 0 \quad \forall (s,a,s')\in\calS\times\calA\times\calS$; \\
$\tilde r(s,a,z) \leftarrow r(s,a) + \Phi_W(z) - \Phi_W(\psi(s,z,a)) \quad \forall (s,a,z) \in \calS\times\calA \times\calZ$;
\begin{algorithmic}[1]
    \For{$k=1, \ldots, K$}
        \For{$h=H,\ldots,1$}
            \For{$(s,a,z) \in \calS \times\calA \times\calZ$}
                \State $Q_{k,h}(s,a,z) \leftarrow \tilde r(s,a,z) + \hat P_k(\cdot|s,a)^\top V_{k,h+1}(\cdot,\psi(s,a,z)) + \beta/\sqrt{N_k(s,a)\vee 1}$\label{line:Q}
                \State $\widetilde V_{k,h}(s,z) \leftarrow  \max_{a\in\calA} Q_{k,h}(s,a,z)$ \label{line:tilde V}
                \State $V_{k,h}(s,z) \leftarrow \widetilde V_{k,h}(s,z) \wedge (\min_{s'\in\calS} \widetilde V_{k,h}(s',z) + 2C)$ \label{line:clip}
            \EndFor
        \EndFor
        \State $N_{k+1}(\cdot,\cdot,\cdot)\leftarrow N_k(\cdot,\cdot,\cdot)$ and $N_{k+1}(\cdot,\cdot)\leftarrow N_k(\cdot,\cdot)$
        \For{$h=1,\ldots, H$} \label{line:execute 0}
            \State Take $a_h^k \in \argmax_{a\in\calA} Q_{k,h}(s_h^k,a,z_h^k)$
            \State Observe $s_{h+1}^k \sim P(\cdot|s_h^k,a_h^k)$
            \State $N_{k+1}(s_h^k, a_h^k,s_{h+1}^k) \leftarrow N_{k+1}(s_h^k, a_h^k,s_{h+1}^k) + 1$ 
            \State $N_{k+1}(s_h^k,a_h^k) \leftarrow N_{k+1}(s_h^k,a_h^k) + 1$
            \State $z_{h+1}^k \leftarrow \psi(s_h^k,z_h^k,a_h^k)$
        \EndFor
        \State $s_1^{k+1} \leftarrow s_{H+1}^k$ and $z_1^{k+1} \leftarrow z_{H+1}^k$
        \State
        $\hat P_{k+1}(s'|s,a) \leftarrow \begin{cases}
            N_{k+1}(s,a,s')/N_{k+1}(s,a) & \text{if } N_k(s,a) > 0, \\
            1/S & \text{if } N_k(s,a) = 0, 
        \end{cases} \quad \forall (s,a,s') \in \calS\times\calA\times\calS$ \label{line:execute 1}
    \EndFor
\end{algorithmic}
\end{algorithm}
We present the proposed \underline{S}tate-\underline{A}ugmented \underline{C}lipped \underline{V}alue \underline{I}teration with \underline{U}pper \underline{C}onfidence \underline{B}ound (SA-CVI-UCB) in \Cref{alg:main}. The algorithm adopts the finite-horizon approximation framework with $K$ episodes and horizon $H$, so that $T=KH$. Accordingly, we use the trajectory notation $s_h^k, a_h^k$ instead of $s_t, a_t$, with the correspondence $t=(k-1)H+h$. The algorithmic parameter choices, including $K$ and $H$, are summarized in \eqref{eq:alg param}. In each episode $k\in[K]$, the algorithm first performs backward value iteration over the augmented state space $\widetilde{\calS}=\calS\times\calZ$ using the empirical transition kernel $\hat P_k$. It then executes the greedy policy for $H$ steps while updating the safety state according to $\psi$, and finally updates the empirical transition model.

In Lines~\ref{line:Q}--\ref{line:clip}, the backward recursion constructs an optimistic value estimate for the augmented MDP. Importantly, the true transition kernel $P(\cdot\mid s,a)$ is unknown, while the next safety state $\psi(s,a,z)$ is deterministic. Hence, $\hat P_k$ is estimated using all visits to $(s,a)$, independently of the safety state, and the bonus $\beta/\sqrt{N_k(s,a)\vee1}$ accounts only for uncertainty in $P$.

In Line~\ref{line:clip}, for each fixed $z\in\calZ$, the algorithm clips the value-function estimate across the original state space. As shown in \Cref{sec:analysis}, the comparator value function in the augmented MDP has span at most $2C$ for every fixed safety state. Therefore, clipping at this level preserves optimism while allowing the statistical error to depend on the tighter span parameter $C$, whereas without clipping one would rely on the na\"ive span bound of $H$ for $\widetilde V_{k,h}$. This step is crucial for obtaining the desired regret bound.

After planning, in Lines~\ref{line:execute 0}--\ref{line:execute 1}, the algorithm executes the action $a_h^k\in\arg\max_a Q_{k,h}(s_h^k,a,z_h^k)$ and observes the next state $s_{h+1}^k$. The safety state is then updated deterministically as $z_{h+1}^k=\psi(s_h^k,a_h^k,z_h^k)$. Both components of the augmented state are carried across episode boundaries, and the empirical transition kernel is updated using the cumulative transition counts.

\subsection{Main Result}
We present our main result. We choose the algorithmic parameters as follows:
\begin{align}\label{eq:alg param}
\begin{aligned}
    &\delta \in (0,1/3), \ 
    \Delta = 1/\sqrt{T}, \ 
    \Lambda = 2/\gamma,\  
    W=K=H=\sqrt{T}, \\
    &C = \spn(v_r^*) + \Lambda \spn(v_g^*) + \frac{\Lambda}{W}\left(\spn(v_g^*)^2 + 2H(\spn(v_g^*)+2)^2\right)+ \frac{H\Lambda\Delta}{2}, \\
    &\beta = 2CS\sqrt{\log(2TS^2|\calA|/\delta)/2}.
\end{aligned}
\end{align}
Moreover, let $\spn(v_{\lambda^*}^*)$ denote the span of the optimal bias function with respect to the composite reward $r+\lambda^*g$. where $\lambda^*$ is the optimal dual multiplier; see Appendix~\ref{app:strong duality}. Note that this quantity appears only in the analysis and it not required by the algorithm. Then, we have the following result.
\begin{theorem}\label{thm:main}
    For any $\delta\in(0,1/3)$, with probability at least $1-3\delta$, we have
    \begin{align*}
        \Regret(T)&\leq \tbigO\left(\left(\spn(v_r^*)+\frac{1}{\gamma}(\spn(v_g^*)+\spn(v_g^*)^2+1)\right)S^2A\sqrt{T}\right), \\
        \Violation(T)&\leq \tbigO\left((\spn(v_r^*) +\spn(v_g^*)+\spn(v_g^*)^2)S^2A\sqrt{T}+\spn(v_{\lambda^*}^*)\sqrt{T}\right).
    \end{align*}
\end{theorem}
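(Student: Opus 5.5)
The plan has four steps. First, build a comparator policy in the augmented MDP whose value has small span for each fixed $z$. Second, run the standard optimistic clipped value-iteration analysis on the augmented MDP. Third, telescope back to the original CMDP. Fourth, extract regret and violation separately.

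\textbf{Step 1: comparator.} Take the stationary policy $\pi^*$ of \Cref{ass:optimal-policy}, which ignores $z$, and lift it to $\widetilde\calM$. For a fixed starting safety state $z$, its $h$-step value is
\begin{align*}
\bbE\Bigl[\sum_{j} r\Bigr]+\Phi_W(z)-\bbE\bigl[\Phi_W(z_{\rm end})\bigr].
\end{align*}
Expand $\Phi_W(z_{\rm end})-\Phi_W(z)$ using part~2 of \Cref{lem:PhiW} with increment $x=-\sum_j g + (\text{projection error})$. The linear term is $\Phi_W'(z)\,x$ with $\Phi_W'(z)\le\Lambda$. Writing $\sum_j (g-J_g^*)$ through the Poisson equation turns this into a bias difference of size at most $\spn(v_g^*)$, plus a martingale whose square has expectation of order $H(\spn(v_g^*)+2)^2$. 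The quadratic term is therefore bounded by $\frac{\Lambda}{2W}\bigl(\spn(v_g^*)^2+2H(\spn(v_g^*)+2)^2\bigr)$. The projection error contributes at most $H\Lambda\Delta/2$. The reward part contributes $\spn(v_r^*)$ via its Poisson equation. Together this shows that the $h$-step comparator value minus a state-independent quantity has span at most $2C$ in $s$ for each $z$, which is exactly why $C$ is defined as in \eqref{eq:alg param}.

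\textbf{Step 2: augmented regret.} With this comparator, run the clipped-UCB analysis of \citet{hong2025reinforcement} on $\widetilde\calM$. Optimism $V_{k,h}\ge$ comparator holds because clipping at $2C$ keeps a function above a function of span $\le 2C$. The next safety state is deterministic and $\hat P_k$ depends only on $(s,a)$. Hence a Hoeffding/union bound over $(s,a)$ and over value vectors of span $\le 2C$, rather than over $\calZ$, controls $(\hat P_k-P)V$ by $\beta/\sqrt{N_k}$; the $S$ factor in $\beta$ comes from an $\ell_1$ argument that avoids a covering of $\calZ$.

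Summing the per-episode comparator gain against $\sum_h \tilde r$ uses the bonus sum $\beta\sum 1/\sqrt{N}\lesssim \beta\sqrt{SAT}$ and an Azuma term. It also requires handling episode boundaries, where $V_{k,1}$ and the comparator at the actual $(s_1^k,z_1^k)$ differ by $O(C)$ per episode, i.e.\ $O(CK)=O(C\sqrt T)$. The result is
\begin{align*}
\sum_t\bigl(J_r^*+\lambda\text{-terms}-\tilde r_t\bigr)\le\tbigO(CS^2A\sqrt T).
\end{align*}
More precisely, the comparator gain is $J_r^*$ per step minus a drift term $\Phi_W'(z)(-J_g^*)\le 0$, which is favorable since $J_g^*\ge0$.

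\textbf{Step 3: translate.} Apply \eqref{eq:telescope}: $\sum_t\tilde r_t=\sum_t r_t-\Phi_W(z_{T+1})$. Then Step 2 gives
\begin{align*}
\Regret(T)+\Phi_W(z_{T+1})\le \tbigO(CS^2A\sqrt T).
\end{align*}
Since $\Phi_W\ge0$, the regret bound follows after substituting $C$, $\Lambda=2/\gamma$ and $W=H=\sqrt T$.

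\textbf{Step 4: violation.} Here $z_{T+1}\approx\Violation(T)$, with projection error at most $T\Delta=\sqrt T$. Regret can be negative, but strong duality (Appendix~\ref{app:strong duality}) lets us lower bound it by $-\lambda^*\Violation(T)-\tbigO(\spn(v_{\lambda^*}^*)\sqrt T)$, via the Lagrangian's optimal gain and bias applied along the trajectory with an Azuma term. Combining with part~3 of \Cref{lem:PhiW} at $\lambda=\lambda^*\le 1/\gamma<\Lambda$ gives
\begin{align*}
(\Lambda-\lambda^*)[z_{T+1}]_+ \le \Phi_W(z_{T+1})-\lambda^* z_{T+1}+\Lambda W/2 \le \tbigO(CS^2A\sqrt T)+\spn(v^*_{\lambda^*})\sqrt T.
\end{align*}
Since $\Lambda-\lambda^*\ge1/\gamma$, dividing through yields the stated violation bound.

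\textbf{Main obstacle.} I expect Step 1 to be the hardest part: bounding the span of the comparator's augmented value uniformly in $z$. The nonlinearity of $\Phi_W$ means the Poisson-equation argument only works after the second-order expansion, and the variance term must be kept at order $H/W$.
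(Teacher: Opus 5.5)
Your architecture is the paper's: lift $\pi^*$, preserve optimism by clipping at $2C$, telescope via \eqref{eq:telescope}, then combine the strong-duality lower bound on the regret with part~3 of \Cref{lem:PhiW}. The gap is in Step~1, which everything else rests on: as written it does not produce the constant $C$.

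You linearize $\Phi_W$ at the initial safety state $z$ with the uncentered increment $x=-\sum_j g+(\text{projection error})$. The Poisson equation centers only the \emph{linear} term. The remainder $\frac{\Lambda}{2W}\bbE[x^2]$ still contains the drift, since $\bbE[x^2]\ge(\bbE x)^2\approx\bigl((H-h+1)J_g^*\bigr)^2$. Whenever the constraint is slack at $\pi^*$ ($J_g^*$ bounded away from $0$), this is of order $\Lambda H^2/W=\Lambda\sqrt T$ under \eqref{eq:alg param}. So your claimed bound $\frac{\Lambda}{2W}\bigl(\spn(v_g^*)^2+2H(\spn(v_g^*)+2)^2\bigr)$ on the quadratic term does not follow.

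The two-sided sandwich from part~2 of \Cref{lem:PhiW} then only gives a span of order $\sqrt T/\gamma$. That inflates $C$ and $\beta$ by a factor $\sqrt T$ and makes $KC$ linear in $T$, so the rate collapses. The same defect hits your conversion claim that the comparator gains ``$J_r^*$ per step minus a favorable drift $\Phi_W'(z)(-J_g^*)$''. For $z\le 0$ one has $\Phi_W'(z)=0$, so nothing offsets the $\Lambda H^2/W$ remainder, even though the true quantity is harmless.

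The missing idea is the expansion point used in \Cref{lem:J-V}: expand around the drift-adjusted point $y=z-(H-h+1)J_g^*$. Set $\xi_{h'}=g(s_{h'},a_{h'})+v_g^*(s_{h'+1})-v_g^*(s_{h'})-J_g^*$. Then $z-\sum_{h'=h}^H g(s_{h'},a_{h'})=y+x$ with the centered increment $x=v_g^*(s_{H+1})-v_g^*(s_h)-\sum_{h'=h}^H\xi_{h'}$. This gives $|\Phi_W'(y)\bbE[x]|\le\Lambda\spn(v_g^*)$, and $\bbE[x^2]\le 2\spn(v_g^*)^2+2(H-h+1)(\spn(v_g^*)+2)^2$ by orthogonality of martingale increments.

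Both sides of part~2 of \Cref{lem:PhiW} then place $V_{\tilde r,h}^{\pi^*}(s,z)$ within $C$ of the $s$-independent quantity $(H-h+1)J_r^*+\Phi_W(z)-\Phi_W\bigl(z-(H-h+1)J_g^*\bigr)$. This yields the span bound $2C$. Since $J_g^*\ge0$ and $\Phi_W$ is nondecreasing, the exact nonlinear drift $\Phi_W(z)-\Phi_W\bigl(z-(H-h+1)J_g^*\bigr)$ is nonnegative. That gives $HJ_r^*-V_{\tilde r,1}^{\pi^*}(s,z)\le C$ with no linearization at all.

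Two smaller corrections:
\begin{itemize}
\item Counts are updated only at episode ends, so the bonus sum is $4\beta HSA+6\beta\sqrt{SAT}$ (\Cref{lem:sum N}). The $S^2A\sqrt T$ factor comes from the $\beta HSA$ term, not from $\beta\sqrt{SAT}$.
\item The $KC$ term arises in the telescoping step from $HJ_r^*-V_{\tilde r,1}^{\pi^*}(s_1^k,z_1^k)\le C$. It does not come from a gap between $V_{k,1}$ and the comparator, which optimism makes nonpositive.
\end{itemize}
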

The theorem shows that SA-CVI-UCB achieves $\tbigO(\sqrt{T})$ regret and constraint violation, up to problem-dependent factors involving $S$, $A$, $\gamma$, and the relevant bias spans. In particular, these results establish SA-CVI-UCB as the first computationally efficient algorithm to achieve rate-optimal guarantees in this setting. Here, the running time of SA-CVI-UCB is polynomial in $S$, $A$, and $T$. Under the parameter choice~\eqref{eq:alg param}, we have $|\calZ|\leq 4T^{3/2}+1$, and in each episode, Lines~\ref{line:Q}--\ref{line:clip} compute $Q_{k,h}(s,a,z)$ for all $(h,s,a,z)\in[H]\times\calS\times\calA\times\calZ$ with $\bigO(S)$ operations each. Hence, the total running time over $K$ episodes is $\bigO(KHS^2A|\calZ|)=\bigO(S^2AT^{5/2})$.

\paragraph{Why does the augmentation technique work?}
Some readers may wonder, in principle, why the state-augmentation technique leads to rate-optimal guarantees in the average-reward setting. To address this, we highlight two useful properties of the augmented formulation in our setting. First, $\tilde r$ is a fixed reward function on the augmented state space, while the constraint information evolves through the Markovian safety state. Hence, the additional reward variation and value function deviation induced by a changing dual multiplier is avoided. Second, since $g$ is known and deterministic, the transition of the safety state is also known. Therefore, in the tabular setting, state augmentation introduces no additional unknown transition dynamics, and the learner only needs to estimate the original transition kernel $P(\cdot\mid s,a)$. These properties make the augmented formulation particularly suitable for obtaining rate-optimal guarantees in the average-reward setting.

\section{Analysis}\label{sec:analysis}
In this section, we present the analysis of \Cref{thm:main}. To establish bounds on $\Regret(T)$ and $\Violation(T)$, our proof proceeds in the following three-step framework. The detailed proofs are deferred to Appendix~\ref{app:proofs}.
\begin{align*}
\underbrace{\RegAug(T)}_{\textbf{Step 1}}
\ \xLongrightarrow[\textbf{Step 2}]{}
\Regret(T) + \Phi_W(z_{T+1})
\ \xLongrightarrow[\textbf{Step 3}]{} 
\Regret(T), \Violation(T).
\end{align*}
First, we establish a regret bound for the augmented MDP under the reshaped reward $\tilde r$ and transition kernel $\widetilde P$. We denote this regret by $\RegAug(T)$ and formally define it as $\RegAug(T) = \sum_{k=1}^K (V_{\tilde r,1}^{\pi^*}(s_1^k, z_1^k) - \sum_{h=1}^H \tilde r(s_h^k,a_h^k,z_h^k))$, where $V_{\tilde r,1}^{\pi^*}(s,z) = \bbE_{\widetilde P,\pi^*}[\sum_{h=1}^H \tilde r(s_h,a_h,z_h)\mid s_1=s, z_1=z]$. Second, we transfer this bound to the original CMDP to obtain a bound on $\Regret(T)+\Phi_W(z_{T+1})$. Finally, we use this bound to separately control $\Regret(T)$ and $\Violation(T)$.

Before going through each step, we introduce the key lemma underlying our analysis. Roughly, this lemma ensures that the two quantities---$\spn(V_{\tilde r, h}^{\pi^*}(\cdot,z))$ and $HJ_r^* - V_{\tilde r, 1}^{\pi^*}(s,z)$---are small, as $C=\tbigO(1)$ under the parameter choice~\eqref{eq:alg param}. For comparison, this lemma can be viewed as an analogue of Lemma 2 in \citet{wei2020model}, which considers the simpler setting of average-reward MDPs without state augmentation.
\begin{lemma}\label{lem:span}
    Let $C$ be defined in \eqref{eq:alg param}. Then we have, for all $(h,s,z)\in [H]\times\calS \times \calZ$,
    \begin{enumerate}
        \item $\spn(V_{\tilde r, h}^{\pi^*}(\cdot,z)) \leq 2C$,
        \item $HJ_r^* - V_{\tilde r, 1}^{\pi^*}(s,z) \leq C$.
    \end{enumerate}
\end{lemma}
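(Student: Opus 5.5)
The plan is to write the comparator value in closed form. I will use the telescoping identity \eqref{eq:telescope} along the trajectory of $\pi^*$ in the augmented MDP, and then control the reward part with the Poisson equation for $v_r^*$ and the potential part with \Cref{lem:PhiW}. Since $\pi^*$ is stationary on $\calS$ and ignores $z$, the $s$-marginal of the trajectory started at $(s,z)$ is the original Markov chain under $\pi^*$. Writing $n=H-h+1$, the reshaped rewards telescope and give
\begin{align*}
V_{\tilde r,h}^{\pi^*}(s,z)=\bbE\Bigl[\sum_{i=h}^H r(s_i,a_i)\Bigr]+\Phi_W(z)-\bbE\bigl[\Phi_W(z_{H+1})\bigr].
\end{align*}
By the Poisson equation $v_r^*(s)=\bbE_{a\sim\pi^*}[r(s,a)-J_r^*+P(\cdot\mid s,a)^\top v_r^*]$, the first term equals $nJ_r^*+v_r^*(s)-\bbE[v_r^*(s_{H+1})]$. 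It therefore lies in $[nJ_r^*-\spn(v_r^*),\,nJ_r^*+\spn(v_r^*)]$ uniformly in $s$.

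Next I handle the safety state. Since $|g|\le1$ and $|z|\le 2T$ before at most $T$ further steps, the $z$-path never hits the boundary of $\calZ$ within the horizon. Only the rounding to the grid matters, and it contributes at most $\Delta/2$ per step. So $z_{H+1}=z-G+e$ with $G=\sum_{i=h}^H g(s_i,a_i)$ and $|e|\le H\Delta/2$. Because $\Phi_W$ is $\Lambda$-Lipschitz by \Cref{lem:PhiW}(1), replacing $z_{H+1}$ by $z-G$ costs at most $H\Lambda\Delta/2$.

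The key trick handles the drift. Set $\widetilde G=G-nJ_g^*$. Since $J_g^*\ge0$ and $\Phi_W$ is nondecreasing, $\Phi_W(z-G)\le\Phi_W(z-\widetilde G)$. By the Poisson equation for $v_g^*$ we can write $\widetilde G=v_g^*(s_h)-v_g^*(s_{H+1})+M$. Here $M$ is a martingale with increments bounded by $\spn(v_g^*)+2$, so
\begin{align*}
\bbE[\widetilde G^2]\le 2\spn(v_g^*)^2+2H(\spn(v_g^*)+2)^2,
\qquad
\bbE[\widetilde G]=v_g^*(s)-\bbE[v_g^*(s_{H+1})].
\end{align*}
Applying \Cref{lem:PhiW}(2) with $y=z$ and $x=-\widetilde G$, taking expectations and using $0\le\Phi_W'\le\Lambda$, gives the upper bound
\begin{align*}
\bbE\bigl[\Phi_W(z_{H+1})\bigr]-\Phi_W(z)\le \Lambda\spn(v_g^*)+\frac{\Lambda}{W}\bigl(\spn(v_g^*)^2+H(\spn(v_g^*)+2)^2\bigr)+\frac{H\Lambda\Delta}{2}.
\end{align*}
In the other direction, the lower bound in \Cref{lem:PhiW}(2) with $x=-G$ gives $\bbE[\Phi_W(z-G)]-\Phi_W(z)\ge-\Phi_W'(z)\bbE[G]$. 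We have $\bbE[G]=nJ_g^*+\bbE[\widetilde G]$ with $\Phi_W'(z)nJ_g^*\ge0$. The lower bound then becomes $-\Phi_W'(z)nJ_g^*-\Lambda\spn(v_g^*)-H\Lambda\Delta/2$. The term $-\Phi_W'(z)nJ_g^*$ does not depend on $s$.

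Both items now follow by combining these bounds. For item 2, take $h=1$ and $n=H$. Then $HJ_r^*-V_{\tilde r,1}^{\pi^*}(s,z)\le\spn(v_r^*)+\bigl(\bbE[\Phi_W(z_{H+1})]-\Phi_W(z)\bigr)$, and the upper bound above makes this at most $C$. For item 1, fix $z$ and compare two states $s,s'$. The $s$-independent quantities $nJ_r^*$, $\Phi_W(z)$ and $\Phi_W'(z)nJ_g^*$ cancel in the difference. Each remaining piece is bounded by the corresponding term of $C$, with each appearing at most twice, which gives $\spn(V_{\tilde r,h}^{\pi^*}(\cdot,z))\le 2C$.

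The main obstacle is the quadratic term of \Cref{lem:PhiW}(2). Applied naively to $G$, it yields $\bbE[G^2]=\Theta(H^2)$ because of the drift $nJ_g^*$, which would ruin the $C=\tbigO(1)$ bound. The monotonicity step $\Phi_W(z-G)\le\Phi_W(z-\widetilde G)$, which uses $J_g^*\ge0$, removes the drift on the upper-bound side. The martingale variance bound then keeps $\bbE[\widetilde G^2]/W=\bigO(H/W)=\bigO(1)$. On the lower-bound side the drift only enters linearly, through a term that does not depend on $s$.
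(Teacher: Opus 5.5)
\textbf{Item 2 is fine.} It is the paper's argument with monotonicity applied at a slightly different place. Your proof of item 1, however, has a genuine gap: the drift term $\Phi_W'(z)\,nJ_g^*$ does not cancel.

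Write $D(s)=\bbE_s[\Phi_W(z_{H+1})]-\Phi_W(z)$, where $\bbE_s$ conditions on $(s_h,z_h)=(s,z)$. You got your upper bound $D(s)\le U$ (with $U$ the right-hand side of your display) only after discarding the drift via $\Phi_W(z-G)\le\Phi_W(z-\widetilde G)$, so $U$ contains no drift term. Your lower bound, by contrast, is $D(s)\ge-\Phi_W'(z)nJ_g^*-\Lambda\spn(v_g^*)-H\Lambda\Delta/2$. Bounding $V_{\tilde r,h}^{\pi^*}(s,z)-V_{\tilde r,h}^{\pi^*}(s',z)$ from above uses the lower bound for $D(s)$ and the upper bound for $D(s')$, so you only get
\[
V_{\tilde r,h}^{\pi^*}(s,z)-V_{\tilde r,h}^{\pi^*}(s',z)\le 2\spn(v_r^*)+U+\Lambda\spn(v_g^*)+\frac{H\Lambda\Delta}{2}+\Phi_W'(z)\,nJ_g^*.
\]
A quantity cancels only if it enters both bounds, and this one enters only one. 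When the constraint is slack ($J_g^*>0$), $z>W$ and $h=1$, the extra term equals $\Lambda HJ_g^*=2J_g^*\sqrt{T}/\gamma$, so you do not get $2C$.

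This is not a bookkeeping artifact. For $z$ deep in the linear region of $\Phi_W$, $D(s)$ lies within $\bigO(\Lambda(\spn(v_g^*)+1))$ of $-\Lambda nJ_g^*$. The monotonicity step therefore makes your upper bound loose by exactly the drift.

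\textbf{The fix.} Center both bounds at the $s$-independent point $y=z-nJ_g^*$, as the paper does, and apply monotonicity only to that deterministic center. Since $z-G=y-\widetilde G$, the second statement of Lemma~\ref{lem:PhiW} with this $y$ and $x=-\widetilde G$ gives
\[
-\Phi_W'(y)\widetilde G\le\Phi_W(z-G)-\Phi_W(y)\le-\Phi_W'(y)\widetilde G+\frac{\Lambda}{2W}\widetilde G^2 .
\]
Your estimates of $\bbE[\widetilde G]$ and $\bbE[\widetilde G^2]$ then yield
\[
\bigl|\bbE_s[\Phi_W(z-G)]-\Phi_W(z-nJ_g^*)\bigr|\le\Lambda\spn(v_g^*)+\frac{\Lambda}{W}\bigl(\spn(v_g^*)^2+H(\spn(v_g^*)+2)^2\bigr).
\]
Combined with your reward and rounding estimates, this is the paper's two-sided bound (Lemma~\ref{lem:J-V}):
\[
\bigl|nJ_r^*+\Phi_W(z)-\Phi_W(z-nJ_g^*)-V_{\tilde r,h}^{\pi^*}(s,z)\bigr|\le C .
\]
Item 1 then follows from the triangle inequality, and item 2 from $\Phi_W(z)\ge\Phi_W(z-nJ_g^*)$.

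\textbf{A secondary point.} Your justification that the $z$-path never meets the boundary of $\calZ$ is wrong for $z$ within about $H$ of $2T$: for $z=2T$ and $g=-1$ the rounding error is $1$, not $\Delta/2$. The paper's proof makes the same assumption silently. It is harmless downstream, because optimism and Step~2 only use the lemma at safety states reachable within one episode from visited ones, and all of these satisfy $|z|\le T(1+\Delta/2)$.
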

Each property plays a crucial role in establishing Steps 1 and 2, respectively. In particular, the first statement ensures that the benchmark value function appearing in $\RegAug(T)$ has a uniformly bounded span, which is essential for controlling the regret in the augmented MDP. Moreover, the second statement provides a bridge between $\RegAug(T)$ and $\Regret(T)$, since $V_{\tilde r,1}^{\pi^*}$ and $J_r^*$ serve as the respective benchmarks for these two regret metrics. Given this lemma, we are now ready to proceed through each step.

\paragraph{Step 1: Analysis of the Augmented MDP}
To bound $\RegAug(T)$, we first introduce the following decomposition:
\begin{align*}
\RegAug(T)
&\leq \sum_{k=1}^K \left(V_{\tilde r,1}^{\pi^*}(s_1^k,z_1^k) - V_{k,1}(s_1^k,z_1^k)\right) \\
&+ \sum_{k=1}^K\sum_{h=1}^H\left(P(\cdot|s_h^k,a_h^k)^\top V_{k,h+1}(\cdot,z_{h+1}^k) - V_{k,h+1}(s_{h+1}^k,z_{h+1}^k)\right) \\
&+ \sum_{k=1}^K\sum_{h=1}^H \frac{2\beta}{\sqrt{N_k(s_h^k,a_h^k)\vee 1}}.
\end{align*}
Here, the second and third summations on the right-hand side can be bounded by $\tbigO(C\sqrt{T})$ and $\tbigO(\beta\sqrt{SAT})$, respectively, using standard arguments. The first summation is nonpositive by the following optimism lemma.
\begin{lemma}[Optimism]\label{lem:optimism}
Let $\delta\in(0,1)$. With probability at least $1-\delta$, simultaneously for all $(k,h,s,z) \in [K]\times [H]\times \calS \times\calZ$, we have $V_{\tilde r, h}^{\pi^*}(s,z) \leq V_{k,h}(s,z)$.
\end{lemma}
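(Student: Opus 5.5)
We prove the claim by backward induction on $h$, from $h=H+1$ down to $h=1$, for a fixed episode $k$. The base case is immediate: $V_{k,H+1}\equiv 0 = V^{\pi^*}_{\tilde r,H+1}$. The comparator $V^{\pi^*}_{\tilde r,h}$ is the $H$-step value of the stationary policy $\pi^*$, which depends only on $s$, run on the augmented MDP. It therefore satisfies
\[
V^{\pi^*}_{\tilde r,h}(s,z)=\textstyle\sum_a \pi^*(a\mid s)\bigl(\tilde r(s,z,a)+P(\cdot\mid s,a)^\top V^{\pi^*}_{\tilde r,h+1}(\cdot,\psi(s,z,a))\bigr).
\]
Assume $V^{\pi^*}_{\tilde r,h+1}\le V_{k,h+1}$ pointwise. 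For each $(s,a,z)$ we want
\[
Q_{k,h}(s,a,z)\ \ge\ \tilde r(s,z,a)+P(\cdot\mid s,a)^\top V^{\pi^*}_{\tilde r,h+1}(\cdot,\psi(s,z,a)).
\]
We obtain it by writing
\[
\hat P_k^\top V_{k,h+1}\ \ge\ \hat P_k^\top V^{\pi^*}_{\tilde r,h+1}\ =\ P^\top V^{\pi^*}_{\tilde r,h+1}+(\hat P_k-P)^\top V^{\pi^*}_{\tilde r,h+1}.
\]
The first inequality uses that $\hat P_k$ is a probability vector together with the induction hypothesis. It remains to show that the bonus $\beta/\sqrt{N_k(s,a)\vee1}$ dominates $|(\hat P_k-P)^\top V^{\pi^*}_{\tilde r,h+1}(\cdot,z')|$ with $z'=\psi(s,z,a)$. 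Taking the maximum over $a$ then gives $\widetilde V_{k,h}(s,z)\ge V^{\pi^*}_{\tilde r,h}(s,z)$, since a max dominates any average over $a$.

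\textbf{Concentration step.} By \Cref{lem:span}(1), $\spn(V^{\pi^*}_{\tilde r,h+1}(\cdot,z'))\le 2C$. Because $\hat P_k-P$ sums to zero, we may subtract $\min_{s'}V^{\pi^*}_{\tilde r,h+1}(s',z')$ from the vector and get
\[
|(\hat P_k-P)^\top V|\ \le\ 2C\,\|\hat P_k(\cdot\mid s,a)-P(\cdot\mid s,a)\|_1.
\]
The difficulty is that $z'$ ranges over a huge set $\calZ$, and the relevant vector also depends on $h$. Applying Hoeffding to each fixed vector would therefore require a union bound over $|\calZ|H$ vectors. To avoid this, we use an $\ell_1$ bound that is uniform over all vectors. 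Apply Hoeffding (together with a union bound over sample sizes via a standard martingale/stacking argument) to each coordinate $\hat P_k(s'\mid s,a)$. Then, with probability at least $1-\delta$, for all $(s,a,s')$ and every possible count $n\le T$,
\[
|\hat P_k(s'\mid s,a)-P(s'\mid s,a)|\le \sqrt{\log(2TS^2A/\delta)/(2n)}.
\]
Summing over $s'$ gives an $\ell_1$ error of at most $S\sqrt{\log(2TS^2A/\delta)/(2N_k(s,a))}$. Multiplying by $2C$ yields exactly $\beta/\sqrt{N_k}$ with the $\beta$ of \eqref{eq:alg param}. When $N_k(s,a)=0$, we use $\|\hat P_k-P\|_1\le 2$, and $\beta\ge 2C$ covers this case. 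This explains the factor $S$ (rather than $\sqrt S$) in $\beta$, and the log argument matches the choice of $\beta$.

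\textbf{Clipping step.} We must check that $V_{k,h}=\widetilde V_{k,h}\wedge(\min_{s'}\widetilde V_{k,h}(s',z)+2C)$ stays above $V^{\pi^*}_{\tilde r,h}$. If no clipping occurs, this is immediate. Otherwise, letting $s_0$ be a minimizer of $\widetilde V_{k,h}(\cdot,z)$,
\[
V_{k,h}(s,z)=\widetilde V_{k,h}(s_0,z)+2C\ \ge\ V^{\pi^*}_{\tilde r,h}(s_0,z)+2C\ \ge\ V^{\pi^*}_{\tilde r,h}(s,z),
\]
where the last inequality is \Cref{lem:span}(1) applied for the fixed $z$. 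This is why clipping happens per safety level $z$.

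The main obstacle is the concentration step. It must be uniform over the data-dependent $(k,h,z)$ and the random counts, so we argue through the uniform per-coordinate event. Since that event does not depend on the value vector, a single event of probability $1-\delta$ covers all $(k,h,s,z)$ at once.
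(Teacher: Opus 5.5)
Your proof is correct and follows the paper's skeleton: backward induction for fixed $k$, the max over $a$ dominating the $\pi^*$-average, clipping handled through \Cref{lem:span}(1) at each fixed $z$, and the same value-independent per-coordinate Hoeffding event summed into an $\ell_1$ bound. The real difference is which vector you concentrate. You push the induction hypothesis through the nonnegative $\hat P_k$ and control $(\hat P_k-P)^\top V^{\pi^*}_{\tilde r,h+1}(\cdot,z')$ for the deterministic comparator, taking its span from \Cref{lem:span}(1). The paper instead writes $Q_{k,h}-Q^{\pi^*}_{\tilde r,h}=(\hat P_k-P)^\top V_{k,h+1}+P^\top(V_{k,h+1}-V^{\pi^*}_{\tilde r,h+1})+\beta/\sqrt{N_k\vee1}$. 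It pushes the hypothesis through $P$ and concentrates the data-dependent estimate $V_{k,h+1}(\cdot,z')$, whose span is at most $2C$ because of the clipping step itself (\Cref{lem:concentration}).

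With an $\ell_1$ event that is uniform over vectors, the two routes are interchangeable. Your route decouples concentration from clipping. Since the comparator is fixed in advance, it would also allow a scalar Hoeffding bound with a union bound over $(n,s,a,h,z')$. That union bound costs only a logarithmic factor, since $|\calZ|\le 4T^{3/2}+1$, and it would give an $S$-free width that suffices for optimism. So the size of $\calZ$ is not really the obstacle you describe. The paper's route proves exactly the inequality $(\hat P_k-P)^\top V_{k,h+1}\le\beta/\sqrt{N_k\vee1}$ that is reused in the regret decomposition (\Cref{lem:tilde Regret}). There the vector is data-dependent, so the uniform $\ell_1$ bound, with its factor $S$, is genuinely needed, and one event serves both steps.

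One small slip: under your inequality $|(\hat P_k-P)^\top V|\le 2C\|\hat P_k-P\|_1$, the case $N_k(s,a)=0$ needs $\beta\ge 4C$, not $2C$. This holds for $S\ge 2$, and $S=1$ is trivial since then $\hat P_k=P$. Alternatively, use the sharper bound $|(p-q)^\top v|\le\tfrac12\|p-q\|_1\spn(v)$, for which $\beta\ge 2C$ suffices.
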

The first statement of \Cref{lem:span} plays a crucial role in establishing this lemma. In particular, a standard UCB analysis ensures that $V_{\tilde r,h}^{\pi^*}(s,z) \leq \widetilde V_{k,h}(s,z)$, where $\widetilde V_{k,h}(s,z)$ denotes the unclipped value function estimate. The first statement of \Cref{lem:span} then ensures that this optimism property is preserved even after clipping. As a consequence, combining these bounds yields
\begin{equation}\label{eq:RegAug}
    \RegAug(T) \leq \tbigO(\sqrt{T}).
\end{equation}

\paragraph{Step 2: Transfer to the Original CMDP}
Next, based on the bound on $\RegAug(T)$, we establish a bound on $\Regret(T) + \Phi_W(z_{T+1})$. To this end, by the telescoping structure induced by the design of $\tilde r$, we have
\begin{align*}
\RegAug(T)
&= \Regret(T) + \Phi_W(z_{T+1}) + \sum_{k=1}^K \left(V_{\tilde r,1}^{\pi^*}(s_1^k,z_1^k) - HJ_r^*\right).
\end{align*}
Here, the summation $\sum_{k=1}^K \left(V_{\tilde r,1}^{\pi^*}(s_1^k,z_1^k) - HJ_r^*\right)$ on the right-hand side represents the conversion error between the augmented MDP and the original CMDP. By the second statement of \Cref{lem:span}, this error term can be lower bounded by $-CK$. Since $K=\sqrt{T}$, combining this with the bound on $\RegAug(T) \leq \tbigO(\sqrt{T})$ yields
\begin{align}\label{eq:Regret + PhiW main}
\Regret(T) + \Phi_W(z_{T+1}) \leq \tbigO(\sqrt{T}).
\end{align}

\paragraph{Step 3: Obtaining $\Regret(T)$ and $\Violation(T)$}
To conclude the analysis, we derive separate bounds on regret and constraint violation using \eqref{eq:Regret + PhiW main}. In particular, since $\Phi_W$ is nonnegative, the bound on $\Regret(T)$ follows directly. We therefore focus on obtaining a bound on $\Violation(T)$.

This step, however, is nontrivial for the Huber potential. For comparison, in the case of a quadratic or exponential potential, it suffices to na\"ively lower bound $\Regret(T)\geq -T$. Combining this with \eqref{eq:Regret + PhiW main} then yields $z_{T+1}\leq \tbigO(\sqrt{T})$, which concludes the analysis since $z_{T+1}$ approximates $\Violation(T)$ up to the discretization error $\Delta T$. In contrast, since the Huber potential $\Phi_W(x)$ grows linearly when $x\geq W$, the same argument no longer yields the desired bound.

To overcome this issue, our strategy is as follows. Instead of using the na\"ive lower bound $\Regret(T)\geq -T$, we exploit a sharper lower bound derived from strong duality; see Appendix~\ref{app:strong duality} for details. In particular,
\begin{align*}
-\lambda^* z_{T+1} \lesssim \Regret(T),
\end{align*}
where $\lambda^*$ denotes the optimal dual variable associated with the original CMDP, which is guaranteed to satisfy $\lambda^* \leq 1/\gamma$, and $\lesssim$ denotes an informal inequality used to emphasize intuition. Combining this lower bound with \eqref{eq:Regret + PhiW main} yields
\begin{align*}
\Phi_W(z_{T+1}) - \lambda^* z_{T+1} \leq \tbigO(\sqrt{T}).
\end{align*}
To derive a bound on $z_{T+1}$ from this relation, we use the third statement of \Cref{lem:PhiW}, which states that $(\Lambda-\lambda^*)[z_{T+1}]_+ \leq \Phi_W(z_{T+1}) - \lambda^* z_{T+1} + \Lambda W/2$. Combining this with the facts that $\Lambda-\lambda^* \geq 1/\gamma$ and $W = \sqrt{T}$ yields
\begin{align}\label{eq:zT main}
z_{T+1} \leq \tbigO(\sqrt{T}).
\end{align}
Finally, again, since $z_{T+1}$ approximates $\Violation(T)$ up to the discretization error, we obtain $\Violation(T) \leq \tbigO(\sqrt{T})$.

\section{Numerical Experiments}\label{sec:experiments}

We evaluate \Cref{alg:main} on a tabular CMDP obtained by modifying the CMDP of \citet{yu2026learning}. In the early states of the chain, the reward is given to the action that lowers the constraint value, so the policy that maximizes the reward violates the constraint and the algorithm must trade off the reward against the constraint. We compare \Cref{alg:main} with four baselines, (i) PD-LSCVI-UCB \citep{yu2026learning}, (ii) Algorithm 3 of \citet{chen2022learning}, (iii) Algorithm 2 of \citet{ghosh2023achieving}, and (iv) a random policy. We repeat $10$ simulations with $T = 32{,}000$ steps and report the cumulative regret and constraint violation, which are summarized in \Cref{fig:experiments}. \Cref{alg:main} achieves sublinear regret and constraint violation, whereas the constraint violation of Algorithm 2 of \citet{ghosh2023achieving} grows linearly. Compared with PD-LSCVI-UCB, \Cref{alg:main} achieves lower regret and constraint violation. Algorithm 3 of \citet{chen2022learning} attains negative regret at the cost of a constraint violation that grows linearly.

We also examine the sensitivity of each algorithm to the scale of its bonus term, and find that \Cref{alg:main} achieves sublinear regret and constraint violation for every scale $c \leq 1$, whereas every baseline incurs linear regret or constraint violation for some $c$. The results are shown in \Cref{fig:sweep}. Additional experiments and experimental details are deferred to \Cref{app:experiments}.

\begin{figure}[t]
    \centering
    \includegraphics[width=0.89\linewidth]{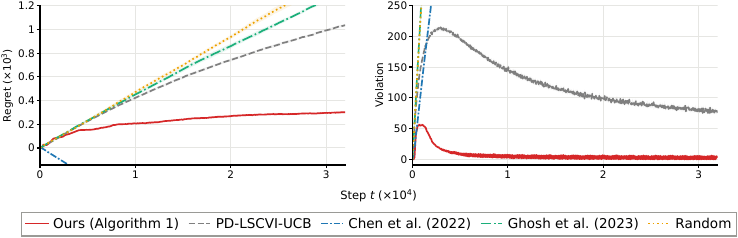}
    \vspace{-6pt}
    \caption{Plots of regret and constraint violation for $T = 32{,}000$ steps. Each plot represents the average over $10$ trials, and shaded regions indicate $95\%$ confidence intervals.}
    \label{fig:experiments}
\end{figure}

\section{Conclusion}
\label{sec:conclusion}

We studied infinite-horizon average-reward CMDPs under the weakly communicating assumption. Our algorithm, SA-CVI-UCB, is computationally efficient and achieves rate-optimal $\tbigO(T^{1/2})$ regret and constraint violation in the tabular setting. The main idea is to incorporate cumulative constraint violation into the state space and design the reshaped reward using a Huber potential. This formulation keeps the reshaped reward bounded while avoiding the additional reward variation caused by changing dual multipliers. The algorithm combines this formulation with finite-horizon approximation, optimistic value iteration, and value function clipping. It remains open whether the same computational efficiency and rate-optimal guarantees can be achieved for weakly communicating average-reward linear CMDPs. We refer the reader to \Cref{app:discussion} for a detailed discussion of this extension and the limitations of the primal-dual approach.

\bibliography{ref}
\bibliographystyle{plainnat}

\newpage
\appendix
\newpage

\section{Related Work}\label{app:related work}
\begin{table}[htbp]
\centering
\caption{Comparison of guarantees for learning infinite-horizon average-reward CMDPs with unknown transitions after $T$ steps. Problem-dependent factors are suppressed. Bounds hold with high probability unless marked as expected or Bayesian guarantees. WC denotes weak communication; the WC results assume an optimal stationary policy with state-independent gains. Additional assumptions and differences in performance criteria are specified below.}
\label{tab:comparison}
\small
\setlength{\tabcolsep}{4pt}
\begin{tabular}{llllll}
\toprule
Algorithm & Setting & Assumption & Regret & Violation & Efficient \\
\midrule
\citet{wei2022provably}$^{a}$ & Tabular & Ergodic & $\tbigO(T^{5/6})$ & $0$ & \yes \\
\citet{agarwal2022concave}$^{a,b}$ & Tabular & Ergodic & $\tbigO(\sqrt{T})$ & $0$ & \yes \\
\citet{agarwal2022regret}$^{c}$ & Tabular & Ergodic & $\tbigO(\sqrt{T})$ & $\tbigO(\sqrt{T})$ & \yes \\
\citet{chen2022learning} (Alg.~1) & Tabular & Ergodic & $\tbigO(\sqrt{T})$ & $\tbigO(1)$ & \yes \\
\citet{singh2023learning} & Tabular & Unichain$^{d}$ & $\tbigO(T^{2/3})$ & $\tbigO(T^{2/3})$ & \yes$^{e}$ \\
\citet{ghosh2023achieving} (Alg.~3)$^{f}$ & Linear & Ergodic & $\tbigO(\sqrt{T})$ & $\tbigO(\sqrt{T})$ & \yes \\
\citet{provodin2024efficient}$^{c,g}$ & Tabular & Communicating & $\tbigO(\sqrt{T})$ & $\tbigO(\sqrt{T})$ & \yes \\
\midrule
\citet{bai2024learning}$^{h}$ & General & Ergodic & $\tbigO(T^{4/5})$ & $\tbigO(T^{4/5})$ & \yes \\
\citet{satheesh2026regret}$^{i}$ & General & Unichain & $\tbigO(\sqrt{T})$ & $\tbigO(\sqrt{T})$ & \yes \\
\midrule
\citet{chen2022learning} (Alg.~3) & Tabular & WC & $\tbigO(T^{2/3})$ & $\tbigO(T^{2/3})$ & \yes \\
\citet{chen2022learning} (Alg.~4) & Tabular & WC & $\tbigO(\sqrt{T})$ & $\tbigO(\sqrt{T})$ & \no \\
\citet{ghosh2023achieving} (Alg.~1)$^{j}$ & Linear & WC & $\tbigO(\sqrt{T})$ & $\tbigO(\sqrt{T})$ & \no \\
\citet{ghosh2023achieving} (Alg.~2) & Linear & WC & $\tbigO(T^{3/4})$ & $\tbigO(T^{3/4})$ & \yes \\
\citet{yu2026learning} & Linear & WC & $\tbigO(T^{2/3})$ & $\tbigO(T^{2/3})$ & \yes \\
\midrule
\textbf{Ours} (\Cref{alg:main}) & Tabular & WC & $\tbigO(\sqrt{T})$ & $\tbigO(\sqrt{T})$ & \yes \\
\bottomrule
\end{tabular}

\vspace{3pt}
\begin{minipage}{\linewidth}
\footnotesize
$^{a}$Zero aggregate violation holds under the stated sufficiently-large-$T$ conditions. For \citet{wei2022provably}, both bounds hold in expectation, and zero violation means nonpositive expected signed cumulative violation, not safety at every step. $^{b}$The row reports UC-CURL, which covers Lipschitz concave objectives and convex constraints, including standard CMDPs, under a Slater condition. $^{c}$Bayesian guarantees, averaging over the transition prior and interaction randomness, under the stated strict-feasibility and horizon conditions. $^{d}$Bounds hold in expectation. In addition to unichain structure, the analysis uses uniform geometric convergence and a finite worst-policy state-to-state hitting time. $^{e}$The optimistic planning problem admits an LP reformulation using state-action-next-state occupancy measures. $^{f}$Requires uniform mixing and a uniformly positive definite stationary feature Gram matrix under every stationary policy (Assumptions~4 and~5). $^{g}$Requires a prior supported on models with bounded diameter, a strictly feasible policy inducing an irreducible and aperiodic chain, and sufficiently large $T$. $^{h}$Expected trajectory guarantees under general policy parameterization, with smoothness and Fisher non-degeneracy assumptions. The displayed rates omit the policy approximation term. $^{i}$Expected guarantees under general policy parameterization, with bounded Lipschitz scores, Fisher non-degeneracy, critic approximation and uniform hitting-time assumptions. The displayed rates omit the policy and critic approximation terms. The violation bound uses average costs of policy iterates rather than costs realized along the trajectory. $^{j}$Requires the optimal policy to belong to a class of soft-max policies.
\end{minipage}
\end{table}

\paragraph{Average-Reward CMDPs}
Average-reward CMDPs model continuing tasks in which an agent maximizes long-term average reward subject to long-term average constraints~\citep{altman2021constrained}. Model-based methods establish finite-time regret guarantees for average-reward CMDPs~\citep{singh2023learning,chen2022learning}. Model-free algorithms also achieve sublinear expected regret and nonpositive expected cumulative constraint violation for sufficiently large $T$~\citep{wei2022provably}. Under the ergodic assumption, \citet{chen2022learning} obtain $\tbigO(\sqrt{T})$ regret and $\tbigO(1)$ constraint violation. Under the weakly communicating assumption, their efficient finite-horizon approximation gives $\tbigO(T^{2/3})$ bounds for both quantities. They also achieve $\tbigO(\sqrt{T})$ bounds by imposing span constraints on the finite-horizon value functions induced by the occupancy measure, but the resulting feasible region is generally nonconvex and no efficient solution method is known. For linear CMDPs, the efficient finite-horizon algorithm of \citet{ghosh2023achieving} obtains $\tbigO(T^{3/4})$ bounds. \citet{yu2026learning} establish strong duality for weakly communicating CMDPs and improve these bounds to $\tbigO(T^{2/3})$ through primal-dual clipped value iteration. Posterior-sampling methods also provide Bayesian regret guarantees for communicating CMDPs~\citep{provodin2024efficient}. Under the ergodic assumption, general policy parameterizations have been studied through primal-dual policy gradient~\citep{bai2024learning} and actor-critic methods~\citep{xu2025global}. \citet{satheesh2026regret} analyze actor-critic methods under the unichain assumption. Our work obtains computational efficiency together with $\tbigO(\sqrt{T})$ regret and constraint violation with high probability for weakly communicating tabular CMDPs.

Our algorithm also builds on techniques for unconstrained average-reward MDPs. In the tabular setting, optimistic planning gives regret guarantees for communicating and weakly communicating MDPs~\citep{jaksch2010near,bartlett2009regal,zhang2019regret,boone2024achieving}. \citet{fruit2018efficient} obtain efficient exploration under the weakly communicating assumption by controlling the bias span. Model-free approaches combine optimistic Q-learning with discounted-reward approximation~\citep{wei2020model,zhang2023sharper}. For linear MDPs, \citet{wei2021learning} develop efficient value-iteration algorithms based on finite-horizon approximation. \citet{hong2025reinforcement} combine discounted-reward approximation with value function clipping to achieve $\tbigO(\sqrt{T})$ regret. \citet{hong2025a} further remove the dependence of the computational complexity on the state-space size through efficient clipping and deviation-controlled value iteration. We adapt optimistic value iteration and value function clipping to a state-augmented MDP. The augmented formulation incorporates cumulative constraint violation into the state and avoids the additional reward variation caused by changing dual multipliers, which complicates the primal-dual extension of these techniques to CMDPs. More broadly, unconstrained average-reward MDPs have been studied extensively, with computationally efficient algorithms achieving minimax-optimal regret up to logarithmic factors in the tabular setting. We refer the reader to \citet{boone2024achieving} and the references therein for further discussion of this literature.

\paragraph{CMDPs via State Augmentation}
State augmentation allows policies to use constraint information that is not contained in the original state. One approach incorporates Lagrange multipliers into the state space. \citet{calvo2024state} develop this approach for average-reward CMDPs, learning policies conditioned on the multipliers and updating the multipliers during execution to obtain asymptotic feasibility and near-optimality in expectation. This formulation retains dual updates and has also been extended to distributed multi-agent problems~\citep{agorio2024multiagent,amayacorredor2026scalable}. Another approach augments the state with a budget or cumulative cost. For expected discounted-cost constraints, \citet{carrara2019budgeted} augment the state and action with a budget and establish a budgeted Bellman optimality equation. For episodic RL with knapsack constraints and finitely supported costs on a discrete grid, \citet{chen2021factored} track the remaining budgets and exploit the resulting factored transition structure to obtain $\tbigO(\sqrt{T})$ regret with near-optimal dependence on $S$, $A$, $H$, and $T$. The computational cost depends exponentially on the number of constraints.

Budget augmentation has also been studied under several notions of safety. Saut\'e RL~\citep{sootla2022saute} incorporates a remaining safety budget into the state and reshapes the objective to target almost-sure constraint satisfaction. Under the stated regularity assumptions, its optimal value functions converge to those of the infinite-penalty formulation as the violation penalty tends to infinity. Simmer~\citep{sootla2022enhancing} studies how to schedule the safety budget during training under expected-cost and almost-sure constraints, with empirical improvements in safe exploration. For terminating MDPs with binary damage, \citet{castellano2022almost} characterize the minimum safety budget. They give sample-complexity guarantees for learning this budget from a generative model, assuming a positive lower bound on nonzero probabilities in the joint transition and damage kernel. For anytime constraints, which must hold at every step almost surely, \citet{mcmahan2024anytime} use cumulative-cost augmentation to derive planning and learning algorithms, together with hardness and approximation results. Related constructions support equilibrium computation in anytime-constrained Markov games~\citep{mcmahan2025equilibria}. For probabilistic state-avoidance constraints, \citet{hameldelecourt2025probabilistic} construct a shield on an augmented MDP that guarantees safety and preserves the constrained optimum when the safety dynamics are known. Using value-demand augmentation, \citet{mcmahan2024deterministic} compute feasible policies that are near-optimal among deterministic policies under a time-space-recursive constraint. \citet{mcmahan2025approximability} use artificial budget variables to obtain bicriteria approximation guarantees that allow a small constraint violation. These works study different feasibility criteria or planning objectives from the regret and cumulative violation considered here.

\paragraph{Risk Criteria via State Augmentation}
State augmentation also helps handle nonlinear criteria applied to cumulative rewards or costs. \citet{xu2011probabilistic} track accumulated reward in probabilistic-goal and chance-constrained MDPs. For conditional value-at-risk constraints, \citet{chow2018risk} develop actor-critic methods on an augmented MDP and prove almost-sure convergence to locally optimal policies under their sampling and approximation assumptions. \citet{yang2024risk} instead condition policies on the quantile level of accumulated cost. For episodic CMDPs with an entropic-risk constraint, \citet{ghosh2025risk} introduce a continuous budget variable and use a primal-dual algorithm to obtain $\tbigO(K^{3/4})$ regret and constraint violation over $K$ episodes with high probability. \citet{wang2025a} reduce finite-horizon learning with optimized certainty equivalent objectives to risk-neutral learning in an augmented MDP.

\paragraph{Reward Penalties on Augmented States}
Closely related to our reward design, \citet{wang2023ccpo} combine safety-state augmentation with state-dependent reward penalties and an adaptive penalty multiplier in the discounted setting. \citet{jiang2024reward} combine cumulative-cost augmentation with trajectory-dependent reward penalties in finite-horizon problems. They give penalty conditions under which optimal policies of the augmented MDP satisfy expected-cost or chance constraints, and relate a modified penalty to excess-loss constraints. The discounted reward adjustments in \citet{jiang2024reward} also admit a telescoping representation. We study continuing, weakly communicating average-reward CMDPs and track cumulative constraint violation throughout learning. Our reshaped reward uses a Huber potential whose bounded derivative keeps the per-step reward adjustment bounded. We use this property and a quadratic remainder bound to control the augmented finite-horizon value functions under an optimal policy of the original CMDP. In particular, we bound their span across the original states uniformly over the added state and control the shortfall from the original average-reward benchmark (\Cref{lem:span}). These bounds allow us to apply finite-horizon approximation and optimistic value iteration with clipping. Together with a strong-duality argument, this yields a computationally efficient algorithm with $\tbigO(\sqrt{T})$ regret and constraint violation with high probability.

\section{Strong Duality}\label{app:strong duality}
In this section, we introduce several lemmas that are associated with the strong duality of weakly communicating average-reward CMDPs~\citep{yu2026learning}, which will be used for Step 3 of \Cref{sec:analysis}. We have the following results.
\begin{lemma}[Theorem 1 in \citet{yu2026learning}]\label{thm:strong duality}
    Let $\calM = (\calS, \calA, P, r,g)$ be a weakly communicating average-reward CMDP with $S, |\calA| < \infty$, and let $s_1 \in \calS$ be any initial state. Suppose that the CMDP $\calM$ is feasible. Then the following properties hold:
    \begin{enumerate}
        \item There exists $\lambda^*\geq 0$ such that $D(\lambda^*) = \inf_{\lambda\geq 0}D(\lambda)$.
        \item Strong duality, i.e., $J_r^*(s_1) = D(\lambda^*)$.
    \end{enumerate}
\end{lemma}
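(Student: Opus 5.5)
The plan is to pass to the linear-programming formulation over stationary state--action frequencies and derive both claims from finite-dimensional LP duality together with the Slater condition (\Cref{ass:slater}). Throughout, $D(\lambda)=\sup_{\pi\in\Pi} J^\pi_{r+\lambda g}(s_1)$ is the Lagrangian dual function.

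\textbf{Step 1 (weak duality and monotone structure).} For any feasible $\pi$ and any $\lambda\ge 0$, we have $J_r^\pi(s_1)\le J_r^\pi(s_1)+\lambda J_g^\pi(s_1)=J^\pi_{r+\lambda g}(s_1)\le D(\lambda)$. Hence $J_r^*(s_1)\le \inf_{\lambda\ge0}D(\lambda)$. It remains to show the reverse inequality and that the infimum is attained.

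\textbf{Step 2 (LP representation).} Let $\calS=C\cup\mathcal{T}$ be the weakly communicating decomposition, where $C$ is the closed communicating class and $\mathcal{T}$ consists of states transient under every policy. Define the polytope
\[
\mathcal{Q}=\Bigl\{q\ge 0 \text{ on } C\times\calA:\ \textstyle\sum_{s,a}q(s,a)=1,\ \sum_a q(s',a)=\sum_{s,a}P(s'\mid s,a)q(s,a)\ \forall s'\Bigr\}.
\]
By the standard Ces\`aro-limit argument, every stationary $\pi$ has a long-run state--action frequency from $s_1$ lying in $\mathcal{Q}$. This frequency exists because the chain is finite, and it is supported on $C$ because $\mathcal{T}$ is transient. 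Consequently $J^\pi_f(s_1)=f^\top q$ for some $q\in\mathcal{Q}$. It follows that $D(\lambda)\le \max_{q\in\mathcal{Q}}(r+\lambda g)^\top q$. The reverse inequality follows from the fact that the unconstrained weakly communicating MDP has a state-independent optimal gain equal to this LP value, which is attained by a stationary policy. So $D(\lambda)$ equals the LP value.

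\textbf{Step 3 (realizability, the main obstacle).} I need to show that the value of the LP
\[
\max\{r^\top q:\ q\in\mathcal{Q},\ g^\top q\ge 0\}
\]
is approached by stationary policies that are feasible from $s_1$. The difficulty is that a general $q\in\mathcal{Q}$ may be spread over several recurrent classes, and a stationary policy started at $s_1$ need not realize the prescribed mixture weights.

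I will bypass this by perturbation. Let $\bar q$ be the frequency of the Slater policy, so $g^\top\bar q\ge\gamma$. Let $u\in\mathcal{Q}$ be the frequency vector obtained from a policy that puts positive mass on every action at every state of $C$; I take $u$ to have full support on $C\times\calA$, which holds because $C$ is communicating. Given an LP-optimal $q^*$, set
\[
q_\epsilon=(1-2\epsilon)q^*+\epsilon\bar q+\epsilon u .
\]
If $u$ violates the constraint, I shrink its weight relative to $\bar q$ so that the $\gamma$ margin absorbs it. Then $q_\epsilon\in\mathcal{Q}$, $q_\epsilon$ has full support, and $g^\top q_\epsilon\ge \epsilon(\gamma-O(\epsilon'))\ge0$.

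Now define the policy $\pi_\epsilon(a\mid s)=q_\epsilon(s,a)/\sum_{a'}q_\epsilon(s,a')$ on $C$, with an arbitrary choice on $\mathcal{T}$. Because every action has positive probability and $C$ communicates, the induced chain on $C$ is irreducible. Its unique stationary distribution is therefore $q_\epsilon$, and its gains are state-independent and equal to $r^\top q_\epsilon$ and $g^\top q_\epsilon$, including from $s_1$, since $\mathcal{T}$ is left almost surely. Hence $J_r^*(s_1)\ge r^\top q_\epsilon\to r^\top q^*$.

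\textbf{Step 4 (conclusion).} Slater's condition holds for the finite LP, as witnessed by $\bar q$. Finite-dimensional LP (or convex) duality therefore gives a multiplier $\lambda^*\ge0$ with
\[
r^\top q^*=\max_{q\in\mathcal{Q}}(r+\lambda^*g)^\top q=\inf_{\lambda\ge0}\max_{q\in\mathcal{Q}}(r+\lambda g)^\top q .
\]
Combined with Step 2, this gives $D(\lambda^*)=\inf_{\lambda\ge 0} D(\lambda)$, which is claim~1. Combined with Steps 1 and 3, it gives $J_r^*(s_1)=r^\top q^*=D(\lambda^*)$, which is claim~2.

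The standard Slater bound $\lambda^*\le (r^\top q^*-r^\top\bar q)/\gamma\le 1/\gamma$ also falls out of this argument; it is the bound used in Step 3 of \Cref{sec:analysis}.
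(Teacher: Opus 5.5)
The paper gives no proof of this lemma; it quotes it from \citet{yu2026learning}, so there is no in-paper argument to compare against, and your proposal has to be judged on its own. Your LP route is sound as far as it goes. Step~2 identifies $D(\lambda)$ with $\max_{q\in\mathcal{Q}}(r+\lambda g)^\top q$. The full-support perturbation in Step~3 is legitimate: $u$ has full support and $\pi_\epsilon$ is irreducible on $C$ because $C$ must be closed under every action, since otherwise some $\mathcal{T}$-state would be recurrent under the fully randomized policy. Combining these with duality in Step~4 gives both claims, and the bound $\lambda^*\le 1/\gamma$ falls out as you say.

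\textbf{The gap is a hypothesis mismatch.} The lemma assumes only that the CMDP is feasible, but you use the Slater condition (Assumption~\ref{ass:slater}) twice. In Step~3 the margin $\gamma$ absorbs the cost of mixing in $u$, and in Step~4 it serves as the constraint qualification. This is harmless for the paper, because Slater is a standing assumption there and the downstream uses (Lemmas~\ref{lem:optimal dual variable bound} and~\ref{lem:regret lower bound}) rely on it. Still, what you prove is weaker than the statement as written.

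Both uses can be removed.
\begin{itemize}
\item \textbf{Step~4.} Finite LP duality needs no Slater point. The constrained LP over the polytope $\mathcal{Q}$ is feasible and bounded, so its LP dual has an optimal solution, which supplies $\lambda^*\ge 0$. Weak duality for the inner LP over $\mathcal{Q}$ then gives $\max_{q\in\mathcal{Q}}(r+\lambda^* g)^\top q=r^\top q^*$.
\item \textbf{Step~3.} You only use $\bar q$ as some element of $\mathcal{Q}$ with $g^\top\bar q>0$; it need not be the frequency of a policy started at $s_1$. If $\max_{q\in\mathcal{Q}}g^\top q>0$, replace $\bar q$ by a maximizer and run your perturbation unchanged.
\item \textbf{Degenerate case $\max_{q\in\mathcal{Q}}g^\top q=0$.} Here the feasible set $\{q\in\mathcal{Q}:g^\top q\ge 0\}$ is the face of $\mathcal{Q}$ maximizing $g$, so you may take $q^*$ to be a vertex of $\mathcal{Q}$.
\begin{itemize}
\item Such a vertex is the stationary measure of a single recurrent class $R$ of a deterministic policy. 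Its support $U$ is closed under the actions it uses, so the corresponding constraint columns span at most $|U|$ dimensions. A basic solution therefore has exactly one positive action per state of $U$, and a measure charging two classes is a proper mixture of points of $\mathcal{Q}$.
\item Follow that deterministic policy on $R$. On $C\setminus R$, choose actions that move strictly closer to $R$ with positive probability, which is possible because $C$ is communicating.
\item The resulting stationary policy has $R$ as its unique recurrent class. It therefore realizes $q^*$ exactly from $s_1$ with $J_g=g^\top q^*=0$, which gives $J_r^*(s_1)\ge r^\top q^*$ without any perturbation.
\end{itemize}
\end{itemize}
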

By utilizing the strong duality result, we can show the following lemma, which states optimal dual variable is bounded in terms of the Slater constant $\gamma$.
\begin{lemma}[Lemma 3 in \citet{yu2026learning}]\label{lem:optimal dual variable bound}
    Suppose that Slater's condition holds, i.e., there exists a stationary policy $\bar\pi$ such that $J_g^{\bar\pi}(s_1) \geq \gamma$ for some $\gamma > 0$. Let $\lambda^*$ denote the optimal dual variable defined in \Cref{thm:strong duality}. Then $\lambda^* \leq 1/\gamma$.
\end{lemma}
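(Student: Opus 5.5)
This lemma is imported from \citet{yu2026learning}, so the plan is to reproduce the standard Slater-type argument using the strong duality of \Cref{thm:strong duality}. First I fix the definition of the dual function. For $\lambda\ge 0$, $D(\lambda)=\sup_{\pi\in\Pi}\bigl(J_r^\pi(s_1)+\lambda J_g^\pi(s_1)\bigr)$, which is the optimal gain of the unconstrained MDP with composite reward $r+\lambda g$ from $s_1$. Because $J_r^\pi+\lambda J_g^\pi = J_{r+\lambda g}^\pi$ (the limits are linear), $D(\lambda)$ is just this gain. In particular, for every stationary policy $\pi$ and every $\lambda\ge0$, we have $D(\lambda)\ge J_r^\pi(s_1)+\lambda J_g^\pi(s_1)$.

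Next I evaluate this lower bound at $\lambda=\lambda^*$ with the Slater policy $\bar\pi$ from \Cref{ass:slater}. This gives
\[
D(\lambda^*)\ \ge\ J_r^{\bar\pi}(s_1)+\lambda^* J_g^{\bar\pi}(s_1)\ \ge\ 0+\lambda^*\gamma .
\]
The first bound drops nothing, and the second uses $r\ge 0$ (so $J_r^{\bar\pi}\ge0$), $J_g^{\bar\pi}(s_1)\ge\gamma$, and $\lambda^*\ge0$.

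For the upper bound I invoke strong duality from \Cref{thm:strong duality}, which applies because the Slater condition gives feasibility. It yields $D(\lambda^*)=J_r^*(s_1)$. Since $r\in[0,1]$, any gain is at most $1$, so $J_r^*(s_1)\le 1$. Combining, $\lambda^*\gamma\le 1$, that is, $\lambda^*\le 1/\gamma$.

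The only delicate point is that the gains are defined as limits, which may not exist for arbitrary history-dependent policies. This is why I restrict $D$ to stationary policies, for which the Cesàro limits exist in finite MDPs; linearity in the reward then holds. No real obstacle remains beyond matching the definition of $D$ used in \citet{yu2026learning}.
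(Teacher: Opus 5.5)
Your proof is correct and is the standard Slater argument behind this result; the paper gives no proof of its own and only imports the lemma from \citet{yu2026learning}. Strong duality gives $J_r^*(s_1)=D(\lambda^*)\ge J_r^{\bar\pi}(s_1)+\lambda^* J_g^{\bar\pi}(s_1)\ge \lambda^*\gamma$, and $J_r^*(s_1)\le 1$ then yields $\lambda^*\le 1/\gamma$; your remarks on restricting to stationary policies are harmless but unnecessary, since the lower bound only evaluates the Lagrangian at the single stationary policy $\bar\pi$.
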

Given a bounded optimal dual variable $\lambda^*$, which is guaranteed by Lemmas~\ref{thm:strong duality} and~\ref{lem:optimal dual variable bound}, we consider an unconstrained average-reward MDP with respect to the reward function $r+\lambda^* g$. Then, the Bellman optimality condition implies the following: there exists $J_{\lambda^*}^* \in \bbR$ and $v_{\lambda^*}^*: \calS \to \bbR$ such that for all $(s,a) \in \calS\times\calA$,
\begin{align}\label{eq:bellman optimality equation}
    J_{\lambda^*}^* + v_{\lambda^*}^*(s) \geq r(s,a) + \lambda^*g(s,a) + Pv_{\lambda^*}^*(s,a)
\end{align}
Finally, we introduce the following lemma that states when Slater's condition holds, then $\Regret(T)$ can be lower bounded in terms of $\Violation(T)$.
\begin{lemma}[Lemma 7 in \citet{yu2026learning}] \label{lem:regret lower bound}
Let $v_{\lambda^*}^*$ be defined in \eqref{eq:bellman optimality equation}. With probability at least $1-\delta$,
\begin{align*}
    \Regret(T) \geq -\lambda^* \Violation(T) - \spn({v_{\lambda^*}^*})\sqrt{2T\log(1/\delta)} - \spn({v_{\lambda^*}^*}).
\end{align*}    
\end{lemma}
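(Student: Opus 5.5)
The plan is to combine strong duality (\Cref{thm:strong duality}) with the Bellman optimality inequality \eqref{eq:bellman optimality equation} for the composite reward $r+\lambda^* g$, evaluate that inequality along the realized trajectory, and control the resulting terms by telescoping and a martingale concentration bound.

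\textbf{Step 1: identify the benchmark.} Here $D(\lambda)=\sup_{\pi\in\Pi}\{J_r^\pi(s_1)+\lambda J_g^\pi(s_1)\}$ is the Lagrangian dual function. For the weakly communicating MDP with reward $r+\lambda^* g$, the optimal gain is state-independent and equals $J_{\lambda^*}^*$, so $D(\lambda^*)=J_{\lambda^*}^*$. By the second statement of \Cref{thm:strong duality}, $J_r^*=J_r^*(s_1)=D(\lambda^*)=J_{\lambda^*}^*$. \Cref{ass:optimal-policy} guarantees that this is the same $J_r^*$ that appears in $\Regret(T)$.

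\textbf{Step 2: sum the Bellman inequality.} Apply \eqref{eq:bellman optimality equation} at each visited pair $(s_t,a_t)$ and write $v=v_{\lambda^*}^*$. Summing over $t\in[T]$ gives
\begin{align*}
\sum_{t=1}^T\bigl(r(s_t,a_t)+\lambda^* g(s_t,a_t)\bigr)
\le T J_r^* + \sum_{t=1}^T\bigl(v(s_t)-v(s_{t+1})\bigr) + \sum_{t=1}^T\bigl(v(s_{t+1})-Pv(s_t,a_t)\bigr).
\end{align*}
The first correction term telescopes to $v(s_1)-v(s_{T+1})\le \spn(v)$.

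\textbf{Step 3: concentrate the martingale term.} Let $\calF_t$ be the $\sigma$-field generated by the history up to the choice of $a_t$. Then $X_t=v(s_{t+1})-Pv(s_t,a_t)$ is a martingale difference sequence with respect to $(\calF_t)_t$. Conditionally on $\calF_t$, $X_t$ lies in an interval of length $\spn(v)$, so $|X_t|\le\spn(v)$. Azuma--Hoeffding then gives, with probability at least $1-\delta$,
\[
\sum_{t=1}^T X_t\le \spn(v)\sqrt{2T\log(1/\delta)}.
\]

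\textbf{Step 4: rearrange.} Substituting Steps 2 and 3 and using $\sum_t g(s_t,a_t)=-\Violation(T)$, we get
\[
\Regret(T)=TJ_r^*-\sum_{t=1}^T r(s_t,a_t)\ge \lambda^*\sum_{t=1}^T g(s_t,a_t)-\spn(v)\sqrt{2T\log(1/\delta)}-\spn(v),
\]
and the right-hand side equals $-\lambda^*\Violation(T)-\spn(v)\sqrt{2T\log(1/\delta)}-\spn(v)$, which is the claim.

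\textbf{Main obstacle.} The delicate step is Step 1, namely justifying that the dual value $D(\lambda^*)$ equals the optimal average gain $J_{\lambda^*}^*$ that appears in the Bellman inequality. This needs two facts about weakly communicating MDPs: the optimal gain is constant across states, and the supremum over stationary policies of $J_r^\pi(s_1)+\lambda^*J_g^\pi(s_1)$ is attained at this constant gain. Everything after that is routine.
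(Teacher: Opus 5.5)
Your argument is correct and reproduces exactly the constants in the statement. It sums the Bellman inequality for $r+\lambda^* g$ along the realized trajectory, telescopes the bias terms, applies Azuma--Hoeffding to $v(s_{t+1})-Pv(s_t,a_t)$, and uses strong duality to identify the optimal gain $J^*_{\lambda^*}=D(\lambda^*)$ with $J_r^*$. The paper does not prove this lemma itself; it imports it as Lemma 7 of \citet{yu2026learning}, and your argument is the natural route for it. You are also right that strong duality, not weak duality, is essential: the argument needs $J^*_{\lambda^*}\le J_r^*$, which is the direction weak duality cannot give.
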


\section{Deferred Proofs of Sections~\ref{sec:algorithm} and ~\ref{sec:analysis}}\label{app:proofs}

\subsection{Proof of Lemma~\ref{lem:PhiW}}
\begin{lemma}[Restatement of Lemma~\ref{lem:PhiW}]\label{lem:PhiW app}
    Let $\Phi_W:\bbR \to \bbR$ be defined as in  \eqref{eq:PhiW}. Then we have the following properties:
    \begin{enumerate}
        \item $0 \leq \Phi_W'(x) \leq \Lambda$ for all $x\in\bbR$.
        \item $\Phi_W(x+y) - \Phi_W(y) \leq \Phi_W'(y)x + \frac{\Lambda}{2W}x^2$ for all $x,y \in \bbR$.
        \item $\Phi_W'(y)x \leq \Phi_W(x+y) - \Phi_W(y)$ for all $x,y \in \bbR$.
        \item For any $\lambda \in [0,\Lambda)$ and $x \in \bbR$, we have $\Phi_W(x) - \lambda x \geq (\Lambda- \lambda)[x]_+ - \frac{\Lambda W}{2}$.
    \end{enumerate}
\end{lemma}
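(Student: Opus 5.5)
All four properties follow from one structural fact: $\Phi_W$ is convex and continuously differentiable with derivative
\begin{align*}
\Phi_W'(x)=\Lambda\min\{[x]_+/W,\,1\},
\end{align*}
which is the clip of $\Lambda x/W$ to $[0,\Lambda]$. I first verify this from \eqref{eq:PhiW}. The derivative is $0$ on $x\le 0$, equals $\Lambda x/W$ on $(0,W]$, and equals $\Lambda$ on $x>W$. At $x=0$ and $x=W$ both the function values and the one-sided derivatives agree: $\Phi_W(W)=\Lambda W/2=\Lambda(W-W/2)$, and the derivatives match at $0$ and at $\Lambda$. This formula gives property 1 immediately. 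It also shows that $\Phi_W'$ is nondecreasing, so $\Phi_W$ is convex, and that $\Phi_W'$ is $(\Lambda/W)$-Lipschitz.

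For property 3 (the lower bound), I apply the first-order characterization of convexity: $\Phi_W(x+y)\ge\Phi_W(y)+\Phi_W'(y)x$.

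For property 2 (the upper bound), I write
\begin{align*}
\Phi_W(x+y)-\Phi_W(y)-\Phi_W'(y)x=\int_0^1\bigl(\Phi_W'(y+tx)-\Phi_W'(y)\bigr)x\,dt.
\end{align*}
By the Lipschitz property, $|\Phi_W'(y+tx)-\Phi_W'(y)|\le \Lambda t|x|/W$, so the integrand is at most $\Lambda t x^2/W$. Integrating over $t$ gives the remainder $\Lambda x^2/(2W)$. This is the descent lemma for $L$-smooth functions with $L=\Lambda/W$.

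For property 4, I split on the sign of $x$.
\begin{itemize}
\item If $x\le 0$: the left side is $\Phi_W(x)-\lambda x=-\lambda x\ge 0$, while the right side is $-\Lambda W/2\le 0$.
\item If $x>0$: I use $\Phi_W(x)\ge \Lambda(x-W/2)$, which holds for all $x$. For $x>W$ it is an equality. For $0<x\le W$ it reduces to $x^2/(2W)\ge x-W/2$, which is equivalent to $(x-W)^2\ge 0$. It also holds trivially for $x\le 0$. Hence $\Phi_W(x)-\lambda x\ge(\Lambda-\lambda)x-\Lambda W/2$, and $x=[x]_+$ in this case.
\end{itemize}
Equivalently, $\Lambda(x-W/2)$ is the tangent line of the convex function $\Phi_W$ at $W$, so the bound is just convexity again.

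None of the steps is a real obstacle. The only care needed is checking $C^1$ smoothness at the two breakpoints $x=0$ and $x=W$, because the integral representation in property 2 and the tangent-line arguments both rely on $\Phi_W'$ being a well-defined, continuous, monotone, Lipschitz function.
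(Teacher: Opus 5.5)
Your proposal is correct and follows essentially the same route as the paper, which also uses the piecewise derivative, the facts that $\Phi_W'$ is nondecreasing and $(\Lambda/W)$-Lipschitz, and the two-sided integral bound $0 \le \int_0^1 \bigl(\Phi_W'(y+\tau x)-\Phi_W'(y)\bigr)x\,d\tau \le \Lambda x^2/(2W)$ to get properties 2 and 3 together; your first-order convexity argument for property 3 is just the lower half of that bound. For property 4 the paper uses the same case split, where the case $0<x\le W$ reduces to $\frac{\Lambda}{2W}(x-W)^2\ge 0$, which is exactly your tangent-line inequality $\Phi_W(x)\ge\Lambda(x-W/2)$.
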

\begin{proof}
    The first statement is clear, since we have
    \begin{align}\label{eq:deriv PhiW}
        \Phi_W'(x) = \begin{cases}
            0 & \text{if } x \leq 0, \\
            (\Lambda / W)x & \text{if }0< x \leq W, \\
            \Lambda & \text{if } W<x.
        \end{cases}
    \end{align}
    We prove the second and third statements. By \eqref{eq:deriv PhiW}, we have for any $x,y \in \bbR$ such that $x \leq y$,
    \begin{align}\label{eq:PhiW 1}
        0 \leq \Phi_W'(y) - \Phi_W'(x) \leq \frac{\Lambda}{W}(y-x)
    \end{align}
    While $0 \leq \Phi_W'(y) - \Phi_W'(x)$ is trivial, as $\Phi_W'$ is nondecreasing, the $\Phi_W'(y) - \Phi_W'(x) \leq (\Lambda /W)(y-x)$ can be shown by the following arguments. When $x,y \leq  0$ or $W \leq x,y$, the statement is trivial. When $x \leq 0 < y \leq W$, we have $\Phi_W'(y) - \Phi_W'(x) = (\Lambda/W)y \leq (\Lambda/W)(y-x)$. When $x < 0 \leq W < y$, we have $\Phi_W'(y) - \Phi_W'(x) = \Lambda \leq (\Lambda/W)(y-x)$. Finally, when $0 < x \leq W < y$, we have $\Phi_W'(y) - \Phi_W'(x) \leq \Lambda - (\Lambda/W)x \leq (\Lambda/W)(y-x)$. Therefore, \eqref{eq:PhiW 1} is true. Then it follows that
    \begin{align*}
        0 \leq \int_0^1 \left(\Phi_W'(y + \tau x) - \Phi_W'(y) \right) x d\tau \leq \int_0^1 \frac{\Lambda}{W}\tau x^2 d\tau = \frac{\Lambda}{2W}x^2
    \end{align*}
    where the inequality follows from \eqref{eq:deriv PhiW}. Moreover, $\int_0^1 \left(\Phi_W'(y + \tau x) - \Phi_W'(y) \right) x d\tau$ can be rewritten as follows.
    \begin{align*}
        \int_0^1 \left(\Phi_W'(y + \tau x) - \Phi_W'(y) \right) x d\tau 
        &= \int_0^1 \Phi_W'(y+\tau x) xd\tau - \Phi_W'(y) x \\
        &= \int_0^x \Phi_W'(y+\tau') d\tau' - \Phi_W'(y) x \\
        &= \Phi_W(y+x)- \Phi_W(y) - \Phi_W'(y)x
    \end{align*}
    where the second equality follows from the substitution $\tau' = x\tau$, and the last equality follows from the fundamental theorem of calculus. Combining these leads to
    \begin{align*}
        0 \leq \Phi_W(y+x) - \Phi_W(y) -\Phi_W'(y) x \leq \frac{\Lambda}{2W}x^2.
    \end{align*}
    Note that this implies the second and third statements.

    Consider $\lambda \in [0, \Lambda)$. To prove the fourth statement, we consider the following three cases: (i) $x \leq 0$, (ii) $0 < x \leq W$, and (iii) $W <x$. For (i), we have $\Phi_W(x) - \lambda x = -\lambda x \geq 0 \geq -\frac{\Lambda W}{2} = (\Lambda-\lambda)[x]_+ -\frac{\Lambda W}{2}$. For (ii), we have
    \begin{align*}
        \Phi_W(x) - \lambda x - (\Lambda-\lambda)[x]_+ + \frac{\Lambda W}{2} 
        &= \frac{\Lambda}{2W}x^2 -\Lambda x + \frac{\Lambda W}{2}= \frac{\Lambda}{2W}(x-W)^2 \geq 0.
    \end{align*}
    Therefore, $\Phi_W(x) - \lambda x \geq (\Lambda-\lambda)[x]_+ - \frac{\Lambda W}{2}$ when (ii). For (iii), we have
    \begin{align*}
        \Phi_W(x) - \lambda x = \Lambda(x-\frac{W}{2}) -\lambda x = (\Lambda -\lambda)[x]_+ -\frac{\Lambda W}{2}.
    \end{align*}
    Finally, these imply that the third statement holds.
\end{proof}

\subsection{Proof of Lemma~\ref{lem:span}}
We first prove the following lemma, which is useful to show \Cref{lem:span}.
\begin{lemma} \label{lem:J-V}
    Define $C$ as
    \begin{align}\label{eq:def C}
        C = \spn(v_r^*) + \Lambda \spn(v_g^*) + \frac{\Lambda}{W}\left(\spn(v_g^*)^2 + 2H(\spn(v_g^*)+2)^2\right)+ \frac{H\Lambda\Delta}{2}.
    \end{align}
    Then we have, for all $(h,s,z)\in [H]\times\calS \times \calZ$,
    \begin{align*}
        \left|(H-h+1)J_r^* + \Phi_W(z) - \Phi_W\left(z-(H-h+1)J_g^*\right) - V_{\tilde r,h}^{\pi^*}(s,z)\right| \leq C.
    \end{align*}
\end{lemma}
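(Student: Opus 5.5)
Write $L=H-h+1$ and take the trajectory of $\pi^*$ in the augmented MDP from $(s_h,z_h)=(s,z)$. Since $\pi^*$ ignores $z$, the original-state trajectory is the Markov chain induced by $\pi^*$ and $P$, and the safety state follows deterministically. By the telescoping identity,
\[
V_{\tilde r,h}^{\pi^*}(s,z)=\bbE\Bigl[\sum_{i=h}^{H} r(s_i,a_i)\Bigr]+\Phi_W(z)-\bbE\bigl[\Phi_W(z_{H+1})\bigr].
\]
The target quantity therefore splits into two terms. The first is
\[
\Bigl(LJ_r^*-\bbE\Bigl[\sum_i r(s_i,a_i)\Bigr]\Bigr).
\]
The second is
\[
\bbE\bigl[\Phi_W(z_{H+1})\bigr]-\Phi_W(z-LJ_g^*).
\]
I will bound each in absolute value.

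\emph{First term.} Assumption~\ref{ass:optimal-policy} gives state-independent gains, so $\pi^*$ satisfies the Poisson equation $J_r^*+v_r^*(s)=\bbE_{a\sim\pi^*}[r(s,a)+Pv_r^*(s,a)]$, and similarly for $g$. Summing along the trajectory gives
\[
\bbE\Bigl[\sum_{i}r\Bigr]=LJ_r^*+v_r^*(s)-\bbE\bigl[v_r^*(s_{H+1})\bigr],
\]
so the first term is at most $\spn(v_r^*)$. The same identity for $g$ controls the unprojected safety state. Define $G=\sum_{i=h}^H g(s_i,a_i)$ and $\hat z=z-G$. Then $\hat z=z-LJ_g^*-D$, where $D=v_g^*(s)-v_g^*(s_{H+1})+M$ and $M$ is a martingale sum whose increments are bounded by $\spn(v_g^*)+2$. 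Here I use $|g|\le1$ and $|J_g^*|\le 1$.

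\emph{Projection error.} Since $|z_{t+1}-(z_t-g)|\le\Delta/2$ on the grid (the bound $|\Delta n|\le 2T$ is not hit for $z$ reachable within $H$ steps; boundary cases need a remark), we have $|z_{H+1}-\hat z|\le L\Delta/2$. Because $\Phi_W$ is $\Lambda$-Lipschitz by Lemma~\ref{lem:PhiW}(1), replacing $z_{H+1}$ by $\hat z$ costs at most $H\Lambda\Delta/2$.

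\emph{Main step.} It remains to bound $\bigl|\bbE[\Phi_W(y-D)]-\Phi_W(y)\bigr|$ with $y=z-LJ_g^*$. I apply Lemma~\ref{lem:PhiW}(2) with increment $x=-D$:
\[
-\Phi_W'(y)\,\bbE[D]\le \bbE\bigl[\Phi_W(y-D)\bigr]-\Phi_W(y)\le -\Phi_W'(y)\,\bbE[D]+\frac{\Lambda}{2W}\bbE\bigl[D^2\bigr].
\]
Since $\bbE[M]=0$, we get $|\bbE[D]|\le\spn(v_g^*)$, and $\Phi_W'(y)\le\Lambda$ then gives the $\Lambda\spn(v_g^*)$ term. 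For the quadratic term,
\[
\bbE\bigl[D^2\bigr]\le 2\spn(v_g^*)^2+2\,\bbE\bigl[M^2\bigr]\le 2\spn(v_g^*)^2+2L(\spn(v_g^*)+2)^2,
\]
using orthogonality of martingale increments. This yields the term $\frac{\Lambda}{W}\bigl(\spn(v_g^*)^2+H(\spn(v_g^*)+2)^2\bigr)$, which is within the stated $C$. Summing all the pieces gives the bound.

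The main obstacle is the precise martingale bookkeeping. First, the increments of $M$ are $v_g^*(s_{i+1})-\bbE[v_g^*(s_{i+1})\mid s_i]$ combined with the Poisson residual; one must write this correctly so that $D$ has the claimed form and the second moment is linear in $L$. Second, the grid boundary of $\calZ$ must be handled so that the projection error stays $\le\Delta/2$ per step. I would handle the boundary by noting that $\Phi_W$ is constant or linear near it, or by restricting attention to reachable $z$.
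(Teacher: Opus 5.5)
Your proposal follows the paper's proof essentially step for step. It telescopes the potential, uses the Poisson equation for $r$ to get $\spn(v_r^*)$ (the paper cites Lemma~4 of \citet{chen2022learning} here), controls the rounding through $\Lambda$-Lipschitzness, and applies the first- and second-order Huber bounds of \Cref{lem:PhiW} at $y=z-LJ_g^*$, with $\sum_i g(s_i,a_i)$ split into a bias difference plus a martingale whose orthogonal increments are bounded by $\spn(v_g^*)+2$. Your quadratic term $\frac{\Lambda}{W}\bigl(\spn(v_g^*)^2+L(\spn(v_g^*)+2)^2\bigr)$ is even a factor two tighter than the paper's. The martingale bookkeeping you worry about is routine: $\xi_i=g(s_i,a_i)+v_g^*(s_{i+1})-v_g^*(s_i)-J_g^*$ has zero mean given $s_1,a_1,\dots,s_i$ by the Poisson equation.

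The grid-boundary issue you flag is real, and the paper's proof does not handle it. It asserts $|z_{h'+1}-(z_{h'}-g(s_{h'},a_{h'}))|\le\Delta/2$ for every $z\in\calZ$, which fails at $z=\max\calZ$ when $g=-1$.

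Your first remedy works only at the bottom of the grid, where $\Phi_W\equiv 0$. At the top, $\Phi_W$ is linear with slope $\Lambda$, so each unit removed by the projection costs exactly $\Lambda$. The expected amount removed can grow like $\sqrt{H}$. For example, take one state and two actions with $(r,g)=(0,1)$ and $(1,-1)$. Then $\pi^*$ is uniform, $J_g^*=0$, both spans vanish and $\gamma=1$, so $C=17$. Yet with $\sqrt{T}$ an integer and $(h,z)=(1,2T)$, the left-hand side of the lemma equals $\Lambda\,\bbE[\max_{0\le t\le H}S_t]\approx 1.6\,T^{1/4}$ for a simple random walk $S$, which exceeds $C$ once $T$ is large. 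So the statement as written is false at the top of the grid.

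Your second remedy is the right one: restrict to $z\le\max\calZ-(H-h+1)(1+\Delta/2)$. Nothing downstream is lost. The map $\psi$ sends this set at step $h$ into the corresponding set at step $h+1$, so the optimism induction can be run on it. Every realized $z_1^k$ satisfies $z_1^k\le(k-1)H(1+\Delta/2)$ and hence lies in the set at $h=1$.
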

\begin{proof}
    Recall that $\tilde P(s',z'|s,a,z) = \Ind\{z'=\Pi_\calZ(z-g(s,a))\}\cdot P(s'|s,a)$. Hence, given $s_{h'},a_{h'}, z_{h'}$, we have $z_{h'+1} = \Pi_\calZ(z_{h'} - g(s_{h'},a_{h'}))$ for each $h'$. Then it follows that
    \begin{align}\label{eq:J-V 0}
    \begin{aligned}
        &V_{\tilde r,h}^{\pi^*}(s,z) \\
        &\quad= \bbE_{\tilde P,\pi^*}\left[\sum_{h'=h}^H \tilde r(s_{h'},a_{h'},z_{h'}) | s_h = s, z_h=z\right]\\
        &\quad= \bbE_{\tilde P,\pi^*}\left[\sum_{h'=h}^H \left(r(s_{h'}, a_{h'}) +\Phi_W(z_{h'})-\Phi_W\left(\Pi_\calZ(z_{h'}-g(s_{h'},a_{h'}))\right) \right)| s_h = s, z_h=z\right]\\
        &\quad= \bbE_{\tilde P,\pi^*}\left[\sum_{h'=h}^H \left(r(s_{h'}, a_{h'}) +\Phi_W(z_{h'})-\Phi_W(z_{h'+1}) \right)| s_h = s, z_h=z\right]\\
        &\quad= \bbE_{\tilde P,\pi^*}\left[\sum_{h'=h}^H r(s_{h'}, a_{h'}) + \Phi_W(z_h)-\Phi_W(z_{H+1})| s_h = s, z_h=z\right].
    \end{aligned}
    \end{align}
    Moreover, since $z_{h'+1} = \Pi_\calZ(z_{h'} - g(s_{h'},a_{h'}))$, we have for all $h'\in[H]$,
    \begin{align*}
        \left|z_{h'+1} - (z_{h'} - g(s_{h'},a_{h'})) \right|  \leq \frac{\Delta}{2}.
    \end{align*}
    Then it follows that
    \begin{align*}
        \left|z_{H+1} - \left(z_h -\sum_{h'=h}^H g(s_{h'},a_{h'})\right)\right| 
        &= \left|\sum_{h'=h}^H \left(z_{h'+1} - (z_{h'} - g(s_{h'},a_{h'}))\right)\right|\\
        &\leq \sum_{h'=h}^H \left|z_{h'+1} - (z_{h'} - g(s_{h'},a_{h'}))\right|\\
        &\leq \frac{(H-h+1)\Delta}{2}.
    \end{align*}
    It leads to
    \begin{align*}
        \left|\Phi_W(z_{H+1}) - \Phi_W\left(z_h - \sum_{h'=h}^H g(s_{h'},a_{h'})\right)\right| 
        &\leq \Lambda  \left|z_{H+1} - \left(z_h -\sum_{h'=h}^H g(s_{h'},a_{h'})\right)\right| \\
        &\leq \frac{(H-h+1)\Lambda\Delta}{2}
    \end{align*}
    where the first inequality follows from the Lipschitzness of $\Phi_W$, i.e., $|\Phi_W(y) - \Phi_W(x)| \leq \Lambda |y-x|$. Then we have
    \begin{align}\label{eq:J-V 1}
    \begin{aligned}
        \bbE_{\tilde P, \pi^*} \left[ \Phi_W(z_{H+1}) | s_h=s,z_h=z\right] 
        &\leq \bbE_{\tilde P, \pi^*}\left[\Phi_W\left(z_h-\sum_{h'=h}^H g(s_{h'},a_{h'})\right) | s_h=s,z_h=z\right]\\
        &\quad+ \frac{(H-h+1)\Lambda\Delta}{2}.
    \end{aligned}
    \end{align}
    By applying \eqref{eq:J-V 1} to \eqref{eq:J-V 0}, we have
    \begin{align}\label{eq:J-V V}
    \begin{aligned}
        V_{\tilde r,h}^{\pi^*}(s,z)  
        &\leq \bbE_{\tilde P,\pi^*}\left[\sum_{h'=h}^H r(s_{h'}, a_{h'})|s_h=s, z_h=z\right] \\
        &\quad+ \bbE_{\tilde P,\pi^*}\left[\Phi_W(z_h)-\Phi_W\left(z_h-\sum_{h'=h}^H g(s_{h'},a_{h'})\right)| s_h = s, z_h=z\right]\\
        &\quad+ \frac{(H-h+1)\Lambda\Delta}{2}\\
        &= \bbE_{P,\pi^*}\left[\sum_{h'=h}^H r(s_{h'}, a_{h'})|s_h=s\right] \\
        &\quad + \Phi_W(z) - \bbE_{P,\pi^*}\left[\Phi_W\left(z-\sum_{h'=h}^H g(s_{h'},a_{h'})\right)| s_h = s\right]\\
        &\quad+ \frac{(H-h+1)\Lambda\Delta}{2}
    \end{aligned}
    \end{align}
    where the second equality is true since terms in the expectations do not include $\{z_{h'}\}_{h'=h+1}^{H+1}$, it can be rewritten using $P$ rather than $\tilde P$ along with the fact that $\bbE_{\tilde P,\pi^*}[\Phi_W(z_h)|z_{h}=z] = \Phi_W(z)$. By applying \eqref{eq:J-V V} to the desired term, we have
    \begin{align}\label{eq:J-V -1}
    \begin{aligned}
        &\left|(H-h+1)J_r^* + \Phi_W(z) - \Phi_W(z-(H-h+1)J_g^*) - V_{\tilde r,h}^{\pi^*}(s,z)\right|\\
        &\leq \underbrace{\left|(H-h+1)J_r^* - \bbE_{P,\pi^*}\left[\sum_{h'=h}^H r(s_{h'}, a_{h'})|s_h=s\right]\right|}_{\text{(I)}} \\
        &\quad+ \underbrace{\left|\bbE_{P,\pi^*}\left[\Phi_W\left(z-\sum_{h'=h}^H g(s_{h'},a_{h'})\right)| s_h = s\right] - \Phi_W(z-(H-h+1)J_g^*)\right|}_{\text{(II)}}\\
        &\quad+ \frac{(H-h+1)\Lambda\Delta}{2}.
    \end{aligned}
    \end{align}
    For (I), by \Cref{lem:chen2022 lemma 4}, we have
    \begin{align*}
        \text{(I)} \leq \spn(v_r^*).
    \end{align*}
    Next, we bound (II). To show this, recall that for all $s\in\calS$,
    \begin{align*}
        J_g^* + v_g^*(s) = \sum_{a\in\calA}\pi^*(a|s)(g(s,a) + Pv_g^*(s,a)).
    \end{align*}
    Then it follows that for each $h' \in [H]$,
    \begin{align}\label{eq:J-V xi}
        \bbE_{P,\pi^*}\left[\underbrace{g(s_{h'},a_{h'}) + v_g^*(s_{h'+1}) - v_g^*(s_{h'}) - J_g^*}_{\triangleq \xi_{h'}}| \calF_{h'} \right] = 0
    \end{align}
    where $\calF_{h'} = \sigma(s_1,a_1, \ldots, a_{h'-1}, s_{h'})$. Moreover, we define $\xi_{h'} = g(s_{h'},a_{h'}) + v_g^*(s_{h'+1}) - v_g^*(s_{h'}) - J_g^*$. By summing $\xi_{h'}$ over $h'=h,\ldots,H$, we have
    \begin{align*}
        \sum_{h'=h}^H \xi_{h'} = \sum_{h'=h}^H g(s_{h'},a_{h'}) + v_g^*(s_{H+1}) - v_g^*(s_{h}) - (H-h+1)J_g^*.
    \end{align*}
    It can be rewritten as for any $z\in\calZ$ and $h \in [H]$,
    \begin{align*}
        z - \sum_{h'=h}^H g(s_{h'},a_{h'}) + v_g^*(s_h) - v_g^*(s_{H+1}) + \sum_{h'=h}^H \xi_{h'} = z - (H-h+1)J_g^*.
    \end{align*}
    Now, we apply the second statement of \Cref{lem:PhiW app}. Then it follows that
    \begin{align*}
        &\Phi_W\left(z-\sum_{h'=h}^H g(s_{h'},a_{h'})\right) - \Phi_W(z - (H-h+1)J_g^*)\\
        &\leq \Phi_W'\left(z-(H-h+1)J_g^*\right) \left(v_g^*(s_{H+1}) - v_g^*(s_h) - \sum_{h'=h}^H \xi_{h'}\right) \\
        &\quad+ \frac{\Lambda}{2W}\left(v_g^*(s_{H+1}) - v_g^*(s_h) - \sum_{h'=h}^H \xi_{h'}\right)^2
    \end{align*}
    By taking $\bbE_{P,\pi^*}[\cdot| s_h=s]$, we have
    \begin{align}\label{eq:J-V 2}
    \begin{aligned}
        &\bbE_{P,\pi^*}\left[\Phi_W\left(z-\sum_{h'=h}^H g(s_{h'},a_{h'})\right)|s_h=s\right] - \Phi_W(z - (H-h+1)J_g^*) \\
        &\leq \Phi_W'\left(z-(H-h+1)J_g^*\right) \underbrace{\bbE_{P,\pi^*}\left[v_g^*(s_{H+1}) - v_g^*(s) - \sum_{h'=h}^H \xi_{h'}| s_h=s\right]}_{\text{(III)}} \\
        &\quad+ \frac{\Lambda}{2W} \underbrace{\bbE_{P,\pi^*}\left[\left(v_g^*(s_{H+1}) - v_g^*(s_h) - \sum_{h'=h}^H \xi_{h'}\right)^2|s_h=s\right]}_{\text{(IV)}}.
    \end{aligned}
    \end{align}
    Next, we bound (III) and (IV) individually. For (III), note that
    \begin{align*}
        \bbE_{P,\pi^*}\left[\sum_{h'=h}^H \xi_{h'} |s_h=s\right] = \bbE_{P,\pi^*}\left[\sum_{h'=h}^H \bbE_{P,\pi^*}[\xi_{h'}|\calF_{h'}] |s_h=s\right] = 0
    \end{align*}
    where the first equality follows from the tower rule, and the second equality follows from \eqref{eq:J-V xi}. Moreover, we have $v_g^*(s_{H+1}) - v_g^*(s) \leq \spn(v_g^*)$ almost surely. Then we have
    \begin{align*}
        |\text{(III)}| \leq \spn(v_g^*).
    \end{align*}
    To bound (IV), we have
    \begin{align*}
        \text{(IV)} 
        &\leq 2\bbE_{P,\pi^*}\left[\spn(v_g^*)^2 + \left(\sum_{h'=h}^H \xi_{h'}\right)^2|s_h=s\right]\\
        &= 2\spn(v_g^*)^2 + 2\bbE_{P,\pi^*}\left[\sum_{h'=h}^H \xi_{h'}^2|s_h=s\right]\\
        &\leq 2\spn(v_g^*)^2 + 2(H-h+1) (\spn(v_g^*) + 2)^2
    \end{align*}
    where the first inequality follows from the fact that $(x+y)^2 \leq 2x^2 + 2y^2$ for all $x,y\in\bbR$, and the last inequality is due to $|\xi_{h'}| \leq 2 + \spn(v_g)^*$ for all $h'\in[H]$, as $J_g^* \in [-1,1]$ and $g(s_{h'},a_{h'}) \in [-1,1]$. Note that the equality can be shown by the following argument.
    \begin{align*}
        \bbE_{P,\pi^*}\left[\left(\sum_{h'=h}^H \xi_{h'}\right)^2|s_h=s\right] 
        &= \bbE_{P,\pi^*}\left[\sum_{h'=h}^H \xi_{h'}^2 + \sum_{\substack{h\leq h'',h'''\leq H \\ h'' \neq h'''}} \xi_{h''}\xi_{h'''}|s_h=s\right]\\
        &= \bbE_{P,\pi^*}\left[\sum_{h'=h}^H \xi_{h'}^2|s_h=s\right]
    \end{align*}
    where the second equality follows from for any $h''$ and $h'''$ such that $h'' < h'''$, we have $\bbE_{P,\pi^*}[\xi_{h''}\xi_{h'''}] = \bbE_{P,\pi^*}[\xi_{h''}\bbE_{P,\pi^*}[\xi_{h'''}|\calF_{h'''}]] = 0$ by the tower rule. By applying bounds on (III) and (IV) to \eqref{eq:J-V 2}, we have
    \begin{align}\label{eq:J-V 3}
    \begin{aligned}
        &\bbE_{P,\pi^*}\left[\Phi_W\left(z-\sum_{h'=h}^H g(s_{h'},a_{h'})\right)|s_h=s\right] - \Phi_W(z - (H-h+1)J_g^*) \\
        &\leq \Phi_W'\left(z-(H-h+1)J_g^*\right)|\text{(III)}| + \text{(IV)}\\
        &\leq \Lambda \spn(v_g^*) + \frac{\Lambda}{W}\left(\spn(v_g^*)^2 + 2(H-h+1)(\spn(v_g^*)+2)^2\right).
    \end{aligned}
    \end{align}
    By applying the same argument with the third statement of \Cref{lem:PhiW app}, we have
    \begin{align}\label{eq:J-V 4}
    \begin{aligned}
        \bbE_{P,\pi^*}\left[\Phi_W\left(z-\sum_{h'=h}^H g(s_{h'},a_{h'})\right)|s_h=s\right] - \Phi_W(z - (H-h+1)J_g^*) \geq -\Lambda \spn(v_g^*)
    \end{aligned}
    \end{align}
    where the only differences are the direction of inequality, and the absence of (IV). Then by \eqref{eq:J-V 3} and \eqref{eq:J-V 4}, we have
    \begin{align*}
        \text{(II)} 
        &= \left|\bbE_{P,\pi^*}\left[\Phi_W\left(z-\sum_{h'=h}^H g(s_{h'},a_{h'})\right)\right] - \Phi_W(z - (H-h+1)J_g^*)\right|\\
        &\leq \Lambda \spn(v_g^*) + \frac{\Lambda}{W}\left(\spn(v_g^*)^2 + 2(H-h+1)(\spn(v_g^*)+2)^2\right).
    \end{align*}
    Finally, by applying bounds on (I) and (II) to \eqref{eq:J-V -1}, we have
    \begin{align*}
        &\left|(H-h+1)J_r^* + \Phi_W(z) - \Phi_W(z-(H-h+1)J_g^*) - V_{\tilde r,h}^{\pi^*}(s,z)\right| \\
        &\leq \spn(v_r^*) + \Lambda \spn(v_g^*) + \frac{\Lambda}{W}\left(\spn(v_g^*)^2 + 2(H-h+1)(\spn(v_g^*)+2)^2\right) + \frac{(H-h+1)\Lambda\Delta}{2}\\
        &\leq C.
    \end{align*}
    This concludes the proof of the first statement.
\end{proof}

\begin{lemma}[Restatement of Lemma~\ref{lem:span}]\label{lem:span app}
    Let $C$ defined in \eqref{eq:def C}. Then we have, for all $(h,s,z)\in [H]\times\calS \times \calZ$,
    \begin{enumerate}
        \item $\spn(V_{\tilde r, h}^{\pi^*}(\cdot,z)) \leq 2C$,
        \item $HJ_r^* - V_{\tilde r, 1}^{\pi^*}(s,z) \leq C$.
    \end{enumerate}
\end{lemma}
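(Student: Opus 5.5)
Both statements follow directly from \Cref{lem:J-V}. The key observation is that the comparison quantity
\begin{align*}
A_h(z) \triangleq (H-h+1)J_r^* + \Phi_W(z) - \Phi_W\bigl(z-(H-h+1)J_g^*\bigr)
\end{align*}
depends only on $(h,z)$ and not on the original state $s$. \Cref{lem:J-V} gives $|A_h(z) - V_{\tilde r,h}^{\pi^*}(s,z)|\le C$ for every $s\in\calS$, so $V_{\tilde r,h}^{\pi^*}(\cdot,z)$ lies in the interval $[A_h(z)-C,\,A_h(z)+C]$. The plan is to read off the span bound from this interval, and to obtain the shortfall bound by specializing to $h=1$ and using the sign of $J_g^*$.

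For the first statement, fix $(h,z)$ and take any $s,s'\in\calS$. The triangle inequality gives
\begin{align*}
V_{\tilde r,h}^{\pi^*}(s,z)-V_{\tilde r,h}^{\pi^*}(s',z) \le \bigl|V_{\tilde r,h}^{\pi^*}(s,z)-A_h(z)\bigr| + \bigl|A_h(z)-V_{\tilde r,h}^{\pi^*}(s',z)\bigr| \le 2C.
\end{align*}
Taking the maximum over $s$ and the minimum over $s'$ yields $\spn(V_{\tilde r,h}^{\pi^*}(\cdot,z))\le 2C$.

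For the second statement, set $h=1$. \Cref{lem:J-V} then gives
\begin{align*}
HJ_r^* - V_{\tilde r,1}^{\pi^*}(s,z) \le C - \Phi_W(z) + \Phi_W(z-HJ_g^*).
\end{align*}
It therefore suffices to show $\Phi_W(z-HJ_g^*)\le\Phi_W(z)$.

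Feasibility of $\pi^*$ (\Cref{ass:optimal-policy}) gives $J_g^*=J_g^{\pi^*}(s_1)\ge 0$. Hence $z-HJ_g^*\le z$, and since $\Phi_W$ is nondecreasing (first statement of \Cref{lem:PhiW}), $\Phi_W(z-HJ_g^*)\le\Phi_W(z)$. This gives $HJ_r^*-V_{\tilde r,1}^{\pi^*}(s,z)\le C$.

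The main obstacle is only noticing that the second statement needs the constraint-feasibility sign $J_g^*\ge0$ to drop the potential terms. The rest is bookkeeping around \Cref{lem:J-V}, which does all the heavy lifting.
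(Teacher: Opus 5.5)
Your proposal is correct and takes essentially the same approach as the paper: the span bound comes from the triangle inequality around the $s$-independent comparison quantity in Lemma~\ref{lem:J-V}, and the shortfall bound drops the potential difference using $J_g^*\ge 0$ and the fact that $\Phi_W$ is nondecreasing. You also make explicit why $J_g^*\ge 0$ holds (feasibility of $\pi^*$ under Assumption~\ref{ass:optimal-policy}), which the paper uses without justification.
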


\begin{proof}
    We prove the first statement. For a fixed $z\in\calZ$ and any $s', s'' \in \calS$, we have
    \begin{align*}
        &|V_{\tilde r,h}^{\pi^*}(s',z) - V_{\tilde r,h}^{\pi^*}(s'',z)|\\
        &\leq \left|(H-h+1)J_r^* + \Phi_W(z) - \Phi_W(z-(H-h+1)J_g^*) - V_{\tilde r,h}^{\pi^*}(s',z)\right| \\
        &\quad+\left|(H-h+1)J_r^* + \Phi_W(z) - \Phi_W(z-(H-h+1)J_g^*) - V_{\tilde r,h}^{\pi^*}(s'',z)\right| \\
        &\leq 2C
    \end{align*}
    where the first inequality follows from the triangle inequality, and the second inequality follows from \eqref{lem:J-V}. Then it follows that
    \begin{align*}
        \spn(V_{\tilde r,h}^{\pi^*}(\cdot,z)) 
        &= \max_{s',s''} \left|V_{\tilde r,h}^{\pi^*}(s',z) - V_{\tilde r,h}^{\pi^*}(s'',z)\right| \leq 2C.
    \end{align*}
    Next, we prove the second statement. By \Cref{lem:J-V},
    \begin{align*}
        (H-h+1)J_r^* + \Phi_W(z) - \Phi_W(z-(H-h+1)J_g^*) - V_{\tilde r,h}^{\pi^*}(s,z) \leq C.
    \end{align*}
    Note that since $J_g^* \geq 0$, we have $z \geq z- (H-h+1)J_g^*$. Then since $\Phi_W$ is a nondecreasing function, we have $\Phi_W(z) - \Phi_W(z-(H-h+1)J_g^*) \geq 0.$ Therefore, it follows that
    \begin{align*}
        (H-h+1)J_r^* - V_{\tilde r,h}^{\pi^*}(s,z) \leq C.
    \end{align*}
    \qed
\end{proof}

\subsection{Deferred proofs for Step 1 of Section~\ref{sec:analysis}}
\begin{lemma} \label{lem:z in Z}
Suppose that $\Delta \leq 1$. We have $z_h^k \in \calZ$ for all $(k,h) \in [K] \times [H]$.
\end{lemma}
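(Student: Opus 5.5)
The plan is a direct induction along the trajectory in the global time index $t=(k-1)H+h$. The algorithm carries the safety state across episodes ($z_1^{k+1}=z_{H+1}^k$), so the sequence $z_t$ is a single chain with $z_1=z_1^1=0$ and $z_{t+1}=\psi(s_t,z_t,a_t)=\Pi_{\calZ}(z_t-g(s_t,a_t))$. The claim is then that every $z_t$ with $t\le T+1$ lies in $\calZ$, and in fact satisfies the quantitative bound $|z_t|\le (t-1)(1+\Delta/2)$.

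The membership part is immediate. Since $z_1=0=\Delta\cdot 0\in\calZ$, and $\Pi_{\calZ}$ by definition returns an element of $\calZ$, we get $z_{t+1}\in\calZ$ for every $t$. This already gives the statement, since $\calZ$ is nonempty and finite, so the projection is well defined.

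For the stronger statement that the projection never saturates at the boundary $\pm 2T$, I would argue by induction that $|z_t|\le (t-1)(1+\Delta/2)$. Assume this holds at $t\le T$. Then
\begin{align*}
|z_t-g(s_t,a_t)|\le (t-1)(1+\Delta/2)+1\le T(1+\Delta/2)\le \tfrac{3}{2}T<2T,
\end{align*}
using $\Delta\le 1$. So $z_t-g$ lies strictly inside the interval $[-2T,2T]$. On that interval the grid $\calZ$ has spacing $\Delta$, and the nearest grid point is within $\Delta/2$ (ties broken arbitrarily). Hence
\begin{align*}
|z_{t+1}|\le |z_t-g(s_t,a_t)|+\Delta/2\le t(1+\Delta/2),
\end{align*}
which closes the induction.

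The only mildly delicate step is this interior argument: checking that the $\Delta/2$ rounding error stays valid, i.e. that $z_t-g$ never leaves $[-2T,2T]$ where the grid would fail to cover within $\Delta/2$. The hypothesis $\Delta\le1$ together with the factor $2T$ in the definition of $\calZ$ is exactly what provides that slack. This also justifies the bound $|z_{h'+1}-(z_{h'}-g(s_{h'},a_{h'}))|\le\Delta/2$ used in \Cref{lem:J-V}.
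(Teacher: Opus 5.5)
Your proof is correct and is essentially the paper's argument: the paper also bounds each increment $|z_{t+1}-z_t|$ by $1+\Delta/2$ and sums over at most $T$ steps to get $|z|\le T(1+\Delta/2)\le 2T$. The paper, however, invokes the $\Delta/2$ rounding bound without first checking that the projection does not saturate at the boundary of $\calZ$; your induction, together with your remark that membership is immediate from the definition of $\Pi_{\calZ}$, makes this rigorous.

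One small imprecision: the grid covers within $\Delta/2$ only the interval $[-\Delta\lfloor 2T/\Delta\rfloor,\Delta\lfloor 2T/\Delta\rfloor]$, not necessarily all of $[-2T,2T]$. This does not affect the conclusion, because your sharper bound $|z_t-g(s_t,a_t)|\le T+(T-1)\Delta/2\le 2T-\Delta$ (using $\Delta\le 1$) already places the pre-projection point inside that interval.
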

\begin{proof}
     For all $(k,h) \in [K] \times [H]$, recall that $z_{h+1}^k = \Pi_\calZ(z_h^k - g(s_h^k, a_h^k))$. Moreover, note that
     \begin{align*}
         |z_{h+1}^k - z_h^k| 
         &= \left|\Pi_\calZ(z_h^k - g(s_h^k,a_h^k)) - z_h^k\right|\\
         &= \left|\Pi_\calZ(z_h^k - g(s_h^k,a_h^k)) - (z_h^k- g(s_h^k,a_h^k)) + g(s_h^k,a_h^k)\right|\\
         &\leq \left|\Pi_\calZ(z_h^k - g(s_h^k,a_h^k)) - (z_h^k- g(s_h^k,a_h^k))\right| + \left|g(s_h^k,a_h^k)\right|\\
         &\leq \frac{\Delta}{2} + 1
     \end{align*}
     where the last inequality follows from the error due to the projection onto the grid with gap $\Delta$ is at most $\Delta/2$, and the fact that $g(s_h^k,a_h^k) \in [-1,1].$ Recall that $z_{H+1}^k = z_h^{k+1}$ and $z_1^1=0$. Then we have for all $(k,h) \in [K]\times [H]$,
     \begin{align*}
         |z_{h+1}^k| 
         &= \left|\sum_{k=1}^K\sum_{h=1}^H (z_{h+1}^k - z_h^k)\right|\\
         &\leq \sum_{k=1}^K\sum_{h=1}^H\left|z_{h+1}^k - z_h^k\right|\\
         &\leq \frac{\Delta T}{2} + T \\
         &\leq 2T.
     \end{align*}
    Then it implies that $|z_h^k| \leq 2T$ for all $(k,h) \in [K]\times [H]$. Since $z_h^k$ is obtained from $\Pi_\calZ$, we have
     \begin{align*}
         z_h^k \in\{\Delta n: n \in \bbZ, |\Delta n| \leq 2T\} = \calZ.
     \end{align*}
\end{proof}

\begin{lemma}\label{lem:concentration}
    Suppose that $S\geq 2$. With probability at least $1-\delta$, for all $(k,h,s,a,z) \in [K]\times [H] \times \calS \times \calA \times \calZ$, we have
    \begin{align*}
        \left|\left(\hat P_k(\cdot|s,a) - P(\cdot|s,a)\right)^\top V_{k,h+1}(\cdot, \psi(s,a,z))\right| \leq \frac{\beta}{\sqrt{N_k(s,a)\vee 1}}
    \end{align*}
    where $\beta = 2CS\sqrt{\log(2TS^2|\calA|/\delta)/2}$.
\end{lemma}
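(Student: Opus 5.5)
The plan is to avoid any concentration argument over the data-dependent function $V_{k,h+1}$. Such an argument would require a covering over the huge set $\calZ$ and a union bound over value functions. Instead, I would bound the error entrywise, using an $\ell_1$-type decomposition that only needs concentration of the individual transition probabilities $\hat P_k(s'|s,a)$. Those probabilities do not depend on $z$, $h$, or the value functions.

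\textbf{Step 1: span of $V_{k,h+1}$.} The first observation is that, for each fixed $z$, the clipped function $V_{k,h+1}(\cdot,z)$ has span at most $2C$. This follows from Line~\ref{line:clip}:
\[
V_{k,h+1}(s,z)\le \min_{s'}\widetilde V_{k,h+1}(s',z)+2C,
\qquad
V_{k,h+1}(s,z)\ge \min_{s'}\widetilde V_{k,h+1}(s',z).
\]
The lower bound holds because the minimum of $\widetilde V$ is never clipped. For $h=H$ we have $V_{k,H+1}\equiv 0$, which is trivial. Since $\hat P_k(\cdot|s,a)$ and $P(\cdot|s,a)$ are both probability vectors, I may subtract the constant $m=\min_{s'}\widetilde V_{k,h+1}(s',z')$, with $z'=\psi(s,a,z)$, without changing the inner product. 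So it suffices to bound
\[
\sum_{s'}\bigl|\hat P_k(s'|s,a)-P(s'|s,a)\bigr|\,\bigl|V_{k,h+1}(s',z')-m\bigr|
\le 2C\sum_{s'}\bigl|\hat P_k(s'|s,a)-P(s'|s,a)\bigr|.
\]

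\textbf{Step 2: concentration.} For each fixed $(s,a,s')$ and each possible count $n\in[T]$, I apply Hoeffding's inequality to the indicator variables $\Ind\{s_{\text{next}}=s'\}$ over the first $n$ visits to $(s,a)$. This is the standard stopping-time / sample-index argument. It gives
\[
\bigl|\hat P(s'|s,a)-P(s'|s,a)\bigr|\le \sqrt{\log(2/\delta')/(2n)}.
\]
I then take a union bound over $S^2|\calA|$ triples and $T$ values of $n$, with $\delta'=\delta/(TS^2|\calA|)$. This yields the logarithmic term $\log(2TS^2|\calA|/\delta)$, matching $\beta$. Summing over $s'$ gives an $\ell_1$ bound of $S\sqrt{\log(\cdot)/(2N_k)}$, and multiplying by $2C$ gives exactly $\beta/\sqrt{N_k}$.

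\textbf{Step 3: the case $N_k(s,a)=0$.} Here $\hat P_k=1/S$, and the $\ell_1$ distance is at most $2$. So the error is at most $2\cdot 2C\cdot\tfrac12=2C$, or more crudely $4C$ from the span bound. This is dominated by $\beta$, since $S\ge2$ and the log factor is at least about $1$; the assumption $S\ge 2$ is used here.

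\textbf{Main obstacle.} The step needing the most care is Step 2: $N_k$ is random and the samples are collected adaptively. I would handle this by indexing the samples by visit number (the i.i.d. "stack of samples" construction, or equivalently Azuma's inequality on a martingale) and union-bounding over all $n$. The crucial point is that the event is uniform over $(k,h,z)$ because it never involves $V_{k,h+1}$ or $z$.
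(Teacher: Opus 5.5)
Your proposal is correct and takes essentially the same route as the paper. Both use the span-$2C$ property of the clipped value function to reduce the error to $2C\|\hat P_k(\cdot|s,a)-P(\cdot|s,a)\|_1$, apply Hoeffding entrywise with a union bound over $(n,s,a,s')\in[T]\times\calS\times\calA\times\calS$, and handle $N_k(s,a)=0$ separately using $S\geq 2$; your derivation of the span bound from the clipping step and your stack-of-samples treatment of the random count $N_k(s,a)$ are slightly more careful than the paper, which simply asserts both.
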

\begin{proof}
    Fix $(k,h,s,a,z) \in [K]\times [H] \times \calS \times \calA \times \calZ$. By the algorithm, we have $\sum_{s'}\hat P_k(s'|s,a) =1$. Then we have
    \begin{align}\label{eq:concen 1}
        \left|\left(\hat P_k(\cdot|s,a) - P(\cdot|s,a)\right)^\top V_{k,h+1}(\cdot, \psi(s,a,z))\right| \leq 2C \left\|\hat P_k(\cdot|s,a) - P(\cdot|s,a)\right\|_1
    \end{align}
    where the inequality follows from \Cref{lem:(p-q)v}, and the fact that $\spn(V_{k,h}(\cdot, \psi(s,a,z))) \leq 2C$.

    Next, we consider the following two cases: (i) $N_k(s,a) = 0$ and (ii) $N_k(s,a) > 0$. For (i), the desired statement becomes trivial, i.e., when $S\geq 2$,
    \begin{align*}
        2C\left\|\hat P_k(\cdot|s,a) - P(\cdot|s,a)\right\|_1 \leq 4C \leq \frac{\beta}{\sqrt{N_k(s,a) \vee 1}}.
    \end{align*} 
    For (ii), it can be shown as follows. Fix $k,s,a,s'$. Let $n = N_k(s,a)$, and define the random process $\{X_i\}_{i=1}^n$ such that
    \begin{align*}
        X_i = \begin{cases}
            1 &\text{if $s'$ is visited immediately after the $i$th visit to $(s,a)$},\\
            0 &\text{otherwise}.
        \end{cases}
    \end{align*}
    Then, $\{X_i\}_{i=1}^n$ are i.i.d. random variables that satisfy $\bbE[X_i] = P(s'|s,a)$ for all $i$. By Hoeffding's inequality, with probability at least $1-\delta$,
    \begin{align*}
        \left|\frac{1}{n}\sum_{i=1}^n X_i - P(s'|s,a)\right| \leq \sqrt{\frac{\log (2/\delta)}{2n}}.
    \end{align*}
    By letting $\delta \leftarrow \delta / (TS^2|\calA|)$ and applying a union bound over $n,s,a,s'$, we have
    \begin{align*}
        \left|\frac{1}{n}\sum_{i=1}^n X_i - P(s'|s,a)\right| \leq \sqrt{\frac{\log (2TS^2|\calA|/\delta)}{2n}}, \ \ \forall (n,s,a,s') \in [T]\times\calS\times\calA\times\calS.
    \end{align*}
    Note that the above inequality holds for all $n \in [T]$, we can substitute $n = N_k(s,a)$. Moreover, we have $(1/n)\sum_{i=1}^n X_i = N_k(s,a,s') / (N_k(s,a)\vee 1) = \hat P_k(s'|s,a)$. Thus, it follows that
    \begin{align*}
        |\hat P_k(s'|s,a) - P(s'|s,a)| \leq \sqrt{\frac{\log (2TS^2|\calA|/\delta)}{2N_k(s,a)}}, \ \ \forall (k,s,a,s') \in [K]\times\calS\times\calA\times\calS.
    \end{align*}
    By summing this over $s'\in\calS$, we have
    \begin{align*}
        \left\|\hat P_k(\cdot|s,a) - P(\cdot|s,a)\right\|_1 \leq S\sqrt{\frac{\log(2KS^2|\calA|/\delta)}{2N_k(s,a)}}, \ \ \forall (k,s,a) \in [K]\times\calS\times\calA.
    \end{align*}
    Finally, for both cases (i) and (ii), we have shown that, with probability at least $1-\delta$,
    \begin{align*}
        2C\left\|\hat P_k(\cdot|s,a) - P(\cdot|s,a)\right\|_1 \leq 2CS\sqrt{\frac{\log(2KS^2|\calA|/\delta)}{2(N_k(s,a)\vee 1)}}, \ \ \forall (k,s,a) \in [K]\times\calS\times\calA
    \end{align*}
    By applying this to \eqref{eq:concen 1}, with probability at least $1-\delta$, for $(k,s,a) \in [K]\times\calS\times\calA$, we have
    \begin{align*}
        \left|\left(\hat P_k(\cdot|s,a) - P(\cdot|s,a)\right)^\top V_{k,h+1}(\cdot, \psi(s,a,z))\right|\leq 2CS\sqrt{\frac{\log(2KS^2|\calA|/\delta)}{2(N_k(s,a)\vee 1)}}
    \end{align*}
    as desired.
\end{proof}

\begin{lemma}[Restatement of Lemma~\ref{lem:optimism}]\label{lem:optimism app}
    Suppose that \Cref{lem:concentration} holds. For any $(k,h,s,z) \in [K]\times [H]\times \calS \times\calZ$, we have
    \begin{align*}
        V_{\tilde r, h}^{\pi^*}(s,z) \leq V_{k,h}(s,z).
    \end{align*}
\end{lemma}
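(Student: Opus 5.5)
We argue by backward induction on $h$, for a fixed episode $k$, with inductive hypothesis $V_{\tilde r,h+1}^{\pi^*}(s,z)\le V_{k,h+1}(s,z)$ for all $(s,z)\in\calS\times\calZ$. The base case $h=H+1$ is trivial, since both sides are zero. Here $V_{\tilde r,h}^{\pi^*}$ is the finite-horizon value of the (stationary) optimal CMDP policy $\pi^*$ on the augmented MDP. It satisfies the Bellman equation
\begin{align*}
V_{\tilde r,h}^{\pi^*}(s,z)=\sum_a \pi^*(a\mid s)\bigl(\tilde r(s,a,z)+P(\cdot\mid s,a)^\top V_{\tilde r,h+1}^{\pi^*}(\cdot,\psi(s,a,z))\bigr).
\end{align*}

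\textbf{Step 1 (optimism of the unclipped estimate).} For each $a$ we decompose
\begin{align*}
Q_{k,h}(s,a,z)-\tilde r(s,a,z)-P(\cdot\mid s,a)^\top V_{\tilde r,h+1}^{\pi^*}(\cdot,\psi(s,a,z))
&= (\hat P_k-P)(\cdot\mid s,a)^\top V_{k,h+1}(\cdot,\psi(s,a,z)) + \frac{\beta}{\sqrt{N_k(s,a)\vee1}}\\
&\quad + P(\cdot\mid s,a)^\top\bigl(V_{k,h+1}-V_{\tilde r,h+1}^{\pi^*}\bigr)(\cdot,\psi(s,a,z)).
\end{align*}
By \Cref{lem:concentration}, the first term is at least $-\beta/\sqrt{N_k(s,a)\vee1}$, so the first two terms together are nonnegative. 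The third term is nonnegative by the inductive hypothesis applied at safety state $\psi(s,a,z)$; note that $\psi(s,a,z)\in\calZ$ because $\psi$ is defined through $\Pi_\calZ$. Hence $Q_{k,h}(s,a,z)\ge \tilde r(s,a,z)+P(\cdot\mid s,a)^\top V_{\tilde r,h+1}^{\pi^*}(\cdot,\psi(s,a,z))$ for every $a$. Averaging over $\pi^*(\cdot\mid s)$ and using that a maximum dominates any average gives $\widetilde V_{k,h}(s,z)\ge V_{\tilde r,h}^{\pi^*}(s,z)$.

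\textbf{Step 2 (clipping preserves optimism).} Line~\ref{line:clip} sets $V_{k,h}(s,z)=\widetilde V_{k,h}(s,z)\wedge(\min_{s'}\widetilde V_{k,h}(s',z)+2C)$. The first argument of the minimum dominates $V_{\tilde r,h}^{\pi^*}(s,z)$ by Step 1. For the second, let $s_0$ attain $\min_{s'}\widetilde V_{k,h}(s',z)$. Then
\begin{align*}
\min_{s'}\widetilde V_{k,h}(s',z)+2C=\widetilde V_{k,h}(s_0,z)+2C\ge V_{\tilde r,h}^{\pi^*}(s_0,z)+2C\ge V_{\tilde r,h}^{\pi^*}(s,z),
\end{align*}
where the last inequality is the first statement of \Cref{lem:span}, $\spn(V_{\tilde r,h}^{\pi^*}(\cdot,z))\le 2C$. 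Thus $V_{k,h}(s,z)\ge V_{\tilde r,h}^{\pi^*}(s,z)$, which closes the induction. Since this holds for every $k$ on the single event of \Cref{lem:concentration}, the claim holds simultaneously for all $(k,h,s,z)$.

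\textbf{Main obstacle.} The delicate point is that \Cref{lem:concentration} must apply to $V_{k,h+1}$, a random, data-dependent function. That lemma as stated already handles this by bounding $\|\hat P_k-P\|_1$ uniformly and using that $\spn(V_{k,h+1}(\cdot,z'))\le 2C$. This span bound holds by construction of the clipping step: clipping caps each value at $\min_{s'}\widetilde V_{k,h+1}(s',z')+2C$, while every clipped value is still at least that minimum. So we may invoke the lemma directly. The only remaining care is to apply the inductive hypothesis at the shifted safety state $\psi(s,a,z)$ rather than at $z$, which is exactly why the induction is stated uniformly over all of $\calZ$.
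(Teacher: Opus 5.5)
Your proposal is correct and follows essentially the same argument as the paper. Both fix $k$ and do backward induction on $h$, decompose $Q_{k,h}-Q_{\tilde r,h}^{\pi^*}$ into a concentration term, the bonus, and the inductive-hypothesis term at the shifted safety state $\psi(s,a,z)$, use that the max dominates the $\pi^*$-average, and preserve optimism under clipping via the first statement of \Cref{lem:span}; you also spell out why the clipped $V_{k,h+1}(\cdot,z')$ has span at most $2C$ (needed for \Cref{lem:concentration}), which the paper only asserts.
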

\begin{proof}
    Fix $k \in [K]$. We prove the statement by induction on $h = H+1, \ldots, 1$. For the base case, since $V_{k,H+1}(s,z)$ is initialized to be $0$, we have $V_{\tilde r, H+1}^{\pi^*}(s,z)= V_{k,H+1}(s,z) =0$ for all $(s,z) \in \calS \times\calZ$. Now, we assume that $V_{\tilde r, h+1}^{\pi^*}(s,z) \leq V_{k,h+1}(s,z)$ for all $(s,z) \in \calS \times\calZ$. For simplicity, define the auxiliary function $\psi$ such that
    \[
        \psi(s,a,z) = \Pi_\calZ(z-g(s,a)).
    \]
    Note that
    \begin{align*}
        Q_{\tilde r, h}^{\pi^*}(s,a,z) 
        &= \tilde r(s,a,z) + \sum_{s'\in\calS, z'\in\bbR}\tilde P(s',z'|s,a,z) V_{\tilde r,h+1}^{\pi^*}(s',z')\\
        &= \tilde r(s,a,z) + \sum_{s'\in\calS} P(s'|s,a)V_{\tilde r, h+1}^{\pi^*}(s',\psi(s,a,z))
    \end{align*}
    where the first equality follows from the definition of the $Q$-function, and the last equality follows from the definition of $\tilde P(s',z'|s,a,z) = \Ind\{z' = \Pi_\calZ(z-g(s,a))\}P(s'|s,a) = \Ind\{z' = \psi(s,a,z)\}P(s'|s,a)$. Then it follows that
    \begin{align*}
        & Q_{k,h}(s,a,z) - Q_{\tilde r,h}^{\pi^*}(s,a,z) \\
        &= \hat P_k(\cdot|s,a)^\top V_{k,h+1}(\cdot, \psi(s,a,z)) - P(\cdot|s,a)^\top V_{\tilde r,h+1}^{\pi^*}(s',\psi(s,a,z)) + \frac{\beta}{\sqrt{N_k(s,a)}}\\
        &= \left(\hat P_k(\cdot|s,a) - P(\cdot|s,a)\right)^\top V_{k,h+1}(\cdot, \psi(s,a,z)) \\
        &\quad + P(\cdot|s,a)^\top \left(V_{k,h+1}(\cdot,\psi(s,a,z)) - V_{\tilde r,h+1}^{\pi^*}(\cdot,\psi(s,a,z))\right) + \frac{\beta}{\sqrt{N_k(s,a)}} \\
        &\geq 0
    \end{align*}
    where the inequality follows from \Cref{lem:concentration}, i.e., $(\hat P_k(\cdot|s,a) - P(\cdot|s,a))^\top V_{k,h+1}(\cdot, \psi(s,a,z)) \geq -\beta/\sqrt{N_k(s,a)}$, and the induction hypothesis, i.e., $V_{k,h+1}(s,z) \geq V_{\tilde r, h+1}^{\pi^*}(s,z)$. Then we have for all $(s,z) \in \calS \times \calZ$,
    \begin{align}\label{eq:tilde V > V}
        \widetilde V_{k,h}(s,z)=\max_{a\in\calA} Q_{k,h}(s,a,z) \geq \sum_{a\in\calA}\pi^*(a|s)Q_{\tilde r,h}^{\pi^*}(s,a,z) = V_{\tilde r, h}^{\pi^*}(s,z).
    \end{align}
    Recall that $V_{k,h}(s,z) = \widetilde V_{k,h}(s,z) \wedge (\min_{s'}\widetilde V_{k,h}(s',z) + 2C)$ for all $z\in\calZ$ by the algorithm. Then it follows that for all $(s,z) \in \calS \times\calZ$,
    \begin{align*}
        V_{k,h}(s,z) 
        &= \widetilde V_{k,h}(s,z) \wedge (\min_{s'}\widetilde V_{k,h}(s',z) + 2C) \\
        &\geq V_{\tilde r,h}^{\pi^*}(s,z) \wedge (\min_{s'} V_{\tilde r,h}^{\pi^*}(s',z) + 2C) \\
        &= V_{\tilde r,h}^{\pi^*}(s,z)
    \end{align*}
    where the inequality follows from \eqref{eq:tilde V > V}, and the last equality follows from the first statement of \Cref{lem:span app}. This concludes the induction, implying that $V_{k,h}(s,z) \geq V_{\tilde r, h}^{\pi^*}(s,z)$ for all $(h,s,z)\in[H] \times \calS \times\calZ$. Since we can apply the same argument for all $k$, we conclude the proof.
\end{proof}

\begin{lemma}\label{lem:martingale sum}
    With probability at least $1-\delta$, we have
    \begin{align*}
    \sum_{k=1}^K\sum_{h=1}^H \left(P(\cdot|s_h^k,a_h^k)^\top V_{k,h+1}(\cdot, z_{h+1}^k) - V_{k,h+1}(s_{h+1}^k,z_{h+1}^k)\right) \leq 2C\sqrt{2T\log(2/\delta)}.
    \end{align*}
\end{lemma}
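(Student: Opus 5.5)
The plan is to write the sum as a martingale difference sequence and apply Azuma--Hoeffding. Index the steps by $t=(k-1)H+h$ and let $\calF_t=\sigma(s_1^1,a_1^1,\ldots,s_h^k,a_h^k)$ be the history up to and including the action at step $t$. Define
\begin{align*}
X_t = P(\cdot|s_h^k,a_h^k)^\top V_{k,h+1}(\cdot, z_{h+1}^k) - V_{k,h+1}(s_{h+1}^k,z_{h+1}^k).
\end{align*}
The function $V_{k,h+1}$ is computed at the start of episode $k$ from $\hat P_k$ and $N_k$, so it is determined by data from episodes $1,\ldots,k-1$ and is $\calF_t$-measurable. The safety state $z_{h+1}^k=\psi(s_h^k,a_h^k,z_h^k)$ is a deterministic function of $\calF_t$. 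The only fresh randomness in $X_t$ is therefore $s_{h+1}^k\sim P(\cdot|s_h^k,a_h^k)$, and this gives $\bbE[X_t|\calF_t]=0$. So $\{X_t\}$ is a martingale difference sequence with respect to the filtration $\{\calF_{t+1}\}$.

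Next, bound each increment. For fixed $z$, line~\ref{line:clip} of \Cref{alg:main} gives
\begin{align*}
\min_{s'}\widetilde V_{k,h+1}(s',z) \le V_{k,h+1}(s,z) \le \min_{s'}\widetilde V_{k,h+1}(s',z)+2C.
\end{align*}
The lower bound holds because the clipping keeps the minimizer's value unchanged. Hence $\spn(V_{k,h+1}(\cdot,z))\le 2C$. For $h=H$ the increment vanishes, since $V_{k,H+1}=0$. Since $X_t$ is the difference between a $P$-average of $V_{k,h+1}(\cdot,z_{h+1}^k)$ and one value of the same function, $|X_t|\le \spn(V_{k,h+1}(\cdot,z_{h+1}^k))\le 2C$.

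Finally, apply Azuma--Hoeffding to $T$ increments bounded by $2C$. With probability at least $1-\delta$ (with $\delta/2$ splitting room if a two-sided statement is wanted),
\begin{align*}
\sum_t X_t\le 2C\sqrt{2T\log(2/\delta)}.
\end{align*}

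The step that needs the most care is the measurability argument: checking that $V_{k,h+1}$ does not depend on within-episode observations. This holds because $\hat P_k$ and $N_k$ are frozen during episode $k$; the counts $N_{k+1}$ are updated only after execution. Once that is settled, the rest is routine.
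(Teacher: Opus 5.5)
Your proposal is correct and follows the same route as the paper. The paper also defines $X_{k,h}$, notes that $V_{k,h+1}$ is determined by data from episodes before $k$ and that $z_{h+1}^k$ is determined by the history, and then applies Azuma--Hoeffding; your explicit check that clipping gives $\spn(V_{k,h+1}(\cdot,z))\le 2C$, hence $|X_t|\le 2C$, supplies the increment bound that the paper uses without stating.
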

\begin{proof}
    Recall that $\calF_{k,h} = \sigma(s_1,a_1,\ldots,s_h^k, a_h^k)$. Note that $z_{h+1}^k$ is $\calF_{k,h}$-measurable, and $V_{k,h+1}$ is $\calF_{k-1,H}$-measurable. Then it follows that
    \begin{align*}
        P(\cdot|s_h^k,a_h^k)^\top V_{k,h+1}(\cdot,z_{h+1}^k) = \bbE\left[V_{k,h+1}(s_{h+1}^k,z_{h+1}^k)|\calF_{k,h}\right].
    \end{align*}
    Define $X_{k,h} = P(\cdot|s_h^k,a_h^k)^\top V_{k,h+1}(\cdot, z_{h+1}^k) - V_{k,h+1}(s_{h+1}^k,z_{h+1}^k)$, which satisfies $\bbE[X_{k,h}|\calF_{k,h}] = 0$. Thus, $\{X_{k,h}\}_{k\in [K],h\in[H]}$ is a martingale difference sequence. Then by the Azuma-Hoeffding inequality, with probability at least $1-\delta$, we have
    \begin{align*}
        \sum_{k=1}^K \sum_{h=1}^H X_{k,h} \leq 2C\sqrt{2T\log(2/\delta)}.
    \end{align*}
\end{proof}

\begin{lemma}\label{lem:sum N}
    We have
    \begin{align*}
        \sum_{k=1}^K\sum_{h=1}^H\frac{2\beta}{\sqrt{N_k(s_h^k,a_h^k) \vee 1}} \leq 4\beta HSA + 6\beta\sqrt{SAT}.
    \end{align*}
\end{lemma}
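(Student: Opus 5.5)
The plan is the standard pigeonhole argument for episodic count-based bonuses, with care for the fact that $N_k$ is frozen during episode $k$ and only updated at its end. I will first fix a pair $(s,a)$ and split its visits into two regimes. For a given episode $k$, let $n_k(s,a)$ denote the number of visits to $(s,a)$ during episode $k$, so that $N_{k+1}(s,a)=N_k(s,a)+n_k(s,a)$ and $n_k(s,a)\le H$. The sum to bound is then
\begin{align*}
\sum_{k=1}^K\sum_{h=1}^H\frac{2\beta}{\sqrt{N_k(s_h^k,a_h^k)\vee 1}}
=2\beta\sum_{(s,a)}\sum_{k=1}^K\frac{n_k(s,a)}{\sqrt{N_k(s,a)\vee 1}}.
\end{align*}

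For each $(s,a)$, I will separate episodes with $N_k(s,a)< H$ from those with $N_k(s,a)\ge H$. In the first regime, each term is at most $n_k(s,a)$. Moreover, the total number of visits made during episodes that begin with $N_k(s,a)<H$ is less than $2H$: if $k^\circ$ is the last such episode, then $N_{k^\circ}(s,a)<H$ and $n_{k^\circ}(s,a)\le H$. This regime therefore contributes at most $2H$ per pair, i.e. at most $2\beta\cdot 2HSA=4\beta HSA$ overall.

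In the second regime, $N_k(s,a)\ge H\ge n_k(s,a)$, so $N_{k+1}(s,a)\le 2N_k(s,a)$. Hence $1/\sqrt{N_k}\le\sqrt2/\sqrt{N_{k+1}}$, and
\begin{align*}
\frac{n_k}{\sqrt{N_k}}\le \sqrt{2}\,\frac{N_{k+1}-N_k}{\sqrt{N_{k+1}}}\le 2\sqrt2\bigl(\sqrt{N_{k+1}}-\sqrt{N_k}\bigr),
\end{align*}
using $(b-a)/\sqrt b\le 2(\sqrt b-\sqrt a)$ for $0\le a\le b$. Telescoping over $k$ gives at most $2\sqrt2\sqrt{N_{K+1}(s,a)}$ for each pair. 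Summing over pairs and applying Cauchy--Schwarz with $\sum_{(s,a)}N_{K+1}(s,a)=T$ yields $2\sqrt{2}\sqrt{SAT}$. Multiplying by $2\beta$ gives $4\sqrt2\beta\sqrt{SAT}\le 6\beta\sqrt{SAT}$, and adding the two regimes gives the claimed bound.

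The main obstacle is the lag caused by updating the counts only once per episode. A naive per-step pigeonhole argument fails because up to $H$ visits in one episode may all see the same stale count. The doubling argument in the second regime, together with absorbing the early visits into the $4\beta HSA$ term, is exactly what handles this lag. The only remaining step is to check that the constants match: $2\beta\cdot 2H$ per pair gives the first term, and $2\beta\cdot 2\sqrt2<6\beta$ gives the second.
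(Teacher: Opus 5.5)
Your proof is correct and follows the paper's argument: you use the same split into episodes with $N_k(s,a)<H$ versus $N_k(s,a)\ge H$, the same $2H$ per-pair bound on early visits, and the same doubling bound $N_{k+1}(s,a)\le 2N_k(s,a)$ followed by telescoping and Cauchy--Schwarz. The only difference is the elementary inequality in the telescoping step: you use $(b-a)/\sqrt{b}\le 2(\sqrt{b}-\sqrt{a})$ to get the constant $2\sqrt{2}$, while the paper writes $1/\sqrt{N_k}=3/(\sqrt{4N_k}+\sqrt{N_k})\le 3/(\sqrt{N_{k+1}}+\sqrt{N_k})$ to get $3$, and both constants fit under $6\beta\sqrt{SAT}$.
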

\begin{proof}
    Note that
    \begin{align*}
        \sum_{k=1}^K\sum_{h=1}^H\frac{1}{\sqrt{N_k(s_h^k,a_h^k) \vee 1}} 
        &= \sum_{k=1}^K\sum_{h=1}^H\frac{1}{\sqrt{N_k(s_h^k,a_h^k) \vee 1}}\\
        &=  \sum_{s\in\calS}\sum_{a\in\calA}\sum_{k=1}^K\frac{\sum_{h=1}^H \Ind\{s_h^k=s,a_h^k=a\}}{\sqrt{N_k(s,a)\vee 1}}.
    \end{align*}
    Note that
    \begin{align}\label{eq:sum N 1}
    \begin{aligned}
        &\sum_{k=1}^K\frac{\sum_{h=1}^H \Ind\{s_h^k=s,a_h^k=a\}}{\sqrt{N_k(s,a)\vee 1}} \\
        &= \sum_{k:N_k(s,a)<H}\frac{\sum_{h=1}^H \Ind\{s_h^k=s,a_h^k=a\}}{\sqrt{N_k(s,a)\vee 1}} + \sum_{k:N_k(s,a)\geq H}\frac{\sum_{h=1}^H \Ind\{s_h^k=s,a_h^k=a\}}{\sqrt{N_k(s,a)\vee 1}}\\
        &= \sum_{k:N_k(s,a)<H}\frac{\sum_{h=1}^H \Ind\{s_h^k=s,a_h^k=a\}}{\sqrt{N_k(s,a)\vee 1}} + \sum_{k:N_k(s,a)\geq H}\frac{\sum_{h=1}^H \Ind\{s_h^k=s,a_h^k=a\}}{\sqrt{N_k(s,a)\vee 1}}.
    \end{aligned}
    \end{align}
    We bound the first term. For simplicity, let $k_0 = \max\{k\in[K]: N_k(s,a) < H\}$. Recall that $N_k(s,a)$ denotes the number of visits to $(s,a)$ up to $k-1$ episode. Then, we know that
    \begin{align}\label{eq:sum N 2}
    \begin{aligned}
        \sum_{k:N_k(s,a)<H}\frac{\sum_{h=1}^H \Ind\{s_h^k=s,a_h^k=a\}}{\sqrt{N_k(s,a)\vee 1}}
        &\leq \sum_{k:N_k(s,a)<H}\sum_{h=1}^H\Ind\{s_h^k=s,a_h^k=a\} \\
        &= \sum_{k=1}^{k_0}\sum_{h=1}^H \Ind\{s_h^k=s,a_h^k=a\}\\
        &= \sum_{k=1}^{k_0-1}\sum_{h=1}^H \Ind\{s_h^k=s,a_h^k=a\} + \sum_{h=1}^H\Ind\{s_h^{k_0}=s,a_h^{k_0}=a\}\\
        &\leq N_{k_0}(s,a) + H \\
        &\leq 2H
    \end{aligned}
    \end{align}
    where the first inequality follows from $\sqrt{N_k(s,a)\vee 1} \geq 1$, the first equality follows from the definition of $k_0$, the second inequality follows from the definition of $N_{k_0}(s,a)$ and the fact that $\sum_{h=1}^H \Ind\{s_h^{k_0}=s,a_h^{k_0}=a\} \leq H$, and the last inequality is again due to the definition of $k_0$.

    We then bound the second term. For each $k$ such that $N_k(s,a)\geq H$, we have
    \begin{align}\label{eq:Nk+1 < 2Nk}
    \begin{aligned}
        N_{k+1}(s,a) &= N_k(s,a) + \sum_{h=1}^H \Ind\{s_h^k=s,a_h^k=a\}\\
        &\leq N_k(s,a) + H \\
        &\leq 2N_k(s,a).
    \end{aligned}
    \end{align}
    Then it follows that
    \begin{align}\label{eq:sum N 3}
    \begin{aligned}
        \sum_{k:N_k(s,a)\geq H}\frac{\sum_{h=1}^H \Ind\{s_h^k=s,a_h^k=a\}}{\sqrt{N_k(s,a)\vee 1}} 
        &= \sum_{k:N_k(s,a)\geq H}\frac{\sum_{h=1}^H \Ind\{s_h^k=s,a_h^k=a\}}{\sqrt{N_k(s,a)}}\\
        &= 3\sum_{k:N_k(s,a)\geq H}\frac{\sum_{h=1}^H \Ind\{s_h^k=s,a_h^k=a\}}{\sqrt{4N_k(s,a)} + \sqrt{N_k(s,a)}}\\
        &\leq 3\sum_{k:N_k(s,a)\geq H}\frac{\sum_{h=1}^H \Ind\{s_h^k=s,a_h^k=a\}}{\sqrt{N_{k+1}(s,a)} + \sqrt{N_k(s,a)}}\\
        &= 3\sum_{k:N_k(s,a)\geq H}\left(\sqrt{N_{k+1}(s,a)}-\sqrt{N_k(s,a)}\right)\\
        &\leq 3\sqrt{N_{K+1}(s,a)}
    \end{aligned}
    \end{align}
    where the inequality follows from \eqref{eq:Nk+1 < 2Nk}, i.e., $4N_k(s,a) \geq 2N_k(s,a) \geq N_{k+1}(s,a)$, and the third equality follows from the fact that $\sum_{h=1}^H \Ind\{s_h^k=s,a_h^k=a\} = N_{k+1}(s,a) -  N_k(s,a) = (\sqrt{N_{k+1}(s,a)} - \sqrt{N_{k}(s,a)})(\sqrt{N_{k+1}(s,a)} + \sqrt{N_{k}(s,a)})$. By applying \eqref{eq:sum N 2} and \eqref{eq:sum N 3} to \eqref{eq:sum N 1}, we have
    \begin{align*}
        \sum_{k=1}^K\frac{\sum_{h=1}^H \Ind\{s_h^k=s,a_h^k=a\}}{\sqrt{N_k(s,a)\vee 1}} 
        &\leq 2H + 3\sqrt{N_{K+1}(s,a)}.
    \end{align*}
    By summing this over $(s,a)\in\calS\times\calA$, we have
    \begin{align*}
        \sum_{s\in\calS}\sum_{a\in\calA}\sum_{k=1}^K\frac{\sum_{h=1}^H \Ind\{s_h^k=s,a_h^k=a\}}{\sqrt{N_k(s,a)\vee 1}} 
        &\leq 2HSA + 3\sum_{s,a}\sqrt{N_{K+1}(s,a)}\\
        &\leq 2HSA + 3\sqrt{SA}\sqrt{\sum_{s,a}N_{K+1}(s,a)} \\
        &\leq 2HSA + 3\sqrt{SAT}.
    \end{align*}
    Finally, we have shown that
    \begin{align*}
        \sum_{k=1}^K\sum_{h=1}^H\frac{2\beta}{\sqrt{N_k(s_h^k,a_h^k) \vee 1}} 
        &\leq 2\beta \sum_{s\in\calS}\sum_{a\in\calA}\sum_{k=1}^K\frac{\sum_{h=1}^H \Ind\{s_h^k=s,a_h^k=a\}}{\sqrt{N_k(s,a)\vee 1}}\\
        &\leq 4\beta HSA + 6\beta\sqrt{SAT}.
    \end{align*}
    This concludes the proof.
\end{proof}

\begin{lemma}[Formal statement of \eqref{eq:RegAug}]\label{lem:tilde Regret}
Suppose that Lemmas~\ref{lem:concentration} and~\ref{lem:martingale sum} hold. Then we have
    \begin{align*}
        \RegAug(T) 
        \leq 2C\sqrt{2T\log(2/\delta)} + 4\beta HSA + 6\beta\sqrt{SAT}.
    \end{align*}
\end{lemma}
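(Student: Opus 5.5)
We will prove the bound through the standard optimistic regret decomposition, conditioning on the events of \Cref{lem:concentration} and \Cref{lem:martingale sum}. For each episode $k$, write
\begin{align*}
V_{\tilde r,1}^{\pi^*}(s_1^k,z_1^k) - \sum_{h=1}^H \tilde r(s_h^k,a_h^k,z_h^k)
= \bigl(V_{\tilde r,1}^{\pi^*}(s_1^k,z_1^k) - V_{k,1}(s_1^k,z_1^k)\bigr) + \sum_{h=1}^H \bigl(V_{k,h}(s_h^k,z_h^k) - V_{k,h+1}(s_{h+1}^k,z_{h+1}^k) - \tilde r(s_h^k,a_h^k,z_h^k)\bigr).
\end{align*}
This telescopes because $V_{k,H+1}\equiv 0$. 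By \Cref{lem:optimism app}, which is valid under \Cref{lem:concentration}, the first bracket is nonpositive.

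Next we handle each summand of the inner sum. The clipping in Line~\ref{line:clip} gives $V_{k,h}\le \widetilde V_{k,h}$. The action $a_h^k$ is greedy for $Q_{k,h}$, so $\widetilde V_{k,h}(s_h^k,z_h^k)=Q_{k,h}(s_h^k,a_h^k,z_h^k)$. Using Line~\ref{line:Q} together with $z_{h+1}^k=\psi(s_h^k,a_h^k,z_h^k)$, we get
\begin{align*}
V_{k,h}(s_h^k,z_h^k) - \tilde r(s_h^k,a_h^k,z_h^k) \le \hat P_k(\cdot|s_h^k,a_h^k)^\top V_{k,h+1}(\cdot,z_{h+1}^k) + \frac{\beta}{\sqrt{N_k(s_h^k,a_h^k)\vee1}}.
\end{align*}
We then add and subtract $P(\cdot|s_h^k,a_h^k)^\top V_{k,h+1}(\cdot,z_{h+1}^k)$. \Cref{lem:concentration} bounds $(\hat P_k-P)^\top V_{k,h+1}(\cdot,z_{h+1}^k)$ by $\beta/\sqrt{N_k\vee1}$, and it applies here because $z_{h+1}^k\in\calZ$ by \Cref{lem:z in Z}. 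The remaining term is $P(\cdot|s_h^k,a_h^k)^\top V_{k,h+1}(\cdot,z_{h+1}^k)-V_{k,h+1}(s_{h+1}^k,z_{h+1}^k)$, which is exactly the martingale difference in \Cref{lem:martingale sum}.

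Summing over $h$ and $k$ gives three contributions. The optimism terms are nonpositive. The martingale sum is at most $2C\sqrt{2T\log(2/\delta)}$ by \Cref{lem:martingale sum}. The bonus sum $\sum_{k,h} 2\beta/\sqrt{N_k(s_h^k,a_h^k)\vee1}$ is at most $4\beta HSA+6\beta\sqrt{SAT}$ by \Cref{lem:sum N}. Adding these yields the claim.

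The only delicate point is the validity of the concentration step. \Cref{lem:concentration} needs the clipped $V_{k,h+1}(\cdot,z)$ to have span at most $2C$ for every $z$. This holds by construction: the clipping forces $V_{k,h+1}(s,z)\le\min_{s'}\widetilde V_{k,h+1}(s',z)+2C$, and also $V_{k,h+1}(s,z)\ge\min_{s'}\widetilde V_{k,h+1}(s',z)$. The same clipped span bound makes the martingale increments bounded by $2C$, which is what \Cref{lem:martingale sum} uses. Beyond these checks the argument is routine bookkeeping.
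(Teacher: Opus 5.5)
Your proposal is correct and follows essentially the same route as the paper: optimism handles the first-step gap, and each step uses clipping ($V_{k,h}\le\widetilde V_{k,h}$), greediness, \Cref{lem:concentration}, the martingale term of \Cref{lem:martingale sum}, and \Cref{lem:sum N}; your telescoping sum over $h$ is just a flattened form of the paper's recursive unrolling. One cosmetic point: the index that \Cref{lem:concentration} needs is $z=z_h^k\in\calZ$, so that $\psi(s_h^k,a_h^k,z_h^k)=z_{h+1}^k$, rather than $z_{h+1}^k\in\calZ$, but this holds trivially because every safety state is an output of $\Pi_\calZ$ started from $z_1^1=0$.
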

\begin{proof}
    For each $k\in [K]$, we have
\begin{align*}
    V_{\tilde r,1}^{\pi^*}(s_1^k, z_1^k) - \sum_{h=1}^H \tilde r(s_h^k,a_h^k,z_h^k) 
    &= \underbrace{V_{\tilde r,1}^{\pi^*}(s_1^k,z_1^k) - V_{k,1}(s_1^k,z_1^k)}_{\text{(I)}} \\
    &\quad+ \underbrace{V_{k,1}(s_1^k,z_1^k)- \sum_{h=1}^H \tilde r(s_h^k,a_h^k,z_h^k)}_{\text{(II)}}.
\end{align*}
Note that $\text{(I)}\leq 0$ by \Cref{lem:optimism app}, and (II) can be rewritten as follows.
\begin{align*}
    V_{k,1}(s_1^k,z_1^k)- \sum_{h=1}^H \tilde r(s_h^k,a_h^k,z_h^k)
    &= V_{k,1}(s_1^k,z_1^k) - \tilde V_{k,1}(s_1^k,z_1^k) + \tilde V_{k,1}(s_1^k,z_1^k) - Q_{k,1}(s_1^k,a_1^k,z_1^k) \\
    &\quad+ \left(\hat P_k(\cdot|s_1^k,a_1^k)-P(\cdot|s_1^k,a_1^k)\right)^\top V_{k,2}(\cdot, z_{2}^k) \\
    &\quad+ P(\cdot|s_1^k,a_1^k)^\top V_{k,2}(\cdot, z_{2}^k) - V_{k,2}(s_2^k,z_2^k)\\
    &\quad+ \beta/\sqrt{N_k(s_1^k,a_1^k) \vee 1} \\
    &\quad+V_{k,2}(s_2^k,z_2^k)- \sum_{h=2}^H \tilde r(s_h^k,a_h^k,z_h^k).
\end{align*}
Each term can be bounded as follows. Due to the fact that $z_1^k \in \calZ$ (\Cref{lem:z in Z}) and the design of the clipping operation, we have $V_{k,1}(s_1^k,z_1^k)-\tilde V_{k,1}(s_1^k,z_1^k) \leq 0$. Since $\tilde V_{k,1}(s_1^k,z_1^k) = \max_{a} Q_{k,1}(s_1^k,a,z_1^k) = Q_{k,1}(s_1^k,a_1^k,z_1^k)$, we have $\tilde V_{k,1}(s_1^k,z_1^k)-Q_{k,1}(s_1^k,a_1^k,z_1^k)=0$. By \Cref{lem:concentration}, we have $(\hat P_k(\cdot|s_1^k,a_1^k)-P(\cdot|s_1^k,a_1^k))^\top V_{k,2}(\cdot, z_{2}^k) \leq \beta/\sqrt{N_k(s_1^k,a_1^k)\vee1}$. Applying these yields
\begin{align*}
    V_{k,1}(s_1^k,z_1^k)- \sum_{h=1}^H \tilde r(s_h^k,a_h^k,z_h^k)
    &\leq P(\cdot|s_1^k,a_1^k)^\top V_{k,2}(\cdot, z_{2}^k) - V_{k,2}(s_2^k,z_2^k)\\
    &\quad+ \frac{2\beta}{\sqrt{N_k(s_1^k,a_1^k) \vee 1}} + V_{k,2}(s_2^k,z_2^k)- \sum_{h=2}^H \tilde r(s_h^k,a_h^k,z_h^k).
\end{align*}
Due to the recursion, we have
\begin{align*}
    \text{(II)}&= V_{k,1}(s_1^k,z_1^k)- \sum_{h=1}^H \tilde r(s_h^k,a_h^k,z_h^k) \\
    &\leq \sum_{h=1}^H \left(P(\cdot|s_h^k,a_h^k)^\top V_{k,h+1}(\cdot, z_{h+1}^k) - V_{k,h+1}(s_{h+1}^k,z_{h+1}^k)\right)\\
    &\quad+ \sum_{h=1}^H\frac{2\beta}{\sqrt{N_k(s_h^k,a_h^k) \vee 1}}.
\end{align*}
By applying bounds on (I) and (II), we have
\begin{align*}
    V_{\tilde r,1}^{\pi^*}(s_1^k, z_1^k) - \sum_{h=1}^H \tilde r(s_h^k,a_h^k,z_h^k)
    &\leq \sum_{h=1}^H \left(P(\cdot|s_h^k,a_h^k)^\top V_{k,h+1}(\cdot, z_{h+1}^k) - V_{k,h+1}(s_{h+1}^k,z_{h+1}^k)\right)\\
    &\quad+ \sum_{h=1}^H\frac{2\beta}{\sqrt{N_k(s_h^k,a_h^k) \vee 1}}.
\end{align*}
By summing this over $k=1,\ldots,K$, we have
\begin{align*}
    &\sum_{k=1}^K\left(V_{\tilde r,1}^{\pi^*}(s_1^k, z_1^k) - \sum_{h=1}^H \tilde r(s_h^k,a_h^k,z_h^k)\right)\\
    &\leq \sum_{k=1}^K\sum_{h=1}^H \left(P(\cdot|s_h^k,a_h^k)^\top V_{k,h+1}(\cdot, z_{h+1}^k) - V_{k,h+1}(s_{h+1}^k,z_{h+1}^k)\right) + \sum_{k=1}^K\sum_{h=1}^H\frac{2\beta}{\sqrt{N_k(s_h^k,a_h^k) \vee 1}}\\
    &\leq 2C\sqrt{2T\log(2/\delta)} + 4\beta HSA + 6\beta\sqrt{SAT}.
\end{align*}
\end{proof}

\subsection{Deferred proofs for Step 2 of Section~\ref{sec:analysis}}
\begin{lemma}[Formal statement of \eqref{eq:Regret + PhiW main}]\label{lem:Regret + PhiW}
    Suppose that Lemmas~\ref{lem:concentration} and~\ref{lem:martingale sum} hold. Then we have
    \begin{align*}
        \Regret(T) + \Phi_W(z_{T+1}) \leq 2C\sqrt{2T\log(2/\delta)} + 4\beta HSA + 6\beta\sqrt{SAT} + KC.
    \end{align*}
\end{lemma}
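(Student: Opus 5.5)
We combine the bound of \Cref{lem:tilde Regret} with the telescoping identity \eqref{eq:telescope} and the second statement of \Cref{lem:span app}. The plan is to write $\RegAug(T)$ exactly in terms of $\Regret(T)$, the terminal potential, and a per-episode conversion error, and then lower bound that conversion error.

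\textbf{Step 1: rewrite $\RegAug(T)$.} Write $z_{T+1}=z_{H+1}^K$. Episodes are chained through $s_1^{k+1}=s_{H+1}^k$ and $z_1^{k+1}=z_{H+1}^k$, so the realized trajectory over $t=1,\dots,T$ is a single path with $z_1=0$. The potential differences in $\tilde r$ therefore telescope across all episodes. Since $\Phi_W(0)=0$,
\[
\sum_{k=1}^K\sum_{h=1}^H \tilde r(s_h^k,a_h^k,z_h^k)=\sum_{t=1}^T r(s_t,a_t)-\Phi_W(z_{T+1}).
\]
Inserting $\pm HJ_r^*$ in each episode and using $KH=T$ gives
\[
\RegAug(T)=\Regret(T)+\Phi_W(z_{T+1})+\sum_{k=1}^K\bigl(V_{\tilde r,1}^{\pi^*}(s_1^k,z_1^k)-HJ_r^*\bigr).
\]
This is an exact identity; no probabilistic argument is needed here.

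\textbf{Step 2: lower bound the conversion error.} By \Cref{lem:z in Z}, each $z_1^k$ lies in $\calZ$. The second statement of \Cref{lem:span app} then applies at $(s_1^k,z_1^k)$ and gives $V_{\tilde r,1}^{\pi^*}(s_1^k,z_1^k)-HJ_r^*\ge -C$ for every $k$. Summing over the $K$ episodes, the conversion error is at least $-KC$.

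\textbf{Step 3: conclude.} Rearranging the identity from Step 1 gives
\[
\Regret(T)+\Phi_W(z_{T+1})\le \RegAug(T)+KC.
\]
Under the hypothesis that Lemmas~\ref{lem:concentration} and~\ref{lem:martingale sum} hold, \Cref{lem:tilde Regret} bounds $\RegAug(T)$ by $2C\sqrt{2T\log(2/\delta)}+4\beta HSA+6\beta\sqrt{SAT}$. Substituting this yields the claimed bound.

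The only delicate point is the bookkeeping in Step 1. We must check that the per-episode sums of potential differences concatenate into one telescoping sum from $z_1^1=0$ to $z_{H+1}^K$, and this relies on the carry-over of the safety state across episode boundaries. We must also check that $\tilde r$ at each visited augmented state uses exactly $\psi(s_h^k,z_h^k,a_h^k)=z_{h+1}^k$, which holds by the update rule in \Cref{alg:main}.
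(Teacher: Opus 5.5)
Your proposal is correct and follows the paper's proof essentially step for step. You use the telescoping identity $\RegAug(T)=\Regret(T)+\Phi_W(z_{T+1})+\sum_{k}\bigl(V_{\tilde r,1}^{\pi^*}(s_1^k,z_1^k)-HJ_r^*\bigr)$, bound the conversion error by $KC$ via the second statement of \Cref{lem:span app}, and bound $\RegAug(T)$ by \Cref{lem:tilde Regret}, exactly as the paper does. Your explicit appeal to \Cref{lem:z in Z} to guarantee $z_1^k\in\calZ$ is a small check that the paper leaves implicit.
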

\begin{proof}
    Recall that $\tilde r(s,a,z) = r(s,a) + \Phi_W(z) - \Phi_W(\psi(s,a,z)$ and $\Phi_W(z_1^1) = 0$. Then we have
    \begin{align*}
        -\sum_{t=1}^T \tilde r(s_t,a_t, z_t) 
        &= -\sum_{t=1}^T \left(r(s_t,a_t) + \Phi_W(z_t) - \Phi_W(\psi(s_t,a_t,z_t))\right)\\
        &= -\sum_{t=1}^T \left(r(s_t,a_t) + \Phi_W(z_t) - \Phi_W(z_{t+1})\right)\\
        &= -\sum_{t=1}^T r(s_t,a_t) + \Phi_W(z_{T+1})
    \end{align*}
    Then we have
    \begin{align*}
        \RegAug(T) 
        &= \sum_{k=1}^K \left(V_{\tilde r,1}^{\pi^*}(s_1^k,z_1^k) - \sum_{h=1}^H\tilde r(s_h^k,a_h^k,z_h^k)\right) \\
        &= TJ_r^* - \sum_{t=1}^T r(s_t,a_t) + \Phi_W(z_{T+1}) + \sum_{k=1}^K V_{\tilde r,1}^{\pi^*}(s_1^k,a_1^k) - TJ_r^* \\
        &= \Regret(T) + \Phi_W(z_{T+1}) + \sum_{k=1}^K V_{\tilde r,1}^{\pi^*}(s_1^k,a_1^k) - TJ_r^*
    \end{align*}
    Then it can be rewritten as
    \begin{align*}
        \Regret(T) + \Phi_W(z_{T+1})= \RegAug(T) + \sum_{k=1}^K \left(HJ_r^* -  V_{\tilde r,1}^{\pi^*}(s_1^k,a_1^k)\right).
    \end{align*}
    By \Cref{lem:tilde Regret}, we have
    \begin{align*}
        \RegAug(T) \leq 2C\sqrt{2T\log(2/\delta)} + 4\beta HSA + 6\beta\sqrt{SAT}. 
    \end{align*}
    By \Cref{lem:span app}, we have
    \begin{align*}
        \sum_{k=1}^K \left(HJ_r^* -  V_{\tilde r,1}^{\pi^*}(s_1^k,a_1^k)\right) \leq KC.
    \end{align*}
    Finally, we have
    \begin{align*}
        \Regret(T) + \Phi_W(z_{T+1})\leq 2C\sqrt{2T\log(2/\delta)} + 4\beta HSA + 6\beta\sqrt{SAT} + KC.
    \end{align*}
\end{proof}

\subsection{Deferred proofs for Step 3 of Section~\ref{sec:analysis}}
\begin{lemma}[Formal statement of \eqref{eq:zT main}]\label{lem:ZT+1}
    Suppose that Lemmas~\ref{lem:regret lower bound},~\ref{lem:concentration}, and~\ref{lem:martingale sum} hold. Then we have
    \begin{align*}
        z_{T+1} \leq \tbigO\left((\spn(v_r^*) +\spn(v_g^*)+\spn(v_g^*)^2)S^2A\sqrt{T}\right)
    \end{align*}
\end{lemma}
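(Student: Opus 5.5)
The plan is to combine the upper bound of \Cref{lem:Regret + PhiW} with the duality-based lower bound of \Cref{lem:regret lower bound}, and then invert the Huber potential using the fourth statement of \Cref{lem:PhiW app}.

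\textbf{Step 1: relate $\Violation(T)$ to $z_{T+1}$.} Unrolling $z_{t+1}=\Pi_\calZ(z_t-g(s_t,a_t))$ from $z_1=0$, exactly as in the proof of \Cref{lem:J-V}, gives
\begin{align*}
\bigl|z_{T+1}-\Violation(T)\bigr|\leq T\Delta/2=\sqrt{T}/2.
\end{align*}
Hence $-\lambda^*\Violation(T)\geq -\lambda^* z_{T+1}-\lambda^*\sqrt{T}/2$. Plugging this into \Cref{lem:regret lower bound} yields
\begin{align*}
\Regret(T)\geq -\lambda^* z_{T+1} - E,
\qquad
E = \spn(v_{\lambda^*}^*)\sqrt{2T\log(1/\delta)}+\spn(v_{\lambda^*}^*)+\lambda^*\sqrt{T}/2 .
\end{align*}

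\textbf{Step 2: combine with the upper bound.} \Cref{lem:Regret + PhiW} gives $\Regret(T)+\Phi_W(z_{T+1})\leq B$, where
\begin{align*}
B=2C\sqrt{2T\log(2/\delta)}+4\beta HSA+6\beta\sqrt{SAT}+KC .
\end{align*}
Subtracting the lower bound from Step 1 gives
\begin{align*}
\Phi_W(z_{T+1})-\lambda^* z_{T+1}\leq B+E .
\end{align*}

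\textbf{Step 3: invert the potential.} \Cref{lem:optimal dual variable bound} gives $\lambda^*\leq 1/\gamma<2/\gamma=\Lambda$, so the fourth statement of \Cref{lem:PhiW app} applies with $\lambda=\lambda^*$. Since $\Lambda-\lambda^*\geq 1/\gamma$, it yields
\begin{align*}
\tfrac{1}{\gamma}[z_{T+1}]_+\leq B+E+\Lambda W/2 .
\end{align*}
Multiplying by $\gamma$ and using $\Lambda W/2=\sqrt{T}/\gamma$, we get $z_{T+1}\leq \gamma(B+E)+\sqrt{T}$. The factor $1/\gamma$ hidden in $\Lambda$ cancels against this multiplication by $\gamma$. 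This explains why the final bound carries no $1/\gamma$.

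\textbf{Step 4: bookkeeping orders.} With $H=K=W=\sqrt{T}$, $\Delta=1/\sqrt T$, and $\Lambda=2/\gamma$, the definition of $C$ gives
\begin{align*}
\gamma C=\bigO\bigl(\gamma\spn(v_r^*)+\spn(v_g^*)+\spn(v_g^*)^2+1\bigr),
\end{align*}
since $\Lambda H/W=\Lambda$ and $H\Lambda\Delta/2=\Lambda/2$. In $B$, the terms $\beta HSA$ and $\beta\sqrt{SAT}$ are $\tbigO(CS^2A\sqrt T)$, because $\beta=\tbigO(CS)$ and $H=\sqrt T$. The terms $KC$ and $C\sqrt{T}$ are dominated by these. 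Therefore $\gamma B=\tbigO\bigl((\gamma\spn(v_r^*)+\spn(v_g^*)+\spn(v_g^*)^2+1)S^2A\sqrt T\bigr)$.

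For $E$, we have $\gamma E=\tbigO(\gamma\spn(v_{\lambda^*}^*)\sqrt T+\sqrt T)$. This contains an $\spn(v_{\lambda^*}^*)$ term, which the statement of \Cref{lem:ZT+1} omits but \Cref{thm:main} includes. I would carry it explicitly, or absorb it into $\tbigO$ if the authors treat it that way. I would also treat $\gamma\le 1$ as a constant factor, which is valid since $|J_g|\le1$.

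\textbf{Main obstacle.} The main difficulty is this bookkeeping in Step 4: checking that multiplying by $\gamma$ cancels every $\Lambda$ inside $C$, and reconciling the stated bound with the $\spn(v_{\lambda^*}^*)$ term coming from $E$. Steps 1 to 3 are direct once the correct inequality directions are kept.
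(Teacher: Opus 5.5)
Your proposal is correct and follows the paper's proof step for step: you combine \Cref{lem:Regret + PhiW} with \Cref{lem:regret lower bound}, convert $\Violation(T)$ to $z_{T+1}$ at cost $\Delta T/2$, invert the potential via the fourth statement of \Cref{lem:PhiW app} using $\Lambda-\lambda^*\geq 1/\gamma$, and multiply through by $\gamma$, with bookkeeping that matches the paper's. You are also right about the $\spn(v_{\lambda^*}^*)\sqrt{T}$ term: the paper's own derivation ends with exactly that extra term, as does \Cref{thm:main}, so the lemma statement as written simply omits it.
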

\begin{proof}
    Recall that, by Lemmas~\ref{lem:Regret + PhiW} and~\ref{lem:regret lower bound}, we have
    \begin{align*}
        &\Regret(T) + \Phi_W(z_{T+1}) \leq 2C\sqrt{2T\log(2/\delta)} + 4\beta HSA + 6\beta\sqrt{SAT} + KC, \\
        &\Regret(T) \geq -\lambda^* \Violation(T) - \spn({v_{\lambda^*}^*})\sqrt{2T\log(1/\delta)} - \spn({v_{\lambda^*}^*}).
    \end{align*}
    These lead to
    \begin{align*}
        \Phi_W(z_{T+1}) 
        &\leq 2C\sqrt{2T\log(2/\delta)} + 4\beta HSA + 6\beta\sqrt{SAT} + KC \\
        &\quad+ \lambda^* \Violation(T) + \spn({v_{\lambda^*}^*})\sqrt{2T\log(1/\delta)} + \spn({v_{\lambda^*}^*})
    \end{align*}
    Moreover, by applying a similar argument in the proof of \Cref{lem:z in Z}, we can show the following.
    \begin{align}\label{eq:Viol - Z}
    \begin{aligned}
        \left|\Violation(T) - z_{T+1}\right| 
        &= \left|\sum_{t=1}^T (0-g(s_t,a_t))) - z_{T+1}\right|\\
        &= \left|-\sum_{t=1}^T g(s_t,a_t)-\sum_{t=1}^T (z_{t+1} - z_t)\right|\\
        &=\left|\sum_{t=1}^T \left(z_t - g(s_t,a_t) - z_{t+1}\right)\right|\\
        &\leq\sum_{t=1}^T \left|z_t - g(s_t,a_t) - z_{t+1}\right|\\
        &\leq \frac{\Delta T}{2}
    \end{aligned}
    \end{align}
    Thus, we have
    \begin{align*}
    \begin{aligned}
        \Phi_W(z_{T+1}) 
        &\leq 2C\sqrt{2T\log(2/\delta)} + 4\beta HSA + 6\beta\sqrt{SAT} + KC  \\
        &\quad+ \lambda^* z_{T+1} + \lambda^* \left|\Violation(T) - z_{T+1}\right|\\
        &\quad+\spn(v_{\lambda^*}^*)\sqrt{2T\log(1/\delta)}+ \spn(v_{\lambda^*}^*)\\
        &\leq 2C\sqrt{2T\log(2/\delta)} + 4\beta HSA + 6\beta\sqrt{SAT} + KC  \\
        &\quad+ \lambda^* z_{T+1} + \frac{\lambda^*\Delta T}{2}+\spn(v_{\lambda^*}^*)\sqrt{2T\log(1/\delta)}+ \spn(v_{\lambda^*}^*).
    \end{aligned}
    \end{align*}
    Then it can be rewritten as
    \begin{align*}
        \Phi_W(z_{T+1}) - \lambda^*z_{T+1} 
        &\leq 2C\sqrt{2T\log(2/\delta)} + 4\beta HSA + 6\beta\sqrt{SAT} + KC \\
        &\quad + \frac{\lambda^*\Delta T}{2}+\spn(v_{\lambda^*}^*)\sqrt{2T\log(1/\delta)}+ \spn(v_{\lambda^*}^*).
    \end{align*}
    By the fourth statement of \Cref{lem:PhiW app}, we have $(\Lambda - \lambda^*)z_{T+1} - \Lambda W / 2 \leq \Phi_W(z_{T+1}) - \lambda^*z_{T+1}$. Then it follows that
    \begin{align*}
        (\Lambda - \lambda^*)z_{T+1} - \Lambda W / 2 
        &\leq 2C\sqrt{2T\log(2/\delta)} + 4\beta HSA + 6\beta\sqrt{SAT} + KC \\
        &\quad + \frac{\lambda^*\Delta T}{2}+\spn(v_{\lambda^*}^*)\sqrt{2T\log(1/\delta)}+ \spn(v_{\lambda^*}^*).
    \end{align*}
    Recall that $\Lambda = 2/\gamma$ and $\lambda^* \leq 1/\gamma$ by \Cref{lem:optimal dual variable bound}. Thus, we have $\Lambda - \lambda^* \geq 1/\gamma$. Then it follows that
    \begin{align*}
        z_{T+1} &\leq 2\gamma C\sqrt{2T\log(2/\delta)} + 4\gamma \beta HSA + 6\gamma \beta\sqrt{SAT} + \gamma KC \\
        &\quad+ \frac{\gamma \lambda^*\Delta T}{2}+\spn(v_{\lambda^*}^*)\sqrt{2T\log(1/\delta)}+ \gamma \spn(v_{\lambda^*}^*) + \frac{\gamma \Lambda W}{2}.
    \end{align*}
    Again, recall that $\lambda^*\leq 1/\gamma$, $0 \leq \gamma \leq 1$, and the algorithmic parameter choice~\eqref{eq:alg param}, i.e.,
    \begin{align*}
    \Delta &= \frac{1}{\sqrt{T}},\ W=K=H = \sqrt{T}, \\
    C &= \spn(v_r^*) + \Lambda \spn(v_g^*) + \frac{\Lambda}{W}\left(\spn(v_g^*)^2 + 2H(\spn(v_g^*)+2)^2\right)+ \frac{H\Lambda\Delta}{2},\\
    &=\bigO\left(\spn(v_r^*) + \frac{1}{\gamma} (\spn(v_g^*) + \spn(v_g^*)^2) + \frac{1}{\gamma}\right),\\
    \beta &= 2CS\sqrt{\log(2TS^2|\calA|/\delta)/2} \\
    &\leq \tbigO\left(S\spn(v_r^*) + \frac{S}{\gamma} (\spn(v_g^*) + \spn(v_g^*)^2) + \frac{S}{\gamma}\right). 
\end{align*}
Then it follows that
\begin{align*}
    z_{T+1} \leq \tbigO\left((\spn(v_r^*) +\spn(v_g^*)+\spn(v_g^*)^2)S^2A\sqrt{T} + \spn(v_{\lambda^*}^*)\sqrt{T}\right)
\end{align*}
\end{proof}

\subsection{Proof of Theorem~\ref{thm:main}}
    By a union bound, Lemmas~\ref{lem:regret lower bound},~\ref{lem:concentration}, and~\ref{lem:martingale sum} hold with probability at least $1-3\delta$. Throughout the proof, suppose that these lemmas hold. Then, by \Cref{lem:Regret + PhiW}, we have
    \begin{align}\label{eq:Regret + PhiW}
        \Regret(T) + \Phi_W(z_{T+1}) \leq 2C\sqrt{2T\log(2/\delta)} + 4\beta HSA + 6\beta\sqrt{SAT} + KC. 
    \end{align}
    For $\Regret(T)$, since $\Phi_W(z_{T+1})\geq 0$, it directly follows that
    \begin{align*}
        \Regret(T) 
        &\leq 2C\sqrt{2T\log(2/\delta)} + 4\beta HSA + 6\beta\sqrt{SAT} + KC\\
        &\leq \tbigO\left(\left(\spn(v_r^*)+\frac{1}{\gamma}(\spn(v_g^*)+\spn(v_g^*)^2+1)\right)S^2A\sqrt{T}\right)
    \end{align*}
    Next, we bound $\Violation(T)$. By \Cref{lem:ZT+1}, we have
    \begin{align*}
        z_{T+1} \leq \tbigO\left((\spn(v_r^*) +\spn(v_g^*)+\spn(v_g^*)^2)S^2A\sqrt{T}+\spn(v_{\lambda^*}^*)\sqrt{T}\right).
    \end{align*}
    Moreover, by \eqref{eq:Viol - Z}, we have $|\Violation(T) - z_{T+1}| \leq \Delta T/2 \leq \tbigO(\sqrt{T})$. Then it follows that
    \begin{align*}
        \Violation(T) 
        &\leq z_{T+1} + |\Violation(T) - z_{T+1}|\\
        &\leq \tbigO\left((\spn(v_r^*) +\spn(v_g^*)+\spn(v_g^*)^2)S^2A\sqrt{T}+\spn(v_{\lambda^*}^*)\sqrt{T}\right).
    \end{align*}
    \qed

\section{Auxiliary Lemmas}

\begin{lemma}\label{lem:(p-q)v}
    Let $d\in \bbZ_+$, and let $p,q \in [0,1]^d$ such that $\sum_{i=1}^d p_i = \sum_{i=1}^d q_i = 1$. Let $v \in \bbR^d$. Then we have $|(p-q)^\top v| \leq \spn(v) ||p-q||_1$.
\end{lemma}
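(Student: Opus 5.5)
We need to prove $|(p-q)^\top v|\le \spn(v)\|p-q\|_1$ for probability vectors $p,q$. The idea is to use that $p-q$ sums to zero, so $v$ can be shifted by any constant $c$ without changing the inner product.

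First, since $\sum_i (p_i-q_i)=0$, we have $(p-q)^\top v=(p-q)^\top (v-c\mathbf 1)$ for every $c\in\bbR$. Then Hölder's inequality gives
\[
|(p-q)^\top v|\le \|p-q\|_1\,\|v-c\mathbf 1\|_\infty .
\]
Choosing the midpoint $c=(\max_i v_i+\min_i v_i)/2$ yields $\|v-c\mathbf 1\|_\infty=\spn(v)/2$. This already proves the sharper bound $\tfrac12\spn(v)\|p-q\|_1$, which implies the claim.

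As a sanity check, one can avoid the midpoint trick. Split the indices into $I_+=\{i:p_i>q_i\}$ and its complement. Both parts carry mass $m=\tfrac12\|p-q\|_1$ in absolute value, so
\[
(p-q)^\top v\le m\max_i v_i - m\min_i v_i = m\,\spn(v).
\]
The lower bound follows symmetrically.

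There is no real obstacle here. The only point needing care is the zero-sum property of $p-q$, which follows from the normalization assumption $\sum_i p_i=\sum_i q_i=1$. The condition $p,q\in[0,1]^d$ is not actually needed beyond this.
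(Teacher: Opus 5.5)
Your proof is correct and follows essentially the same route as the paper, which also uses $\sum_i (p_i-q_i)=0$ to shift $v$ by a constant (there $\min_j v_j$) and then applies the $\ell_1$--$\ell_\infty$ bound (the paper calls this Cauchy--Schwarz, but it is the H\"older step you name). Your midpoint choice of shift only sharpens the constant to $\tfrac12\spn(v)\|p-q\|_1$, which implies the stated bound.
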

\begin{proof}
    Let $v_{i_{\min}} = \argmin_{j\in[d]} v_j$. Note that
    \begin{align*}
        |(p-q)^\top v| 
        &= \left|\sum_{i=1}^d (p_i-q_i)v_i\right| \\
        &= \left|\sum_{i=1}^d (p_i-q_i)(v_i-v_{i_{\min}})\right|\\
        &\leq ||p-q||_1\cdot\max_{j\in [d]} |v_j - v_{i_{\min}}|\\
        &= ||p-q||_1 \cdot \spn(v)
    \end{align*}
    where the second equality follows from the fact that $\sum_{i=1}^d(p_i-q_i) v_{i_{\min}} = 0$, the inequality follows from the Cauchy-Schwarz inequality, and the last equality follows from $\max_{j\in [d]} |v_j - v_{i_{\min}}| = \spn(v)$.

\end{proof}

\begin{lemma}
[Lemma 4 in \citet{chen2022learning}]
\label{lem:chen2022 lemma 4}
    For any stationary policy $\pi$ and reward function $r \in \bbR^{S \times|\calA|}$ such that $J_r^{\pi}(s) = J_r^\pi$ for all $s\in\calS$, we have $|V_{r,h}^{\pi}(s) - (H-h+1)J_r^\pi| \leq \spn(v_r^\pi)$ for $(s,h)\in \calS \times [H]$.
\end{lemma}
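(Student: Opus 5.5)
The plan is to write the finite-horizon value as $(H-h+1)J_r^\pi$ plus a telescoping bias correction, using the average-reward Poisson (evaluation) equation for $\pi$. Let $r_\pi(s)=\sum_a\pi(a|s)r(s,a)$ and $P_\pi(s'|s)=\sum_a\pi(a|s)P(s'|s,a)$. Let $v_r^\pi$ be the bias, defined as the Cesàro limit in the same way as $v_r^*$ in Section~\ref{sec:problem-setting}.

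\textbf{Step 1 (Poisson equation).} For a finite state space, the standard theory of stationary Markov chains (Puterman, Ch.~8) gives the gain vector $J_r^\pi = P_\pi^\star r_\pi$, where $P_\pi^\star$ is the Cesàro limit matrix. The bias $v_r^\pi=H_\pi r_\pi$ is defined through the deviation matrix, and the pair satisfies
\[
J_r^\pi(s)+v_r^\pi(s)=r_\pi(s)+(P_\pi v_r^\pi)(s)\quad\forall s\in\calS .
\]
By assumption $J_r^\pi(s)\equiv J_r^\pi$ is constant, so this reads $J_r^\pi+v_r^\pi(s)=\bbE_\pi[r(s,a)+v_r^\pi(s')\mid s]$. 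This is the same identity the paper already uses for $v_g^*$ in the proof of \Cref{lem:J-V}. I would simply cite it rather than reprove it.

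\textbf{Step 2 (telescoping).} Define $\xi_{h'}=r(s_{h'},a_{h'})+v_r^\pi(s_{h'+1})-v_r^\pi(s_{h'})-J_r^\pi$. By Step~1 and the tower rule, $\bbE_\pi[\xi_{h'}\mid\calF_{h'}]=0$. Summing from $h'=h$ to $H$ and taking $\bbE_\pi[\cdot\mid s_h=s]$ gives
\[
V_{r,h}^\pi(s)=\bbE_\pi\Bigl[\sum_{h'=h}^H r(s_{h'},a_{h'})\Bigm| s_h=s\Bigr]=(H-h+1)J_r^\pi+v_r^\pi(s)-\bbE_\pi[v_r^\pi(s_{H+1})\mid s_h=s].
\]
One can equivalently prove this by backward induction on $h$, using $V_{r,H+1}^\pi\equiv0$ and $V_{r,h}^\pi=r_\pi+P_\pi V_{r,h+1}^\pi$.

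\textbf{Step 3 (span bound).} The remaining term $v_r^\pi(s)-\bbE[v_r^\pi(s_{H+1})]$ is a difference between a value of $v_r^\pi$ and an average of values of $v_r^\pi$. It therefore lies in $[\min v_r^\pi-\max v_r^\pi,\;\max v_r^\pi-\min v_r^\pi]$, and its absolute value is at most $\spn(v_r^\pi)$, which is the claim.

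The only delicate point is Step~1: the Poisson equation with a \emph{constant} gain holds for a general, possibly multichain, stationary policy. This is exactly where the hypothesis $J_r^\pi(s)\equiv J_r^\pi$ enters, since otherwise the gain term would be $J_r^\pi(s)$ and the telescoping in Step~2 would produce $\sum_{h'}\bbE[J_r^\pi(s_{h'})]$ instead of $(H-h+1)J_r^\pi$. Everything else is routine.
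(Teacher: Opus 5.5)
Your proof is correct and is the standard argument. The paper does not prove this lemma itself but cites \citet{chen2022learning}, and your Poisson-equation telescoping $V_{r,h}^\pi(s)-(H-h+1)J_r^\pi=v_r^\pi(s)-\bbE_\pi[v_r^\pi(s_{H+1})\mid s_h=s]$ is the same martingale-difference device (with the $\xi_{h'}$) that the paper uses for $v_g^*$ in the proof of \Cref{lem:J-V}. As a minor aside: $P_\pi$ leaves the gain vector invariant, so the same telescoping gives $|V_{r,h}^\pi(s)-(H-h+1)J_r^\pi(s)|\le\spn(v_r^\pi)$ even when the gain is not constant, which means the constant-gain hypothesis is needed only to state the bound with a scalar $J_r^\pi$ and Step~1 is less delicate than you suggest.
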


\section{Experimental Details}\label{app:experiments}

\paragraph{Environment}
We use a tabular CMDP constructed by modifying the MDP of \citet{he2022near} to incorporate a constraint and an infinite-horizon setting, following \citet{yu2026learning}. The state space is $\calS = \{0, \ldots, H_{\mathrm{env}}+1\}$, and the action space is $\calA = \{-1, 1\}^{m-1}$. For each state $s \in \{0, \ldots, H_{\mathrm{env}}-1\}$ and action $a \in \calA$, the constraint value is $g(s,a) = \mathrm{score}(a)$, where $\mathrm{score}(a)$ denotes the fraction of positive components in $a$. The process transitions to state $s+1$ with probability $0.95 - 0.01 \cdot \mathbf{1}_{m-1}^\top a$ and to state $H_{\mathrm{env}}+1$ otherwise. State $H_{\mathrm{env}}$ yields reward $0$ and state $H_{\mathrm{env}}+1$ yields reward $1$. Both states incur zero constraint value and transition back to state $0$, which makes the process infinite-horizon. The reward for $s \in \{0, \ldots, H_{\mathrm{env}}-1\}$ is $r(s,a) = 0.4 \cdot (1 - \mathrm{score}(a))$ for $s < H_{\mathrm{env}}/2$ and $r(s,a) = 0.4 \cdot \mathrm{score}(a)$ for $s \geq H_{\mathrm{env}}/2$, so in the early states the reward is given to the action that lowers the constraint value, whereas the CMDP of \citet{yu2026learning} uses $r(s,a) = 0.4 \cdot \mathrm{score}(a)$ in every state. We choose $H_{\mathrm{env}} = 6$, $m = 2$, $b = 0.7$, and $s_1 = 0$. We solve linear programs over occupancy measures to compute the optimal policy, $J_r^*$, and the Slater constant $\gamma = \max_\pi J_g^\pi - b$. We then compute the bias spans of the optimal policy.

\paragraph{Metrics and hyperparameters}
We report regret and constraint violation as
\begin{align*}
\Regret(t) &= \sum_{\tau=1}^{t}(J_r^* - r(s_\tau,a_\tau)), &
\Violation(t) &= \left[\sum_{\tau=1}^{t}(b - g(s_\tau,a_\tau))\right]_+.
\end{align*}
\Cref{alg:main} is run with the shifted constraint function $g - b$. For \Cref{alg:main}, we set $\Lambda = 2/\gamma$ and $H = W = \lfloor\sqrt{T}\rfloor$ as in \Cref{thm:main}, and $\Delta = 0.1/\lceil 0.1\sqrt{T}\rceil \leq 1/\sqrt{T}$, for which the values of $g - b$ are integer multiples of $\Delta$ so that the projection $\Pi_\Delta$ has no rounding error. Following the adaptive clipping parameter $\zeta_k = \spn(v_r^*) + \lambda_k\spn(v_g^*)$ of \citet{yu2026learning}, we use the clipping parameter $C_k = \spn(v_r^*) + \Phi_W'(z_1^k)\spn(v_g^*)$, which replaces $\Lambda$ in $C$ by the slope of $\Phi_W$ at $z_1^k$, and set the bonus term to $\beta_k = C_k|\calS|$, omitting the constant and the logarithmic factor. For the baselines, we use one-hot features with $d = |\calS||\calA| = 16$ and follow the parameter choices of \citet{yu2026learning}. That is, (i) for PD-LSCVI-UCB, we set $H = T^{1/3}$, the clipping parameter $\zeta_k$, and $\beta_k = (\zeta_k+1)d$, and (ii) for Algorithm 2 of \citet{ghosh2023achieving}, we follow its default parameters except that $H = \lceil T^{1/4}\rceil$ and $\beta = dH$. (iii) For Algorithm 3 of \citet{chen2022learning}, we set $H = \lceil (T/|\calS|^2|\calA|)^{1/3} \rceil = 7$ and solve its linear program over occupancy measures in every episode. Its constraint $\langle \nu, c \rangle \leq H\tau + \spn(v_c^*)$ is written with the cost $c = 1 - g$ and the threshold $\tau = 1 - b$, which is equivalent to $\langle \nu, g \rangle \geq Hb - \spn(v_g^*)$. For its Bernstein-type confidence set, we use the radius $\sqrt{\bar{P}_k(s' \mid s,a)/N_k(s,a)} + 1/N_k(s,a)$ around the empirical transition $\bar{P}_k$, omitting the constants and the logarithmic factor, and the sensitivity experiment below multiplies this radius by $c$.

\begin{figure}[t]
    \centering
    \includegraphics[width=\linewidth]{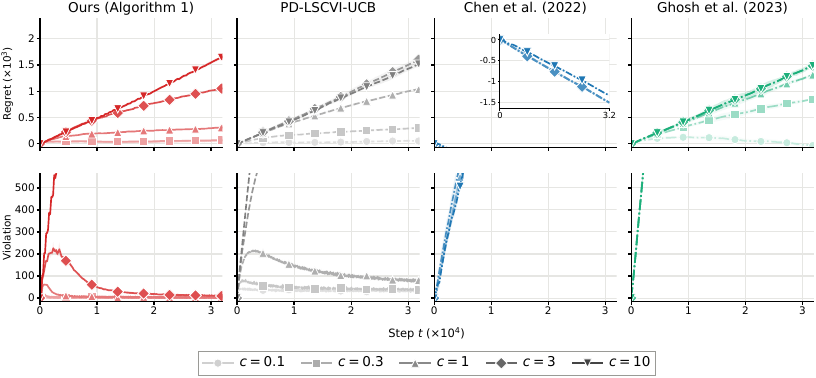}
    \vspace{-14pt}
    \caption{Plots of regret and constraint violation when the bonus term of each algorithm is multiplied by $c \in \{0.1, 0.3, 1, 3, 10\}$, where lighter curves correspond to smaller $c$. Each plot represents the average over $3$ trials, and shaded regions indicate $95\%$ confidence intervals.}
    \label{fig:sweep}
\end{figure}

\paragraph{Sensitivity to the bonus term}
The bonus terms above omit constants and logarithmic factors. To examine the sensitivity to their scale, we multiply the bonus term of each algorithm by $c \in \{0.1, 0.3, 1, 3, 10\}$, where $c = 1$ corresponds to \Cref{fig:experiments}, and repeat $3$ simulations. The results are summarized in \Cref{fig:sweep}. \Cref{alg:main} achieves sublinear regret and constraint violation for every $c \leq 1$, its constraint violation is at most that of PD-LSCVI-UCB for every $c$, and its regret is smaller than or comparable to that of PD-LSCVI-UCB for every $c$. The regret of Algorithm 3 of \citet{chen2022learning} is negative for every $c$, as shown in the inset, and its constraint violation is about $\spn(v_g^*)$ per episode for every $c$, since the slack in its linear program does not depend on $c$.

\section{Discussions and Future Work}\label{app:discussion}

\paragraph{Extension to linear CMDPs}
We discuss whether \Cref{alg:main} extends to average-reward linear CMDPs, where $P(s'|s,a) = \langle \bphi(s,a), \bmu(s')\rangle$, $r(s,a) = \langle \bphi(s,a), \btheta_r\rangle$, and $g(s,a) = \langle \bphi(s,a), \btheta_g\rangle$ for a known feature map $\bphi: \calS\times\calA \to \bbR^d$ \citep{ghosh2023achieving,yu2026learning}. Most of the analysis does not depend on the tabular structure. \Cref{lem:J-V,lem:span} use only the Poisson equation of $\pi^*$ and the properties of $\Phi_W$. The violation analysis in the proof of \Cref{thm:main} uses only strong duality. The empirical transition kernel can be replaced by least-squares value iteration for each debt value $z$. The sum of bonus terms in \Cref{lem:sum N} can be bounded by the elliptical potential lemma \citep{abbasi2011improved}. The value function clipping can be applied for each $z$ as in \citet{yu2026learning}.

The difficulty lies in \Cref{lem:concentration}. In the tabular setting, the concentration of $\hat P_k$ holds for all bounded functions at once, and no covering argument is needed. In the linear setting, the regression error must be controlled uniformly over the class of value functions, and the bonus term scales with the square root of its log covering number \citep{jin2020provably}. In \Cref{alg:main}, the backup at $(s,a,z)$ uses $V_{k,h+1}(\cdot, \Pi_\Delta(z - g(s,a)))$. The value function at a fixed $z$ therefore depends on the regression weights of every debt value reachable from $z$ in one step. When $g$ is linear in the features, there are up to $2/\Delta$ such debt values, and the log covering number is $\bigO(d/\Delta + d^2)$ instead of $\bigO(d^2)$. Balancing the resulting bonus term $\tbigO(d^{3/2}\sqrt{T/\Delta})$ against the projection error $\bigO(\Delta T/\gamma)$ gives $\Delta \asymp T^{-1/3}$ and $\tbigO(T^{2/3})$ regret and constraint violation. This matches the bound of \citet{yu2026learning} but not the $\tbigO(\sqrt{T})$ bound of \Cref{thm:main}. Designing an algorithm that resolves the growth of the covering number due to the reachable debt values is left for future work.

\paragraph{Limitations of the Primal-Dual Approach}
In \Cref{sec:algorithm}, we discuss that extending the primal-dual approach to the average-reward setting is nontrivial. Here, we justify this by explaining the following two points: (i) the limitation of \citet{yu2026learning}---a primal-dual algorithm for learning CMDPs---and (ii) the difficulty of extending \citet{hong2025a}---an algorithm for unconstrained setting---to the constrained setting.

For (i), \citet{yu2026learning} proposes a primal-dual algorithm based on finite-horizon approximation that achieves only the suboptimal guarantee $\tbigO(T^{2/3})$. While their algorithm shares a similar structure with ours in that it also uses finite-horizon approximation, it incorporates the dual multiplier into the reward. In particular, the dual multiplier $\lambda_{k+1}$ is updated in each episode $k$, which can be simplified as $\lambda_{k+1}=\lambda_k-\eta\sum_{h=1}^H g(s_h^k,a_h^k)$, where $\eta$ is a step size. From the perspective of regret analysis, however, this update is problematic. Specifically, since $\sum_{h=1}^H g(s_h^k,a_h^k)$ can be as large as $H$, the standard dual-regret analysis incurs a term of the form $\lambda^2/(2\eta)+\eta KH^2/2$, which cannot be bounded by $\tbigO(\sqrt{T})$ under the relation $T=KH$. Consequently, their regret guarantee is limited to $\tbigO(T^{2/3})$.

For (ii), \citet{hong2025a} proposes a deviation-controlled value iteration algorithm for learning average-reward MDPs. A key step in their analysis is to control the deviation of value functions, whose analogue in our setting is $\|V_{k,h+1}-V_{k+1,h+1}\|_\infty$. However, when a dual multiplier $\lambda_k$ is introduced, the algorithm operates with respect to the episode-dependent composite reward $r_k=r+\lambda_k g$. Consequently, even when the estimated transition kernel is fixed across consecutive episodes, the deviation satisfies a recursion of the form
\begin{align*}
    \|V_{k,h+1}-V_{k+1,h+1}\|_\infty
    \approx
    \|r_{k+1}-r_k\|_\infty
    +
    \|\hat P_k(V_{k+1,h+2}-V_{k,h+2})\|_\infty.
\end{align*}
Here, the additional variation term $\|r_{k+1}-r_k\|_\infty$, which is absent in the unconstrained setting, is induced by the primal-dual extension. Since $\lambda_{k+1}\neq\lambda_k$, this creates an additional obstacle to directly extending their deviation-control argument to the constrained setting.

\subsection*{Use of Generative-AI Tools}
In this work, we used generative AI tools to obtain correct state augmentations via Huber potential, to find related work on state augmentation in reinforcement learning, and to assist with the implementation of numerical experiments.  The potential-difference reward design, the drafting of the manuscript, and theoretical developments and proofs were carried out by the authors. We have carefully reviewed all AI-assisted work, including the experimental code. We take responsibility for the final content of this work, including text, claims, proofs, and artifacts produced with the aid of generative AI.

\end{document}